\def\stability{\ensuremath{\kappa}}
\def\hyp{\ensuremath{w}}
\def\hypother{\ensuremath{v}}
\def\ntrain{\ensuremath{n}}
\def\ntest{\ensuremath{m}}
\def\idx{\ensuremath{t}}
\def\otheridx{\ensuremath{s}}
\def\regret{\ensuremath{\mathfrak{R}}} %
\def\mgrv{\ensuremath{X}} %
\def\mg{\ensuremath{Z}} %
\def\indexset{\ensuremath{\mc{I}}}
\def\loss{\ensuremath{\ell}}
\def\liploss{\ensuremath{L}}
\def\lipobj{\ensuremath{G}}
\def\strongparam{\ensuremath{\lambda}}
\def\statsamplebound{\ensuremath{r}}
\def\outprodmat{\ensuremath{A}}
\def\underlyingmeasure{\ensuremath{\mu}}
\def\statlabelspace{\ensuremath{\mc{V}}}
\providecommand{\cov}{\mathop{\rm Cov}}
\renewcommand{\xdomain}{\ensuremath{\mathcal{W}}}
\renewcommand{\statsample}{\ensuremath{x}}
\renewcommand{\statsamplespace}{\ensuremath{\mc{X}}}
\renewcommand{\statlabel}{\ensuremath{y}}
\renewcommand{\statlabelspace}{\ensuremath{\mc{Y}}}
\long\def\@makecaption#1#2{
        \vskip 0.8ex
        \setbox\@tempboxa\hbox{\small {\bf #1:} #2}
        \parindent 1.5em  %
        \dimen0=\hsize
        \advance\dimen0 by -3em
        \ifdim \wd\@tempboxa >\dimen0
                \hbox to \hsize{
                        \parindent 0em
                        \hfil 
                        \parbox{\dimen0}{\def\baselinestretch{0.96}\small
                                {\bf #1.} #2
                                } 
                        \hfil}
        \else \hbox to \hsize{\hfil \box\@tempboxa \hfil}
        \fi
        }
\title{The Generalization Ability of \\ Online Algorithms for Dependent
  Data}
\author{Alekh Agarwal ~~~~~~~~~~~~ John C.\ Duchi \\
  {\tt\{alekh,jduchi\}@eecs.berkeley.edu}}
\begin{document}

\maketitle

\begin{abstract}
  We study the generalization performance of online learning
  algorithms trained on samples coming from a dependent source of data.  We
  show that the generalization error of any stable online algorithm
  concentrates around its regret---an easily computable statistic of the
  online performance of the algorithm---when the underlying ergodic process is
  $\beta$- or $\phi$-mixing. We show high probability error bounds assuming
  the loss function is convex, and we also establish sharp convergence rates
  and deviation bounds for strongly convex losses and several linear
  prediction problems such as linear and logistic regression, least-squares
  SVM, and boosting on dependent data.  In addition, our results have
  straightforward applications to stochastic optimization with dependent data,
  and our analysis requires only martingale convergence arguments; we need not
  rely on more powerful statistical tools such as empirical process theory.
\end{abstract}

\section{Introduction}

Online learning algorithms have the attractive property that regret
guarantees---performance of the sequence of points $\hyp(1), \ldots,
\hyp(\ntrain)$ the online algorithm plays measured against a fixed
predictor $\hyp^*$---hold for arbitrary sequences of loss functions,
without assuming any statistical regularity of the sequence.  It is
natural to ask whether one can say something stronger when some
probabilistic structure underlies the sequence of examples, or loss
functions, presented
to the online algorithm. In particular, if the sequence of examples
are generated by a stochastic process, can the online learning
algorithm output a good predictor for future samples from the
same process?

When data is drawn independently and identically distributed from a
fixed underlying distribution, Cesa-Bianchi et
al.~\cite{CesaBianchiCoGe04} have shown that online learning
algorithms can in fact output predictors with good generalization
performance. Specifically, they show that for convex loss functions,
the average of the $\ntrain$ predictors played by the online algorithm
has---with high probability---small generalization error on future
examples generated i.i.d.\ from the same distribution.  In this paper,
we ask the same question when the data is drawn according to a
(dependent) ergodic process.

In addition, this paper helps provide justification for the use of
regret to a \emph{fixed} comparator $\hyp^*$ as a measure of
performance for online learning algorithms. Regret to a fixed
predictor is sometimes not a natural metric, which has led several
researchers to study online algorithms with performance guarantees for
(slowly) changing comparators $\hyp^*(1), \hyp^*(2), \ldots$ (see,
e.g., Herbster and Warmuth~\cite{HerbsterWa98,HerbsterWa01}). When
data comes i.i.d.\ from a (unknown) distribution, however,
online-to-batch conversions~\cite{CesaBianchiCoGe04} justify computing
regret with respect to a fixed $\hyp^*$. In this paper, we show that
even when data comes from a dependent stochastic process, regret to a
fixed comparator is both meaningful and a reasonable evaluation
metric.

Though practically, many settings require learning with
non-i.i.d.\ data---examples include time series data from financial
problems, meteorological observations, and learning for predictive
control---the generalization performance of statistical learning
algorithms for non-independent data is perhaps not so well understood
as that for the independent scenario.
In spite of natural difficulties encountered with dependent data, several
researchers have studied the convergence of statistical procedures in
non i.i.d.\ settings~\cite{Yu94,Meir00,ZouLiXu09,MohriRo10}. In such
scenarios, one generally assumes that the data are drawn from a
stationary $\alpha$-, $\beta$-, or $\phi$-mixing sequence, which
implies that dependence between observations weakens suitably over
time. Yu~\cite{Yu94} adapts classical empirical process techniques to
prove uniform laws of large numbers for dependent data; perhaps a more
direct parent to our approach is the work of Mohri and
Rostamizadeh~\cite{MohriRo10}, who combine algorithmic
stability~\cite{BousquetEl02} with known concentration inequalities to
derive generalization bounds.  Steinwart and
Christmann~\cite{SteinwartCh09} show fast rates of convergence for
learning from stationary geometrically $\alpha$-mixing processes, so
long as the loss functions satisfy natural localization and
self-bounding assumptions. Such assumptions were previously exploited
in the machine learning and statistics literature for independent
sequences (e.g.~\cite{BartlettBoMe05}), and Steinwart and Christmann
extend these results by building off Bernstein-type inequalities for
dependent sequences due to Modha and Masry~\cite{ModhaMa96}.

In this paper, we show that online learning algorithms enjoy
guarantees on generalization to unseen data for dependent data
sequences from $\beta$- and $\phi$-mixing sources. In particular, we
show that stable online learning algorithms---those that do not change
their predictor too aggressively between iterations---also yield
predictors with small generalization error. In the most favorable
regime of geometric mixing, we demonstrate generalization error on the
order of $\order(\log \ntrain / \sqrt{\ntrain})$ after training on
$\ntrain$ samples when the loss function is convex and Lipschitz. We
also demonstrate faster $\order(\log n/n)$ convergence when the loss
function is strongly convex in the hypothesis $\hyp$, which is the
usual case for regularized losses. In addition, we consider linear
prediction settings, and show $\order(\log n/n)$ convergence a loss
that is strongly convex in its scalar argument (though not in the
predictor $\hyp$) is applied to a linear predictor $\<\hyp,\cdot\>$,
which gives fast rates for least squares SVMs, least squares
regression, logistic regression, and boosting over bounded sets.  We
also provide an example and associated learning algorithm for which
the expected regret goes to $-\infty$, while any fixed predictor has
expected loss zero; this shows that low regret alone is not sufficient
to guarantee small expected error when data samples are dependent.

In demonstrating generalization guarantees for online learning algorithms
with dependent data,
we answer an open problem posed by
Cesa-Bianchi et al.~\cite{CesaBianchiCoGe04} on whether online
algorithms give good performance on unseen data when said data is drawn
from a mixing stationary process. Our results also answer a question
posed by Xiao~\cite{Xiao10} regarding the convergence of the
regularized dual averaging algorithm with dependent stochastic gradients.
More broadly, our results establish that any suitably stable optimization or
online learning algorithm converges in stochastic approximation settings when
the noise sequence is mixing. There is a rich history of classical work in
this area (see e.g.\ the book~\cite{KushnerYi03} and references therein), but
most results for dependent data are asymptotic, and to our knowledge there is
a paucity of finite sample and high probability convergence guarantees.  The
guarantees we provide have applications to, for example, learning from Markov
chains, autoregressive processes, or learning complex statistical models for
which inference is expensive~\cite{WeiTa90}.

Our techniques build off of a recent paper by Duchi et
al.~\cite{DuchiAgJoJo11}, where we
show high probability bounds on the convergence
of the mirror descent algorithm for stochastic optimization even when the
gradients are non-i.i.d. In particular, we build on our earlier martingale
techniques, showing concentration inequalities for dependent random variables
that are sharper than previously used Bernstein concentration for
geometrically $\alpha$-mixing processes~\cite{ModhaMa96,SteinwartCh09} by
exploiting recent ideas of Kakade and Tewari~\cite{KakadeTe09}, though we use
weakened versions of $\phi$-mixing and $\beta$-mixing to prove our high
probability results. Further, our proof techniques require only relatively
elementary martingale convergence arguments, and we do not require that the
input data is stationary but only that it is suitably convergent.

\section{Setup, Assumptions, and Notation}
\label{sec:setup}

We assume that the online algorithm receives $\ntrain$ data points
$\statsample_1, \dots, \statsample_{\ntrain}$ from a sample space
$\statsamplespace$, where the data is generated according to a
stochastic process $\statprob$, though the samples $\statsample_\idx$
are not necessarily i.i.d.\ or even independent. The online algorithm
plays points (hypotheses) $\hyp \in \xdomain$, and at iteration $\idx$
the algorithm plays the point $\hyp(\idx)$ and suffers the loss
$F(\hyp(\idx); \statsample_\idx)$.  We assume that the statistical
samples $\statsample_\idx$ have a stationary distribution
$\stationary$ to which they converge (we make this precise shortly),
and we measure generalization performance with respect to the expected
loss or risk functional
\begin{equation}
  \label{eqn:objective}
  f(\hyp) \defeq \E_\stationary[F(\hyp; \statsample)]
  = \int_{\statsamplespace} F(\hyp; \statsample) d\stationary(\statsample).
\end{equation}
Essentially, our goal is to show that after $\ntrain$ iterations of
any low-regret online algorithm, it is possible to use $\hyp(1),
\ldots, \hyp(\ntrain)$ to output a predictor or hypothesis
$\what{\hyp}_\ntrain$ for which $f(\what{\hyp}_\ntrain)$ is guaranteed
to be small with respect to any other hypothesis $\hyp^*$.

Discussion of our statistical assumptions requires a few additional
definitions.  The total variation distance between distributions $P$
and $Q$ defined on the probability space $(S, \mc{F})$ where $\mc{F}$
is a $\sigma$-field, each with densities $p$ and $q$ with respect to
an underlying measure $\underlyingmeasure$,\footnote{This assumption
  is without loss, since $P$ and $Q$ are each absolutely continuous
  with respect to the measure $P + Q$.} is given by
\begin{equation}
  \label{eqn:def-total-variation}
  \dtv(P, Q) \defeq \sup_{A \in \mc{F}} |P(A) - Q(A)|
  = \half \int_S |p(s) - q(s)| d\underlyingmeasure(s).
\end{equation}
Define the $\sigma$-field $\mc{F}_{\idx} = \sigma(\statsample_1,
\ldots, \statsample_{\idx})$. Let $\statprob^{\idx}_{[\otheridx]}$
denote the distribution of $\statsample_{\idx}$ conditioned on
$\mc{F}_{\otheridx}$, that is, given the initial samples
$\statsample_1, \ldots, \statsample_{\otheridx}$. Written slightly
differently, $\statprob^\idx_{[\otheridx]} = \statprob^\idx(\cdot \mid
\mc{F}_\otheridx)$ is a version of the conditional probability of
$\statsample_\idx$ given the sigma field $\mc{F}_\otheridx =
\sigma(\statsample_1, \ldots, \statsample_\otheridx)$. Our main
assumption is that the stochastic process is suitably mixing:
there is a stationary distribution $\stationary$ to which the
distribution of $\statsample_\idx$ converges as $\idx$ grows.  We also
assume that the distributions $\statprob^\idx_{[s]}$ and $\stationary$
are absolutely continuous with respect to an underlying measure
$\underlyingmeasure$ throughout. We use the following to measure
convergence:
\begin{definition}[Weak $\beta$ and $\phi$-mixing]
  \label{def:phi-mixing}
  The $\beta$ and $\phi$-mixing coefficients of the sampling
  distribution $\statprob$ are defined, respectively, as
  \begin{equation*}
    \beta(k) \defeq \sup_{t \in \N} \left\{ 2\E[\dtv(\statprob^{t +
        k}(\cdot \mid \mc{F}_t), \stationary)]\right\} ~~~ \mbox{and}
    ~~~ \phi(k) \defeq \sup_{t \in \N, B \in \mc{F}_t} \left\{ 2
    \dtv(\statprob^{t + k}( \cdot \mid B), \stationary)\right\}.
  \end{equation*}
\end{definition}
We say that the process is $\phi$-mixing (respectively,
$\beta$-mixing) if $\phi(k) \rightarrow 0$ ($\beta(k) \rightarrow 0$)
as $k \rightarrow \infty$, and we assume without loss that $\beta$ and
$\phi$ are non-increasing. The above definitions are weaker than the
standard definitions of mixing~\cite{MohriRo10,Bradley05,Yu94}, which
require mixing over the entire future $\sigma$-field of the process,
that is, $\sigma(\statsample_\idx, \statsample_{\idx + 1},
\statsample_{\idx + 2}, \ldots)$. In contrast, we require mixing over
only the single-slice marginal of $\statsample_{t + k}$. From the
definition, we also see that $\beta$-mixing is weaker than
$\phi$-mixing since $\beta(k) \le \phi(k)$. We state our results in general
forms using either the $\beta$ or $\phi$-mixing coefficients of the
stochastic process, and we generally use $\phi$-mixing results
for stronger high-probability guarantees compared to $\beta$-mixing.
We remark that if the sequence $\{\statsample_\idx\}$ is i.i.d., then
$\phi(1) = \beta(1) = 0$.

Two regimes of $\beta$-mixing (and $\phi$-mixing) will be of special
interest. A process is called geometrically $\beta$-mixing
($\phi$-mixing) if $\beta(k) \leq \beta_0\exp(-\beta_1 k^{\mixexp})$
(respectively $\phi(k) \leq \phi_0\exp(-\phi_1k^{\mixexp})$) for some
$\beta_i, \phi_i, \mixexp > 0$. Some stochastic processes satisfying
geometric mixing include finite-state ergodic Markov chains and a large
class of aperiodic, Harris-recurrent Markov processes; see
the references~\cite{MeynTw09,ModhaMa96} for more examples.
A process is called algebraically $\beta$-mixing ($\phi$-mixing) if
$\beta(k) \leq \beta_0 k^{-\mixexp}$ (resp.\ $\phi(k) \leq
\phi_0 k^{-\mixexp}$) for constants $\beta_0, \phi_0, \mixexp > 0$. Examples of
algebraic mixing arise in certain Metropolis-Hastings samplers when
the proposal distribution does not have a lower bounded
density~\cite{JarnerRo02}, some queuing systems, and other
unbounded processes.

We now turn to stating the relevant assumptions on the instantaneous loss
functions $F(\cdot; \statsample)$ and other quantities relevant to the online
learning algorithm.  Recall that the algorithm plays points
(hypothesis) $\hyp \in \xdomain$.  Throughout, we make the following
boundedness assumptions on $F$ and the domain $\xdomain$, which are common in
the online learning literature.
\begin{assumption}[Boundedness]
  \label{assumption:F-lipschitz}
  For $\underlyingmeasure$-almost every (henceforth $\underlyingmeasure$-a.e.)
  $\statsample$, the function $F(\cdot; \statsample)$ is convex and
  $\lipobj$-Lipschitz with respect to a norm $\norm{\cdot}$ over
  $\xdomain$:
  \begin{equation}
    \label{eqn:F-lipschitz}
    |F(\hyp; \statsample) - F(\hypother; \statsample)| \le \lipobj
    \norm{\hyp - \hypother}
  \end{equation}
  for all $\hyp,\hypother \in \xdomain$. In addition, $\xdomain$ is
  compact and has finite radius: for any $\hyp, \hyp^* \in \xdomain$,
  \begin{equation}
    \label{eqn:compactness}
    \norm{\hyp - \hyp^*} \le \radius.
  \end{equation}
  Further, $F(\hyp; \statsample) \in [0, \lipobj \radius]$.
\end{assumption}
\noindent
As a consequence of Assumption~\ref{assumption:F-lipschitz} $f$ is
also $\lipobj$-Lipschitz.  Given the first two
bounds~\eqref{eqn:F-lipschitz} and~\eqref{eqn:compactness} of
Assumption~\ref{assumption:F-lipschitz}, the final condition can
be assumed without loss; we make it explicit to avoid centering issues later.
In the sequel, we give somewhat stronger results in the
presence of the following additional assumption, which lower bounds
the curvature of the expected function $f$:
\begin{assumption}[Strong convexity]
  \label{assumption:strongly-convex}
  The expected function $f$ is $\strongparam$-strongly convex with respect
  to the norm $\norm{\cdot}$, that is,
  \begin{equation}
    \label{eqn:strongly-convex}
    f(\hypother) \ge f(\hyp) + \<g, \hypother - \hyp\> +
    \frac{\strongparam}{2} \norm{\hyp - \hypother}^2 ~~~ \mbox{for} ~
    \hyp,\hypother \in \xdomain~ \mbox{and for all}~g \in \partial
    f(\hyp).
  \end{equation}
\end{assumption}

Lastly, to prove generalization error bounds for online learning
algorithms, we require them to be appropriately stable, as described
in the next assumption.
\begin{assumption}
  \label{assumption:stability}
  There is a non-increasing sequence $\stability(\idx)$ such that if
  $\hyp(\idx)$ and $\hyp(\idx+1)$ are successive iterates of the online
  algorithm, then $\norm{\hyp(\idx) - \hyp(\idx+1)} \le
  \stability(\idx)$.
\end{assumption}
\noindent
Here $\norm{\cdot}$ is the same norm as that used in
Assumption~\ref{assumption:F-lipschitz}. We observe that this
stability assumption is different from the stability condition of
Mohri and Rostamizadeh~\cite{MohriRo10} and neither one implies the
other. It is common (or at least straightforward) to establish bounds
$\stability(t)$ as a part of the regret analysis of online algorithms
(e.g.~\cite{Xiao10}), which motivates our assumption here.

What remains to complete our setup is to quantify our assumptions on
the performance of the online learning algorithm. We assume access to
an online algorithm whose regret is bounded by (the possibly random
quantity) $\regret_{\ntrain}$ for the sequence of points
$\statsample_1, \ldots, \statsample_\ntrain \in
\statsamplespace$, that is, the online algorithm produces a sequence
of iterates $\hyp(1), \ldots, \hyp(\ntrain)$ such that for any fixed
$\hyp^* \in \xdomain$,
\begin{equation}
  \sum_{\idx=1}^{\ntrain} F(\hyp(\idx); \statsample_\idx)
  - F(\hyp^*, \statsample_\idx) \leq
  \regret_{\ntrain}.
  \label{eqn:lowregret}
\end{equation}

Our goal is to use the sequence $\hyp(1), \ldots, \hyp(\ntrain)$ to construct
an estimator $\what{\hyp}_\ntrain$ that performs well on unseen data. Since
our samples are dependent, we measure the generalization error on future test
samples drawn from the same sample path as the training
data~\cite{MohriRo10}. That is, we measure performance on the $\ntest$ samples
$\statsample_{\ntrain+1}, \ldots, \statsample_{\ntrain+\ntest}$ drawn from the
process $\statprob_{[\ntrain]}$, and we would like to bound the future risk of
$\what{\hyp}_\ntrain$, defined as
\begin{equation}
  \frac{1}{\ntest}\sum_{\idx = 1}^{\ntest}
  \E\left[F(\what{\hyp}_{\ntrain}; \statsample_{\ntrain+\idx}) -
    F(\hyp^*; \statsample_{\ntrain+\idx}) \mid
    \mc{F}_\ntrain \right],
  \label{eqn:test-error}
\end{equation}
the conditional expectation of the losses $F(\what{\hyp}_\ntrain;
\statsample)$ given the first $\ntrain$ samples. Note that in the
i.i.d.\ setting~\cite{CesaBianchiCoGe04}, the expectation above is the excess
risk $f(\what{\hyp}_\ntrain) - f(\hyp^*)$ of $\what{\hyp}_\ntrain$ against
$\hyp^*$, because $\statsample_{\ntrain + \idx}$ is independent of
$\statsample_1, \ldots, \statsample_\ntrain$.  Of course, we are in the
dependent setting, so the generalization measure~\eqref{eqn:test-error}
requires slightly more care.

1\section{Generalization bounds for convex functions} 

Our definitions and assumptions in place, we show in this section that
any suitably stable online learning algorithm enjoys a
high-probability generalization guarantee for convex loss functions
$F$.  The main results of this section are
Theorems~\ref{theorem:highprob-error-convex}
and~\ref{theorem:highprob-error-convex-beta}, which give high
probability convergence of any \emph{stable online learning algorithm}
under $\phi$- and $\beta$-mixing, respectively.  Following
Theorem~\ref{theorem:highprob-error-convex}, we also present an
example illustrating that low regret is by itself insufficient to
guarantee good generalization performance, which is distinct from
i.i.d.\ settings~\cite{CesaBianchiCoGe04}.

Before proceeding with our technical development, we describe the
high-level structure and intuition underlying our proofs. The
technical insight underpinning many of our results is that under our
mixing assumptions, the distribution of the random instance
$\statsample_{\idx + \tau}$ is close to the stationary distribution
conditioned on $\mc{F}_\idx$. That is, looking some number of steps
$\tau$ into the futre from a time $\idx$ is almost as good as
obtaining an unbiased sample from the stationary distribution
$\stationary$. As a result, the loss
$F(\hyp(\idx);\statsample_{\idx+\tau})$ is a good proxy for
$f(\hyp(t))$, since $\hyp(t)$ only depends on $\statsample_1, \ldots,
\statsample_{\idx-1}$. Lemma~\ref{lemma:lookahead-mixing} formalizes
this intuition. (Duchi et al.~\cite{DuchiAgJoJo11} use a similar
technique as a building block.) Under our stability condition, we can
further demonstrate that $F(\hyp(\idx);\statsample_{\idx+\tau})$ is
close to $F(\hyp(\idx+\tau);\statsample_{\idx+\tau})$, and the
behavior of the latter sequence is nearly the same as the sequence
$F(\hyp(\idx); \statsample_\idx)$ with respect to which the regret
$\regret_\ntrain$ is measured.  We make these these ideas formal in
Propositions~\ref{prop:stationary-to-test}
and~\ref{proposition:master-theorem}.  We then combine our
intermediate results (including bounds on the regret
$\regret_\ntrain$), applying relevant martingale concentration
inequalities, to obtain the main theorems of this and later sections.

Our starting point is the above-mentioned technical lemma that
underlies many of our results.
\begin{lemma}
  \label{lemma:lookahead-mixing}
  Let $\hyp, \hypother \in \xdomain$ be measurable with
  respect to the $\sigma$-field $\mc{F}_\idx$ and
  Assumption~\ref{assumption:F-lipschitz} hold. Then for any $\tau
  \in \N$,
  \begin{equation*}
    \E[F(\hyp; \statsample_{\idx + \tau}) -
      F(\hypother; \statsample_{\idx + \tau})
      \mid \mc{F}_\idx]
    \le f(\hyp) - f(\hypother) + \lipobj \radius \phi(\tau).
  \end{equation*}
  and
  \begin{equation*}
    \E\Big[ \big|\E[F(\hyp; \statsample_{\idx + \tau})
        - F(\hypother; \statsample_{\idx + \tau})
        \mid \mc{F}_\idx]
      - (f(\hyp) - f(\hypother)) \big|\Big]
    \le \lipobj \radius \beta(\tau).
  \end{equation*}
\end{lemma}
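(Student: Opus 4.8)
The plan is to reduce both claims to a single deterministic inequality about integrating a bounded function against two probability measures, and then to invoke the two mixing coefficients in the two different senses---pointwise versus in expectation---in which they are defined. First I would fix the past: because $\hyp$ and $\hypother$ are $\mc{F}_\idx$-measurable, conditioning on $\mc{F}_\idx$ freezes them, so that for ($\underlyingmeasure$-a.e.) realization of $\mc{F}_\idx$ the integrand $g(\statsample) \defeq F(\hyp; \statsample) - F(\hypother; \statsample)$ is a fixed measurable function of $\statsample$. The Lipschitz bound~\eqref{eqn:F-lipschitz} and the radius bound~\eqref{eqn:compactness} give $|g(\statsample)| \le \lipobj \norm{\hyp - \hypother} \le \lipobj\radius$ pointwise, so the oscillation $\sup_\statsample g - \inf_\statsample g$ is at most $2\lipobj\radius$. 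Writing $\statprob^{\idx+\tau}_{[\idx]} = \statprob^{\idx+\tau}(\cdot \mid \mc{F}_\idx)$ for the conditional law of $\statsample_{\idx+\tau}$ and recalling $f(\hyp) - f(\hypother) = \int g \, d\stationary$, both conditional expectations in the statement equal $\int g \, d\statprob^{\idx+\tau}_{[\idx]}$, so the quantity to control is $\int g \, d(\statprob^{\idx+\tau}_{[\idx]} - \stationary)$.

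The deterministic step is the elementary bound $\int g \, d(P - Q) \le (\sup g - \inf g)\,\dtv(P, Q)$, valid whenever $P, Q$ admit densities with respect to $\underlyingmeasure$; it follows by subtracting the midrange constant from $g$ (legitimate since $\int d(P-Q) = 0$) and applying the variational form~\eqref{eqn:def-total-variation} of total variation. Combined with the oscillation bound, this yields $\big|\int g \, d(\statprob^{\idx+\tau}_{[\idx]} - \stationary)\big| \le 2\lipobj\radius \, \dtv(\statprob^{\idx+\tau}_{[\idx]}, \stationary)$ on every sample path, where the factor $2$ is exactly what the factors of $2$ in Definition~\ref{def:phi-mixing} are designed to absorb. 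For the $\beta$-mixing claim this is essentially the whole argument: the inequality holds pointwise in the realization of $\mc{F}_\idx$, so taking expectations and using $\E[\dtv(\statprob^{\idx+\tau}_{[\idx]}, \stationary)] \le \half \beta(\tau)$ from the definition of $\beta$ produces the stated $\lipobj\radius\,\beta(\tau)$. Note that only boundedness and Lipschitz continuity of $F$ are used here; convexity plays no role in this lemma.

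The $\phi$-mixing claim carries no outer expectation, so I instead need the stronger almost-sure bound $2\,\dtv(\statprob^{\idx+\tau}_{[\idx]}, \stationary) \le \phi(\tau)$, and I expect this to be the main obstacle: $\phi(\tau)$ is defined by conditioning on individual events $B \in \mc{F}_\idx$, whereas here I condition on the entire $\sigma$-field. I would bridge the gap by a standard averaging/contradiction argument. For a fixed measurable set $A$, the event $B = \{\statprob^{\idx+\tau}(A \mid \mc{F}_\idx) - \stationary(A) > \phi(\tau)/2\}$ lies in $\mc{F}_\idx$; were $P(B) > 0$, averaging the conditional probability over $B$ would give $\statprob^{\idx+\tau}(A \mid B) - \stationary(A) > \phi(\tau)/2$, contradicting the definition of $\phi(\tau)$. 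Hence $|\statprob^{\idx+\tau}(A \mid \mc{F}_\idx) - \stationary(A)| \le \phi(\tau)/2$ almost surely for each $A$, and taking a supremum over a countable generating field for $\statsamplespace$ (invoking mild regularity of the sample space) upgrades this to the claimed almost-sure total-variation bound---a single almost-sure statement that does not depend on $\hyp$ or $\hypother$. Plugging this bound into the deterministic inequality pathwise then gives $\int g \, d(\statprob^{\idx+\tau}_{[\idx]} - \stationary) \le \lipobj\radius\,\phi(\tau)$ simultaneously for the $\mc{F}_\idx$-measurable $\hyp$ and $\hypother$, which completes the proof.
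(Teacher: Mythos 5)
Your proof is correct and follows essentially the same route as the paper's: condition on $\mc{F}_\idx$ to freeze $\hyp$ and $\hypother$, bound the integrand $F(\hyp;\cdot)-F(\hypother;\cdot)$ by $\lipobj\radius$ via Lipschitzness and compactness, pass to total variation, and then invoke the two mixing coefficients in their two senses (almost surely for $\phi$, in expectation for $\beta$). The one place you go beyond the paper is the $\phi$-mixing step: the paper simply asserts $2\dtv(\statprob^{\idx+\tau}(\cdot\mid\mc{F}_\idx),\stationary)\le\phi(\tau)$ ``by the definition,'' even though Definition~\ref{def:phi-mixing} defines $\phi$ via conditioning on individual events $B\in\mc{F}_\idx$ rather than on the $\sigma$-field itself; your averaging/contradiction argument, together with a supremum over a countable generating field (requiring mild regularity of $\statsamplespace$), supplies exactly the bridge the paper leaves implicit. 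Your remaining deviations are cosmetic: centering by the midrange to use the oscillation bound gives the same constant as the paper's direct density manipulation, and your pathwise absolute-value bound renders the paper's explicit symmetrization of $\hyp$ and $\hypother$ in the $\beta$ claim unnecessary.
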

\begin{proof}
  We first prove the result for the $\phi$-mixing bound.
  Recalling that $f(\hyp) =
  \E_\stationary[F(\hyp; \statsample)]$ and the definition of the
  underlying measure $\underlyingmeasure$ and the densities
  $\stationarydensity$ and $\statdensity$,
  \begin{align*}
    \E[F(\hyp; \statsample_{\idx + \tau}) -
      F(\hypother; \statsample_{\idx + \tau}) \mid \mc{F}_\idx]
    & = 
    \E[F(\hyp; \statsample_{\idx + \tau}) - f(\hyp)
      + f(\hypother) - 
      F(\hypother; \statsample_{\idx + \tau}) \mid \mc{F}_\idx]
    + f(\hyp) - f(\hypother) \\
    & = \int_\statsamplespace [F(\hyp; \statsample) -
    F(\hypother; \statsample)] (\statdensity_{[\idx]}^{\idx + \tau}(
    \statsample)
    - \stationarydensity(\statsample)) d \underlyingmeasure(\statsample)
    + f(\hyp) - f(\hypother) \\
    & \le \int_\statsamplespace \left|F(\hyp; \statsample) -
    F(\hypother; \statsample)\right|
    \left|\statdensity_{[\idx]}^{\idx + \tau}(\statsample)
    - \stationarydensity(\statsample)\right|
    d \underlyingmeasure(\statsample) + f(\hyp) - f(\hypother) \\
    & \le \lipobj \radius \int
    \left|\statdensity_{[\idx]}^{\idx + \tau}(\statsample)
    - \stationarydensity(\statsample)\right|
    d \underlyingmeasure(\statsample) + f(\hyp) - f(\hypother) \\
    & = 2 \lipobj \radius \cdot
    \dtv(\statprob_{[\idx]}^{\idx + \tau}, \stationary)
    + f(\hyp) - f(\hypother),
  \end{align*}
  where for the second inequality we used the Lipschitz
  assumption~\ref{assumption:F-lipschitz} and the compactness
  assumption on $\xdomain$. Noting that $2
  \dtv(\statprob_{[\idx]}^{\idx + \tau}, \stationary) \le \phi(\tau)$
  by the definition~\ref{def:phi-mixing} completes the proof of the
  first part.
  
  To see the second inequality using $\beta$-mixing coefficients,
  we begin by noting that as a consequence of the proof of the first
  inequality,
  \begin{equation*}
    \E[F(\hyp; \statsample_{\idx + \tau}) -
      F(\hypother; \statsample_{\idx + \tau}) \mid \mc{F}_\idx]
    - (f(\hyp) - f(\hypother))
    \le 2 \lipobj \radius \dtv(\statprob_{[\idx]}^{\idx + \tau}, \stationary),
  \end{equation*}
  and the inequality holds with $\hyp$ and $\hypother$ switched:
  \begin{equation*}
    \E[F(\hypother; \statsample_{\idx + \tau}) -
      F(\hyp; \statsample_{\idx + \tau}) \mid \mc{F}_\idx]
    - (f(\hypother) - f(\hyp))
    \le 2 \lipobj \radius \dtv(\statprob_{[\idx]}^{\idx + \tau}, \stationary).
  \end{equation*}
  Combining the two inequalities and taking expectations, we have
  \begin{equation*}
    \E\Big[ \big|\E[F(\hyp; \statsample_{\idx + \tau})
        - F(\hypother; \statsample_{\idx + \tau})
        \mid \mc{F}_\idx]
      - (f(\hyp) - f(\hypother)) \big|\Big]
    \le 2 \lipobj \radius \E\left[\dtv(\statprob^{\idx + \tau}(\cdot
      \mid \mc{F}_\idx), \stationary)\right]
    \le \lipobj \radius \beta(\tau)
  \end{equation*}  
  by the definition~\ref{def:phi-mixing} of the mixing coefficients.
\end{proof}

Using Lemma~\ref{lemma:lookahead-mixing}, we can give a proposition
that relates the risk on the test sequence to the expected error of a
predictor $\hyp$ under the stationary distribution. The result shows
that for any $\hyp$ measurable with respect to the $\sigma$-field
$\mc{F}_{\ntrain}$---we use $\what{\hyp}_\ntrain \in
\mc{F}_{\ntrain}$, the (unspecified as yet) output of the online
learning algorithm---we can prove generalization bounds by showing
that $\hyp$ has small risk under the stationary distribution
$\stationary$.
\begin{proposition}
  \label{prop:stationary-to-test}
  Under the Lipschitz assumption~\ref{assumption:F-lipschitz}, for any
  $\hyp \in \xdomain$ measurable with respect to $\mc{F}_\ntrain$, any
  $\hyp^* \in \mc{W}$, and any $\tau \in \N$,
  \begin{equation*}
    \frac{1}{\ntest}\sum_{\idx = \ntrain + 1}^{\ntrain + \ntest}
    \E\left[ F(\hyp;\statsample_{\idx}) - F(\hyp^*;\statsample_{\idx})
      \mid \mc{F}_{\ntrain}\right] \le f(\hyp) - f(\hyp^*) +
    \phi(\tau) \lipobj\radius + \frac{(\tau-1) \lipobj\radius}{\ntest}
  \end{equation*}
  and
  \begin{equation*}
    \E\bigg[\frac{1}{\ntest} \sum_{\idx = \ntrain + 1}^{\ntrain +
        \ntest} \E\left[F(\hyp;\statsample_{\idx}) -
        F(\hyp^*;\statsample_{\idx}) \mid \mc{F}_{\ntrain}\right]
      \bigg] \le \E[f(\hyp)] - f(\hyp^*) + \beta(\tau) \lipobj\radius
    + \frac{(\tau-1) \lipobj\radius}{\ntest}.
  \end{equation*}
\end{proposition}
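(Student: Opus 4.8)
The plan is to bound each of the \ntest{} conditional test losses individually and then average, exploiting the fact that the look-ahead Lemma~\ref{lemma:lookahead-mixing} may be invoked from \emph{any} base time at or after \ntrain{}: since $\hyp$ and $\hyp^*$ are $\mc{F}_\ntrain$-measurable, they are measurable with respect to every $\mc{F}_\idx$ with $\idx \ge \ntrain$, so they remain legitimate arguments in the Lemma from a shifted base. The key observation is that the test index $\ntrain + j$ decomposes as $(\ntrain + j - \tau) + \tau$, so whenever $j \ge \tau$ the base time $\ntrain + j - \tau$ is at least \ntrain{} and the Lemma applies with look-ahead exactly $\tau$. For the first $\tau - 1$ test points one cannot look back a full $\tau$ steps without dropping below $\mc{F}_\ntrain$, so these terms must be handled separately.

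For the first ($\phi$-mixing) inequality I would write $c_j \defeq \E[F(\hyp; \statsample_{\ntrain+j}) - F(\hyp^*; \statsample_{\ntrain+j}) \mid \mc{F}_\ntrain]$ and split $\sum_{j=1}^\ntest c_j$ at $j = \tau$. For $j \ge \tau$, apply the first bound of Lemma~\ref{lemma:lookahead-mixing} from base $\mc{F}_{\ntrain+j-\tau}$ to get $\E[F(\hyp;\statsample_{\ntrain+j}) - F(\hyp^*;\statsample_{\ntrain+j}) \mid \mc{F}_{\ntrain+j-\tau}] \le f(\hyp) - f(\hyp^*) + \lipobj\radius\,\phi(\tau)$, and then take $\E[\cdot \mid \mc{F}_\ntrain]$ of both sides; since $\mc{F}_\ntrain \subseteq \mc{F}_{\ntrain+j-\tau}$, the tower property collapses the left side back to $c_j$, while the right side is $\mc{F}_\ntrain$-measurable and unchanged, giving $c_j \le f(\hyp) - f(\hyp^*) + \lipobj\radius\,\phi(\tau)$. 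For the $\tau - 1$ remaining terms I use only boundedness: since $F(\cdot;\statsample) \in [0,\lipobj\radius]$ we have $c_j \le \lipobj\radius$. Summing the $\ntest - \tau + 1$ look-ahead terms and the $\tau - 1$ boundary terms and dividing by \ntest{} yields the claim after bounding the factor $(\ntest - \tau + 1)/\ntest \le 1$ on the excess-risk quantity.

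The second ($\beta$-mixing) inequality follows the identical decomposition, but now I take an additional outer expectation and invoke the second bound of Lemma~\ref{lemma:lookahead-mixing}. For $j \ge \tau$, the tower property gives $\E[c_j] = \E\big[\E[F(\hyp;\statsample_{\ntrain+j}) - F(\hyp^*;\statsample_{\ntrain+j}) \mid \mc{F}_{\ntrain+j-\tau}]\big]$; adding and subtracting $f(\hyp) - f(\hyp^*)$, the $\E[|\cdot|]$ bound of the Lemma controls the deviation by $\lipobj\radius\,\beta(\tau)$, so $\E[c_j] \le \E[f(\hyp)] - f(\hyp^*) + \lipobj\radius\,\beta(\tau)$ (using that $\hyp^*$ is fixed). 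The first $\tau-1$ terms are again bounded by $\lipobj\radius$, and averaging over $j$ gives the result.

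The main obstacle is the bookkeeping at the two boundaries: first, the measurability check that lets the same $\hyp, \hyp^*$ serve as comparators from the shifted base $\ntrain + j - \tau$, which is precisely what makes the tower-property collapse legitimate; and second, isolating the earliest $\tau - 1$ test points, which cannot reach back to $\mc{F}_\ntrain$ with a full $\tau$-step look-ahead and therefore contribute the additive $(\tau - 1)\lipobj\radius/\ntest$ penalty in place of a mixing term. I would also note that replacing $(\ntest-\tau+1)/\ntest$ by $1$ is the step that produces the clean right-hand side, and it is valid in the regime of interest where the excess-risk term $f(\hyp) - f(\hyp^*) + \lipobj\radius\,\phi(\tau)$ (respectively its expectation) is nonnegative, which is the only case in which the bound is informative.
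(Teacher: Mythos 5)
Your proof is correct and is essentially the paper's own argument: both sacrifice the first $\tau-1$ test points using the boundedness estimate $F(\hyp;\statsample) - F(\hyp^*;\statsample) \le \lipobj\radius$, and control the remaining $\ntest - \tau + 1$ terms with Lemma~\ref{lemma:lookahead-mixing}. The only differences are cosmetic---the paper applies the lemma from the fixed base $\mc{F}_\ntrain$ with look-ahead $j \ge \tau$ and relies on the assumed monotonicity of $\phi$ and $\beta$, whereas you apply it from the shifted base $\mc{F}_{\ntrain + j - \tau}$ with look-ahead exactly $\tau$ and collapse back via the tower property---and your closing caveat, that replacing the factor $(\ntest - \tau + 1)/\ntest$ by $1$ requires the bracketed excess-risk term to be nonnegative, is a step the paper's proof performs silently.
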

\begin{proof}
  The proof follows from the definition~\ref{def:phi-mixing} of mixing.
  The key idea is to give up on the first
  $\tau-1$ test samples and use the mixing assumption to control the
  loss on the remainder. We have
  \begin{align*}
    \lefteqn{\sum_{\idx = \ntrain+1}^{\ntrain + \ntest}
      \E[F(\hyp;\statsample_{\idx}) - F(\hyp^*;\statsample_{\idx})
        \mid \mc{F}_{\ntrain}]} \\
    & = \sum_{\idx = \ntrain+1}^{\ntrain + \tau-1}
    \E[F(\hyp;\statsample_{\idx}) - F(\hyp^*;\statsample_{\idx})
      \mid \mc{F}_{\ntrain}] + \sum_{\idx =
      \ntrain+ \tau}^{\ntrain + \ntest} \E[F(\hyp;\statsample_{\idx})
      - F(\hyp^*;\statsample_{\idx}) \mid \mc{F}_{\ntrain-1}] \\
    & \le (\tau-1) \lipobj \radius
    + \sum_{\idx = \ntrain + \tau}^{\ntrain + \ntest}
    \E[F(\hyp;\statsample_{\idx})
      - F(\hyp^*;\statsample_{\idx}) \mid \mc{F}_{\ntrain}]
  \end{align*}
  since by the Lipschitz assumption~\ref{assumption:F-lipschitz} and
  compactness $F(\hyp; \statsample) - F(\hyp^*; \statsample) \le \lipobj
  \radius$.
  Now, we
  apply Lemma~\ref{lemma:lookahead-mixing} to the summation, which
  completes the proof.
\end{proof}

Proposition~\ref{prop:stationary-to-test} allows us to focus on
controlling the error on the expected function $f$ under the
stationary distribution $\stationary$, which is a natural convergence
guarantee. Indeed, the function $f$ is the risk functional with
respect to which convergence is measured in the standard i.i.d.\ case,
and applying Proposition~\ref{prop:stationary-to-test} with $\tau = 1$
and $\phi(1) = 0$ (or $\beta(1) = 0$) confirms that the bound is equal
to $f(\hyp) - f(\hyp^*)$.  We now turn to controlling the error under
$f$, beginning with a result that relates risk performance of the
sequence of hypotheses $\hyp(1), \ldots, \hyp(\ntrain)$ output by the
online learning algorithm to the algorithm's regret, a term dependent
on the stability of the algorithm, and an additional random term. This
proposition is the starting point for the remainder of our results in
this section.
\begin{proposition}
  \label{proposition:master-theorem}
  Let Assumptions~\ref{assumption:F-lipschitz} and~\ref{assumption:stability}
  hold and let $\hyp(\idx)$ denote the
  sequence of outputs of the online algorithm. Then
  for any $\tau \in \N$,
  \begin{align}
    \sum_{\idx = 1}^\ntrain f(\hyp(\idx)) - f(\hyp^*)
    & \le \regret_\ntrain + \lipobj \tau \sum_{\idx = 1}^\ntrain
    \stability(\idx) + 2\tau \lipobj \radius \nonumber \\
    & \qquad ~ + \sum_{\idx = 1}^{\ntrain}
    \left[f(\hyp(\idx)) - F(\hyp(\idx); \statsample_{\idx + \tau})
    + F(\hyp^*; \statsample_{\idx + \tau}) - f(\hyp^*)\right].
    \label{eqn:key-inequality}
  \end{align}
\end{proposition}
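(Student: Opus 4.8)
The plan is purely algebraic: introduce the ``lookahead'' losses evaluated at the future sample $\statsample_{\idx+\tau}$, and show that everything except the stated random remainder is controlled by the regret and the stability. Concretely, I would write each summand of the left-hand side as
\[
f(\hyp(\idx)) - f(\hyp^*)
= \underbrace{\big[F(\hyp(\idx);\statsample_{\idx+\tau}) - F(\hyp^*;\statsample_{\idx+\tau})\big]}_{(\mathrm{I})}
+ \underbrace{\big[f(\hyp(\idx)) - F(\hyp(\idx);\statsample_{\idx+\tau}) + F(\hyp^*;\statsample_{\idx+\tau}) - f(\hyp^*)\big]}_{(\mathrm{II})}.
\]
Summed over $\idx = 1, \ldots, \ntrain$, the term $(\mathrm{II})$ is exactly the random remainder appearing on the right-hand side of~\eqref{eqn:key-inequality}, so it remains only to bound $\sum_\idx (\mathrm{I})$ by $\regret_\ntrain + \lipobj\tau\sum_{\idx=1}^\ntrain \stability(\idx) + 2\tau\lipobj\radius$.

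For $(\mathrm{I})$, the first step uses stability to replace the ``stale'' iterate $\hyp(\idx)$ by $\hyp(\idx+\tau)$. By the $\lipobj$-Lipschitz bound of Assumption~\ref{assumption:F-lipschitz},
\[
F(\hyp(\idx);\statsample_{\idx+\tau}) \le F(\hyp(\idx+\tau);\statsample_{\idx+\tau}) + \lipobj \norm{\hyp(\idx) - \hyp(\idx+\tau)},
\]
and the triangle inequality together with Assumption~\ref{assumption:stability} and the monotonicity of $\stability$ gives $\norm{\hyp(\idx) - \hyp(\idx+\tau)} \le \sum_{j=\idx}^{\idx+\tau-1}\stability(j) \le \tau\stability(\idx)$. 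Summing over $\idx$ thus yields $\sum_{\idx=1}^\ntrain (\mathrm{I}) \le \sum_{\idx=1}^\ntrain \big[F(\hyp(\idx+\tau);\statsample_{\idx+\tau}) - F(\hyp^*;\statsample_{\idx+\tau})\big] + \lipobj\tau\sum_{\idx=1}^\ntrain\stability(\idx)$.

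The second step connects the shifted loss sum to the regret. Re-indexing by $\otheridx = \idx+\tau$ turns the remaining sum into $\sum_{\otheridx=\tau+1}^{\ntrain+\tau}\big[F(\hyp(\otheridx);\statsample_\otheridx) - F(\hyp^*;\statsample_\otheridx)\big]$, and the interval identity $\sum_{\otheridx=\tau+1}^{\ntrain+\tau} = \sum_{\otheridx=1}^{\ntrain} + \sum_{\otheridx=\ntrain+1}^{\ntrain+\tau} - \sum_{\otheridx=1}^{\tau}$ isolates the genuine regret sum $\sum_{\otheridx=1}^{\ntrain}[\,\cdots\,] \le \regret_\ntrain$ from two boundary sums of $\tau$ terms each. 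Since $F \in [0,\lipobj\radius]$, every summand lies in $[-\lipobj\radius,\lipobj\radius]$, so upper-bounding the appended tail and the negated subtracted head contributes at most $2\tau\lipobj\radius$. Combining the three displays gives the claimed inequality.

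I expect the only delicate point to be the boundary bookkeeping rather than any analytic difficulty: the shifted sum references iterates $\hyp(\ntrain+1),\ldots,\hyp(\ntrain+\tau)$ (and stability increments $\stability(j)$ for $j$ up to $\ntrain+\tau-1$) lying beyond the $\ntrain$ training rounds, so I would treat the online algorithm as continuing to emit iterates in $\xdomain$ past time $\ntrain$, invoking the regret bound~\eqref{eqn:lowregret} only on the first $\ntrain$ rounds and controlling the overflow purely through boundedness. One must likewise keep the inequality directions aligned---since we want an upper bound, the subtracted head is bounded above using the lower bound $-\lipobj\radius$ on each summand---and observe that the interval identity, hence the whole argument, stays valid for all $\tau\in\N$ (it merely loosens the bound when $\tau>\ntrain$). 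No mixing or concentration enters; this proposition is entirely deterministic.
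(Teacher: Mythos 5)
Your decomposition coincides with the paper's: splitting $f(\hyp(\idx)) - f(\hyp^*)$ into the lookahead-loss difference $(\mathrm{I})$ plus the remainder $(\mathrm{II})$ is exactly the paper's first display, and you bound $\sum_\idx (\mathrm{I})$ with the same three ingredients (regret, stability, boundedness) and the same constants. The one place you diverge is the boundary bookkeeping, and it is precisely the place where your write-up invokes something the setup does not provide: by applying the stability step for every $\idx = 1, \ldots, \ntrain$ and then re-indexing the shifted sum to $\sum_{\otheridx = \tau+1}^{\ntrain + \tau}$, you must reference iterates $\hyp(\ntrain+1), \ldots, \hyp(\ntrain+\tau)$ and stability increments $\stability(j)$ up to $j = \ntrain + \tau - 1$, whereas the algorithm is only guaranteed to produce $\hyp(1), \ldots, \hyp(\ntrain)$ and Assumption~\ref{assumption:stability} only constrains those. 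The paper's proof is arranged to avoid this entirely: it applies stability only for $\idx \le \ntrain - \tau$, so the replacement iterate $\hyp(\idx + \tau)$ never runs past $\hyp(\ntrain)$, and the leftover $\tau$ terms at each end are collected into a separate boundary term bounded by $2\tau \lipobj \radius$ using only $F \in [0, \lipobj\radius]$.

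This is a repairable, not fatal, gap---and you flagged it yourself. The cleanest patch requires no assumption about the algorithm continuing to run: define phantom iterates $\hyp(\ntrain + s) \defeq \hyp(\ntrain) \in \xdomain$ for $s = 1, \ldots, \tau$. These satisfy the stability bound trivially (successive differences vanish), lie in $\xdomain$ so boundedness applies to them, and the regret guarantee~\eqref{eqn:lowregret} is still invoked only on the first $\ntrain$ rounds; your argument then goes through verbatim and yields~\eqref{eqn:key-inequality} with the stated constants. Alternatively, truncating your stability step at $\idx = \ntrain - \tau$ and absorbing the two ends by boundedness reproduces the paper's proof exactly.
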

\begin{proof}
  We begin by expanding the regret of $\hyp(\idx)$ on sequence $f$ via
  \begin{align}
    \lefteqn{\sum_{\idx = 1}^\ntrain [f(\hyp(\idx)) - f(\hyp^*)]}
    \nonumber \\
    & = \sum_{\idx = 1}^\ntrain \left[f(\hyp(\idx)) -
    F(\hyp(\idx); \statsample_{\idx + \tau})
    + F(\hyp(\idx); \statsample_{\idx + \tau}) - f(\hyp^*)\right] \nonumber \\
    & = \sum_{\idx = 1}^\ntrain \left[f(\hyp(\idx)) -
    F(\hyp(\idx); \statsample_{\idx + \tau})
    + F(\hyp^*; \statsample_{\idx + \tau}) - f(\hyp^*)
    + F(\hyp(\idx); \statsample_{\idx + \tau}) -
      F(\hyp^*; \statsample_{\idx + \tau})\right].
    \label{eqn:simple-regret-no-prob}
  \end{align}
  Now we use stability and the regret guarantee~\eqref{eqn:lowregret}
  to bound the last two terms of the
  summation~\eqref{eqn:simple-regret-no-prob}.  To that end, note that
  \begin{align*}
    \lefteqn{\sum_{\idx = 1}^\ntrain \left[F(\hyp(\idx);
        \statsample_{\idx + \tau}) - F(\hyp^*; \statsample_{\idx +
          \tau})\right]}\\ 
    &= \underbrace{\sum_{\idx = 1}^\ntrain
        \left[F(\hyp(\idx); \statsample_\idx) - F(\hyp^*;
          \statsample_\idx)\right]}_{\term_1} +
    \underbrace{\sum_{\idx = 1}^{\ntrain - \tau} \left[F(\hyp(\idx);
        \statsample_{\idx + \tau}) - F(\hyp(\idx + \tau);
        \statsample_{\idx + \tau})\right]}_{\term_2}\\ 
    & + \underbrace{ \sum_{\idx = \ntrain - \tau +
        1}^\ntrain F(\hyp(\idx); \statsample_{\idx + \tau}) -
      \sum_{\idx = 1}^\tau F(\hyp(\idx); \statsample_\idx) +
      \sum_{\idx=1}^\tau F(\hyp^*;\statsample_\idx) -
      \sum_{\idx=\ntrain+1}^{\ntrain + \tau}
      F(\hyp^*;\statsample_\idx)}_{\term_3}. 
  \end{align*}
  We now bound the three terms in the summation. $\term_3$ is bounded
  by $2\tau \lipobj \radius$ under the boundedness
  assumption~\ref{assumption:F-lipschitz}, and the regret
  bound~\eqref{eqn:lowregret} guarantees that $\term_1 \le
  \regret_\ntrain$. Using the stability
  assumption~\ref{assumption:stability}, we can bound $\term_2$ by
  noting
  \begin{equation*}
    F(\hyp(\idx); \statsample_{\idx + \tau})
    - F(\hyp(\idx + \tau); \statsample_{\idx + \tau})
    \le \lipobj \norm{\hyp(\idx) - \hyp(\idx + \tau)}
    \le \lipobj \sum_{s = 0}^{\tau - 1} \stability(\idx + s)
    \le \lipobj \tau \stability(\idx),
  \end{equation*}
  where the last step uses the non-increasing property of the
  coefficients $\stability(\idx)$. 
  Substituting the bounds on $\term_1$, $\term_2$, and $\term_3$ into
  Eq.~\eqref{eqn:simple-regret-no-prob} completes the
  proposition.
\end{proof}

The remaining development of this section consists of using the key
inequality~\eqref{eqn:key-inequality} in
Proposition~\ref{proposition:master-theorem} to give expected and
high-probability convergence guarantees for the online learning
algorithm. Throughout, we define the output of the online algorithm to
be the averaged predictor
\begin{equation}
  \what{\hyp}_\ntrain = \frac{1}{\ntrain}
  \sum_{\idx = 1}^\ntrain \hyp(\idx).
  \label{eqn:avg-predictor}
\end{equation}
We begin with results giving convergence in expectation for stable
online algorithms.
\begin{theorem}
  \label{theorem:regret-to-expected}
  Under Assumptions~\ref{assumption:F-lipschitz}
  and~\ref{assumption:stability}, for any $\tau \in \N$ the predictor
  $\what{\hyp}_{\ntrain}$ satisfies the guarantee
  \begin{equation*}
    \E[f(\what{\hyp}_{\ntrain})] - f(\hyp^*) \leq \ninv
    \E[\regret_\ntrain] + \beta(\tau) \lipobj\radius + \frac{(\tau-1)
      \lipobj}{\ntrain} \bigg(2\radius +
    \sum_{\idx=1}^{\ntrain}\stability(\idx)\bigg),
  \end{equation*}
  for any $\hyp^* \in \mc{W}$.
\end{theorem}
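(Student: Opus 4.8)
The plan is to pass from the averaged iterate back to the individual iterates by convexity, invoke the deterministic key inequality~\eqref{eqn:key-inequality}, and then take expectations, where the only genuinely stochastic term is controlled by the $\beta$-mixing half of Lemma~\ref{lemma:lookahead-mixing}. Note first that this theorem concerns the stationary risk $f$ directly rather than the test error, so Proposition~\ref{prop:stationary-to-test} plays no role here; the relevant ingredients are Proposition~\ref{proposition:master-theorem} and Lemma~\ref{lemma:lookahead-mixing}. Since each $F(\cdot;\statsample)$ is convex by Assumption~\ref{assumption:F-lipschitz}, the risk $f=\E_\stationary[F(\cdot;\statsample)]$ is convex as well, so Jensen's inequality applied to the averaged predictor~\eqref{eqn:avg-predictor} gives $f(\what{\hyp}_\ntrain)\le\frac{1}{\ntrain}\sum_{\idx=1}^\ntrain f(\hyp(\idx))$. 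It thus suffices to bound $\E\big[\sum_{\idx=1}^\ntrain (f(\hyp(\idx))-f(\hyp^*))\big]$ and divide by $\ntrain$.

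I would feed this sum into Proposition~\ref{proposition:master-theorem} and take expectations of~\eqref{eqn:key-inequality}. The regret term contributes $\E[\regret_\ntrain]$, while the stability and boundary terms $\lipobj\tau\sum_{\idx=1}^\ntrain\stability(\idx)+2\tau\lipobj\radius$ are deterministic and survive the expectation unchanged. The remaining, genuinely random piece is $\E\big[\sum_{\idx=1}^\ntrain(f(\hyp(\idx))-F(\hyp(\idx);\statsample_{\idx+\tau})+F(\hyp^*;\statsample_{\idx+\tau})-f(\hyp^*))\big]$, which is where the mixing enters.

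To bound this residual, the key point is that $\hyp(\idx)$ depends only on $\statsample_1,\dots,\statsample_{\idx-1}$ and is therefore $\mc{F}_\idx$-measurable, while $\hyp^*$ is fixed; so I condition on $\mc{F}_\idx$ term by term. For each index the conditional expectation equals $-\big(\E[F(\hyp(\idx);\statsample_{\idx+\tau})-F(\hyp^*;\statsample_{\idx+\tau})\mid\mc{F}_\idx]-(f(\hyp(\idx))-f(\hyp^*))\big)$, which is exactly the quantity whose absolute value is bounded by $\lipobj\radius\beta(\tau)$ in the second inequality of Lemma~\ref{lemma:lookahead-mixing}. Hence each term contributes at most $\lipobj\radius\beta(\tau)$ in expectation, so the whole residual is at most $\ntrain\lipobj\radius\beta(\tau)$. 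Collecting terms and dividing by $\ntrain$ yields the stated bound.

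The main obstacle is this residual step, and specifically recognizing that one must invoke the two-sided ($\beta$-mixing) bound of Lemma~\ref{lemma:lookahead-mixing} rather than its one-sided ($\phi$-mixing) counterpart: the per-index conditional expectation carries no definite sign, so only the absolute-value estimate yields a valid upper bound, and it is precisely this that produces the $\beta(\tau)$ dependence in the theorem. A minor point is that the master inequality supplies the factor $\tau$ in the deterministic terms whereas the theorem states $\tau-1$; this one-step gap is recovered by exploiting that $\hyp(\idx)$ is already $\mc{F}_{\idx-1}$-measurable, so the look-ahead to $\statsample_{\idx+\tau}$ may instead be taken to $\statsample_{\idx+\tau-1}$ (keeping the coefficient $\beta(\tau)$) while telescoping the stability differences over only $\tau-1$ steps.
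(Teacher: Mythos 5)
Your proposal is correct and follows essentially the same route as the paper's proof: Jensen's inequality for the averaged predictor, the master inequality~\eqref{eqn:key-inequality} invoked with look-ahead $\tau-1$, and the $\beta$-mixing half of Lemma~\ref{lemma:lookahead-mixing} applied conditionally on $\mc{F}_{\idx-1}$ (so the look-ahead to $\statsample_{\idx+\tau-1}$ is $\tau$ steps), which recovers exactly the $(\tau-1)$ factors and the $\beta(\tau)$ coefficient. One small quibble with your closing remark: sign is not really the obstacle, since the $\phi$-mixing half of the lemma applied with $\hyp$ and $\hypother$ swapped would also upper-bound the residual, only with coefficient $\phi(\tau)$; the absolute-value estimate is what lets the bound hold with the weaker coefficient $\beta(\tau)$, which controls only expected total-variation distance and hence is usable precisely because this is an in-expectation statement.
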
 
\begin{proof}
  From the inequality~\eqref{eqn:key-inequality} in
  Proposition~\ref{proposition:master-theorem}, what remains is to
  take the expectation of the random quantities.  To that end, we note
  that $\hyp(\idx)$ is measurable with respect to $\mc{F}_{\idx-1}$
  (since the iterate at time $\idx$ depends only on first $\idx-1$
  samples) and apply Lemma~\ref{lemma:lookahead-mixing}, which gives
  \begin{equation*}
    \E\left[\E[F(\hyp^*; \statsample_{\idx + \tau-1})
        - F(\hyp(\idx); \statsample_{\idx + \tau-1})
        \mid \mc{F}_{\idx-1}]\right]
    \le f(\hyp^*) - f(\hyp(\idx))
    + \lipobj \radius \beta(\tau).
  \end{equation*}
  Adding the difference to the sum~\eqref{eqn:key-inequality} with the
  setting $\tau\mapsto (\tau - 1)$ gives
  \begin{equation*}
    \E\bigg[\sum_{\idx = 1}^\ntrain f(\hyp(\idx)) - f(\hyp^*)\bigg]
    \le \E[\regret_\ntrain]
    + \lipobj (\tau-1) \sum_{\idx = 1}^\ntrain \stability(\idx)
    + 2(\tau-1) \lipobj \radius
    + \ntrain \lipobj \radius \beta(\tau).
  \end{equation*}
  Dividing by $\ntrain$ and observing that
  $f(\what{\hyp}_{\ntrain}) \leq
  \frac{1}{\ntrain}\sum_{\idx=1}^{\ntrain} f(\hyp(\idx))$ by Jensen's
  inequality completes the proof. 
\end{proof}

We observe that setting $\tau = 1$ and $\beta(1) = 0$ recovers an
expected version of the results of Cesa-Bianchi et al.~\cite[Corollary
  2]{CesaBianchiCoGe04} for
i.i.d.\ samples. Theorem~\ref{theorem:regret-to-expected} combined
with Proposition~\ref{prop:stationary-to-test} immediately yields the
following generalization bound. Our other results can be similarly
extended, but we leave such development to the reader.
\begin{corollary}
  \label{cor:exp-error-convex}
  Under Assumptions~\ref{assumption:F-lipschitz}
  and~\ref{assumption:stability}, for any $\tau \in \N$ the predictor
  $\what{\hyp}_{\ntrain}$ satisfies the guarantee
  \begin{equation*}
    \frac{1}{\ntest} \E\bigg[\sum_{\idx=\ntrain+1}^{\ntrain+\ntest}
      F(\what{\hyp}_{\ntrain};\statsample_{\idx}) -
      F(\hyp^*;\statsample_{\idx})\bigg]
    \le \ninv \E[\regret_\ntrain] + 2 \beta(\tau) \lipobj\radius +
    (\tau-1) \lipobj\radius
    \bigg(\frac{2}{\ntrain} + \frac{1}{\ntest}
    + \frac{1}{\ntrain}\sum_{\idx=1}^{\ntrain}\stability(\idx)\bigg).
  \end{equation*}
\end{corollary}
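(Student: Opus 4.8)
The plan is to obtain the corollary by chaining Proposition~\ref{prop:stationary-to-test} with Theorem~\ref{theorem:regret-to-expected}, exploiting the fact that the averaged predictor $\what{\hyp}_\ntrain$ in~\eqref{eqn:avg-predictor} is a convex combination of $\hyp(1), \ldots, \hyp(\ntrain)$ and is therefore measurable with respect to $\mc{F}_\ntrain$, so it is a legitimate choice for $\hyp$ in the proposition. First I would apply the second ($\beta$-mixing) bound of Proposition~\ref{prop:stationary-to-test} with $\hyp = \what{\hyp}_\ntrain$. Because that bound already takes an outer expectation over the conditional risk, the tower property $\E[\E[\,\cdot \mid \mc{F}_\ntrain]] = \E[\cdot]$ shows that its left-hand side is exactly the quantity $\frac{1}{\ntest}\E[\sum_{\idx=\ntrain+1}^{\ntrain+\ntest} F(\what{\hyp}_\ntrain;\statsample_\idx) - F(\hyp^*;\statsample_\idx)]$ on the left of the corollary. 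This step yields the intermediate upper bound $\E[f(\what{\hyp}_\ntrain)] - f(\hyp^*) + \beta(\tau)\lipobj\radius + (\tau-1)\lipobj\radius/\ntest$.

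Next I would bound the residual term $\E[f(\what{\hyp}_\ntrain)] - f(\hyp^*)$ by invoking Theorem~\ref{theorem:regret-to-expected}, which replaces it with $\ninv\E[\regret_\ntrain] + \beta(\tau)\lipobj\radius + \frac{(\tau-1)\lipobj}{\ntrain}\big(2\radius + \sum_{\idx=1}^\ntrain \stability(\idx)\big)$. Substituting this into the intermediate bound from the first step gives a sum of a regret term, two mixing terms, and several stability/lookahead terms, all of which are already established inequalities, so no new estimation is required.

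Finally I would collect terms to match the stated form. The two occurrences of $\beta(\tau)\lipobj\radius$—one contributed by the proposition and one by the theorem—combine into $2\beta(\tau)\lipobj\radius$. The remaining contributions all carry the common factor $(\tau-1)\lipobj\radius$: the $1/\ntest$ piece from the proposition, together with the $2/\ntrain$ piece and the averaged-stability piece $\frac{1}{\ntrain}\sum_{\idx=1}^\ntrain \stability(\idx)$ from the theorem, group into $(\tau-1)\lipobj\radius\big(\frac{2}{\ntrain} + \frac{1}{\ntest} + \frac{1}{\ntrain}\sum_{\idx=1}^\ntrain \stability(\idx)\big)$, precisely as claimed.

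Since every step is a direct substitution of a previously proved inequality, there is no genuine analytic obstacle here; the proof is essentially bookkeeping. The only points meriting a sentence of care are verifying the $\mc{F}_\ntrain$-measurability of $\what{\hyp}_\ntrain$ (so that the proposition applies) and correctly tracking the mixing and stability constants through the combination. The slightly more delicate conceptual point—already handled upstream—is that the proposition converts the stationary-risk guarantee of the theorem into a guarantee on the dependent test sequence $\statsample_{\ntrain+1}, \ldots, \statsample_{\ntrain+\ntest}$, at the cost of discarding the first $\tau-1$ test samples.
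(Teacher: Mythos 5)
Your proposal is correct and is exactly the paper's own proof: the paper derives this corollary by stating that Theorem~\ref{theorem:regret-to-expected} combined with Proposition~\ref{prop:stationary-to-test} ``immediately yields'' it, which is precisely your chain---apply the $\beta$-mixing bound of the proposition with $\hyp = \what{\hyp}_\ntrain$ (legitimate since $\what{\hyp}_\ntrain$ is $\mc{F}_\ntrain$-measurable, with the tower property identifying the left-hand sides), then substitute the theorem's bound for $\E[f(\what{\hyp}_\ntrain)] - f(\hyp^*)$. One bookkeeping caveat: the theorem actually contributes the stability term $\frac{(\tau-1)\lipobj}{\ntrain}\sum_{\idx=1}^\ntrain \stability(\idx)$ \emph{without} a factor of $\radius$, so your final claim that all remaining terms ``carry the common factor $(\tau-1)\lipobj\radius$'' silently inserts an extra $\radius$ on the stability sum---a discrepancy that is present in the paper's own statement of the corollary (and is harmless when $\radius \ge 1$, or if one reads it as a typo), not a flaw in your approach.
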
 

It is clear that the stability assumption we make on the online
algorithm plays a key role in our results whenever $\tau > 1$, that
is, the samples are indeed dependent. It is natural to ask whether this
additional term is just an artifact of our analysis, or whether
low-regret by itself ensures a small error under the stationary
distribution even for dependent data.
The next example shows that low regret---by itself---is insufficient
for generalization guarantees, so some additional assumption on the
online algorithm is necessary to guarantee small error under the
stationary distribution.

\begin{example}[Low-regret does not imply convergence]
  \label{example:bad}
  In $1$-dimension, define the linear loss $F(\hyp;\statsample) =
  \ip{\hyp}{\statsample}$, where $\statsample \in \{-1,1\}$ and the
  set $\xdomain = [-1, 1]$. Let $p > 0$ and define following
  dependent sampling process: at each time $\idx$, set
  \begin{equation*}
    \statsample_\idx = \left\{\begin{array}{cl}
    1 & \mbox{with probability}~ p/2 \\
    -1 & \mbox{with probability} ~ p/2 \\
    \statsample_{\idx-1} & \mbox{with probability} ~ 1-p.
    \end{array} \right.
  \end{equation*}
  The stationary distribution $\stationary$ is
  uniform on $\{-1,1\}$, so the expected error $\E_\stationary
  [\ip{\hyp}{\statsample}] = 0$ for any $\hyp \in \xdomain$.
  However, we can demonstrate an update rule with negative expected regret
  as follows. Consider the algorithm which sets $\hyp(\idx) =
  -\statsample_{\idx-1}$, implementing a trivial so-called \emph{follow the
    leader} strategy. With probability $1-p/2$, the
  value $\ip{\hyp(\idx)}{\statsample_\idx} = -1$,
  while $\ip{\hyp(\idx)}{\statsample_\idx} = 1$ with probability $p/2$.
  Consequently, the expectation of the cumulative sum
  $\sum_{\idx=1}^\ntrain F(\hyp(\idx); \statsample_\idx)$ is
  $-(1-p)\ntrain$.
  Using standard results on the expected deviation of
  the simple random walk (e.g.~\cite{Billingsley86}), we know that 
  \begin{equation*}
    \E\left[\inf_{\hyp \in \xdomain} \sum_{\idx=1}^\ntrain
    \ip{\hyp}{\statsample_\idx}\right]
    = -\E\left|\sum_{\idx=1}^\ntrain
    \statsample_\idx\right| = \Theta(-\sqrt{\ntrain}).
  \end{equation*}
  We are thus
  guaranteed that the expected regret of the update rule is
  $-\Omega((1-p)\ntrain)$.
\end{example}

We have now seen that it is possible to achieve guarantees on the
generalization properties of an online learning algorithm by taking
expectation over both the training and test samples. We would like to
prove stronger results that hold with high probability over the
training data, as is possible in
i.i.d.\ settings~\cite{CesaBianchiCoGe04}. The next theorem applies
martingale concentration arguments using the Hoeffding-Azuma
inequality~\cite{Azuma67} to give high-probability concentration for
the random quantities remaining in
Proposition~\ref{proposition:master-theorem}'s bound.
\begin{theorem}
  \label{theorem:highprob-error-convex}
  Under Assumptions~\ref{assumption:F-lipschitz}
  and~\ref{assumption:stability}, with probability at least
  $1-\delta$, for any $\tau \in \N$ and any $\hyp^* \in \mc{W}$ the
  predictor $\what{\hyp}_{\ntrain}$ satisfies the guarantee
  \begin{equation*}
    f(\what{\hyp}_{\ntrain}) - f(\hyp^*) \leq
    \ninv \regret_{\ntrain} + \frac{(\tau-1) \lipobj}{\ntrain}
    \sum_{\idx=1}^\ntrain \stability(\idx) +
    2\lipobj\radius\sqrt{\frac{2\tau}{\ntrain} \log\frac{\tau}{\delta}} +
    \phi(\tau) \lipobj\radius
    + \frac{2(\tau-1) \lipobj \radius}{\ntrain}.
  \end{equation*}
\end{theorem}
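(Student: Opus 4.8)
The plan is to begin from the key inequality~\eqref{eqn:key-inequality} of Proposition~\ref{proposition:master-theorem} and show that its trailing random summation concentrates. Mirroring the proof of Theorem~\ref{theorem:regret-to-expected}, I would invoke Proposition~\ref{proposition:master-theorem} with the shifted parameter $\tau \mapsto \tau-1$, so that the residual term is $\sum_{\idx=1}^\ntrain D_\idx$ with
\[
  D_\idx = \big[f(\hyp(\idx)) - f(\hyp^*)\big] - \big[F(\hyp(\idx); \statsample_{\idx+\tau-1}) - F(\hyp^*; \statsample_{\idx+\tau-1})\big].
\]
This shift is exactly what produces the constants $\frac{(\tau-1)\lipobj}{\ntrain}\sum_\idx \stability(\idx)$ and $\frac{2(\tau-1)\lipobj\radius}{\ntrain}$ in the statement, while aligning the offset so that the mixing error enters as $\phi(\tau)$ rather than $\phi(\tau\pm 1)$.

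The next step is to split $D_\idx$ into a deterministic bias and a mean-zero part relative to $\mc{F}_{\idx-1}$. Writing $G_\idx = F(\hyp(\idx);\statsample_{\idx+\tau-1}) - F(\hyp^*;\statsample_{\idx+\tau-1})$ and using that both $\hyp(\idx)$ and $\hyp^*$ are $\mc{F}_{\idx-1}$-measurable, a single application of Lemma~\ref{lemma:lookahead-mixing} (with $\hyp^*$ and $\hyp(\idx)$ in the roles of $\hyp$ and $\hypother$, at offset $\tau$ from $\mc{F}_{\idx-1}$, since $\idx+\tau-1 = (\idx-1)+\tau$) gives $\E[G_\idx \mid \mc{F}_{\idx-1}] \ge \big(f(\hyp(\idx)) - f(\hyp^*)\big) - \lipobj\radius\,\phi(\tau)$. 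Hence $D_\idx \le \lipobj\radius\,\phi(\tau) + \xi_\idx$ with $\xi_\idx \defeq \E[G_\idx\mid\mc{F}_{\idx-1}] - G_\idx$, so the bias contributes the deterministic term $\ntrain\lipobj\radius\,\phi(\tau)$ after summation. Because $F(\cdot;\statsample)\in[0,\lipobj\radius]$, each $G_\idx$ lies in an interval of width $2\lipobj\radius$, so $|\xi_\idx|\le 2\lipobj\radius$ and $\E[\xi_\idx\mid\mc{F}_{\idx-1}] = 0$.

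The crux is controlling $\sum_\idx \xi_\idx$ with high probability, and the main obstacle is that $\{\xi_\idx\}$ is \emph{not} a martingale difference sequence under $\{\mc{F}_\idx\}$: the term $\xi_\idx$ depends on $\statsample_{\idx+\tau-1}$ and so peeks $\tau-1$ steps into the future. I would resolve this by partitioning $\{1,\dots,\ntrain\}$ into the $\tau$ residue classes modulo $\tau$. Within a fixed class, listing the indices as $\idx_1 < \idx_2 < \cdots$ with $\idx_{l+1} = \idx_l + \tau$, the variable $\xi_{\idx_l}$ is measurable with respect to $\mc{F}_{\idx_l+\tau-1} = \mc{F}_{\idx_{l+1}-1}$ and centered given $\mc{F}_{\idx_l-1}$; thus, along the sub-filtration $\mc{G}_l = \mc{F}_{\idx_l-1}$, the increments $\{\xi_{\idx_l}\}$ form a bounded martingale difference sequence. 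Applying the Hoeffding--Azuma inequality~\cite{Azuma67} within each class (increment bound $2\lipobj\radius$) at level $\delta/\tau$, taking a union bound over the $\tau$ classes, and combining the per-class deviations $2\lipobj\radius\sqrt{2 N_j \log(\tau/\delta)}$ via $\sum_j \sqrt{N_j} \le \sqrt{\tau \sum_j N_j} = \sqrt{\tau\ntrain}$ (Cauchy--Schwarz, with $N_j$ the class size) yields $\sum_\idx \xi_\idx \le 2\lipobj\radius\sqrt{2\tau\ntrain\log(\tau/\delta)}$ with probability at least $1-\delta$.

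Finally I would assemble the pieces: substituting the bounds $\sum_\idx D_\idx \le \ntrain\lipobj\radius\,\phi(\tau) + 2\lipobj\radius\sqrt{2\tau\ntrain\log(\tau/\delta)}$ into the shifted version of~\eqref{eqn:key-inequality}, dividing by $\ntrain$, and applying Jensen's inequality $f(\what{\hyp}_\ntrain) \le \ninv\sum_\idx f(\hyp(\idx))$ gives the stated guarantee. The one genuinely delicate point is the residue-class decomposition together with the correct identification of the sub-filtration $\mc{G}_l$ that restores the martingale property; by contrast, the bookkeeping of the $\tau\mapsto\tau-1$ shift and the degenerate case $\tau=1$ (where $\phi(1)=0$ and the stability and boundary terms vanish) is routine.
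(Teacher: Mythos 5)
Your proposal is correct and follows essentially the same route as the paper: the same $\tau\mapsto\tau-1$ application of Proposition~\ref{proposition:master-theorem}, the same decomposition of the residual into a $\phi(\tau)$ bias (via Lemma~\ref{lemma:lookahead-mixing}) plus centered terms, and the same residue-class-modulo-$\tau$ blocking with the shifted sub-filtration $\mc{F}_{\idx\tau+i-1}$ to recover $\tau$ bounded martingale difference sequences handled by Hoeffding--Azuma and a union bound. The only cosmetic differences are that you center before blocking rather than after, and you allocate the per-class deviations by Cauchy--Schwarz over the class sizes $N_j$ instead of the paper's equal $\gamma/\tau$ split; both yield the identical term $2\lipobj\radius\sqrt{2\tau\ntrain\log(\tau/\delta)}$ (and note, incidentally, that $\phi(1)=0$ holds only in the i.i.d.\ case, though nothing in your argument relies on it).
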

\begin{proof}
  Inspecting the inequality~\eqref{eqn:key-inequality} from
  Proposition~\ref{proposition:master-theorem}, we observe that
  it suffices to bound
  \begin{equation}
    \mg_{\ntrain} \defeq \sum_{\idx = 1}^{\ntrain}
    \left[f(\hyp(\idx)) -
    f(\hyp^*) - F(\hyp(\idx);\statsample_{\idx+\tau-1}) +
    F(\hyp^*;\statsample_{\idx+\tau-1}) \right]
    \label{eqn:mgdefn-comb}
  \end{equation}
  This is analogous to the term that arises in the
  i.i.d.\ case~\cite{CesaBianchiCoGe04}, where $\mg_\ntrain$ is a
  bounded martingale sequence and hence concentrates around its
  expectation.  Our proof that the sum~\eqref{eqn:mgdefn-comb}
  concentrates is similar to the argument Duchi et
  al.~\cite{DuchiAgJoJo11} use to prove concentration for the ergodic
  mirror descent algorithm.  The idea is that though $\mg_{\ntrain}$
  is not quite a martingale in the general ergodic case, it is in fact
  a sum of $\tau$ \emph{near}-martingales. This technique of using
  blocks of random variables in dependent settings has also been used
  in previous work to directly bound the moment generating function of
  sums of dependent variables~\cite{ModhaMa96}, though our approach is
  different. See Fig.~\ref{fig:mixing-blocks} for a graphical
  representation of our choice~\eqref{eqn:def-mg-rv} of the martingale
  sequences.

  \begin{figure}[t]
    \begin{center}
      \includegraphics[width=.6\columnwidth]{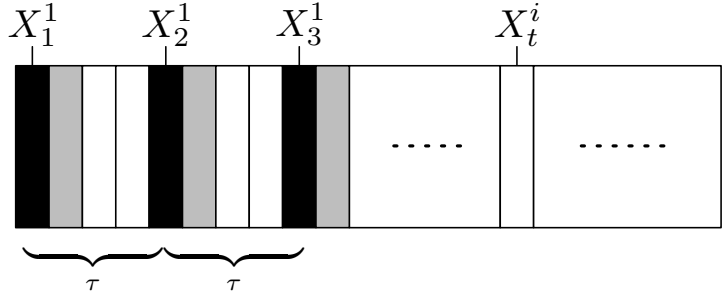}
    \end{center}
    \caption{\label{fig:mixing-blocks} The $\tau$ different blocks of
      near-martingales used in the proof of
      Theorem~\ref{theorem:highprob-error-convex}. Black boxes
      represent elements in the same index set $\indexset(1)$, gray in
      $\indexset(2)$, and so on.}
  \end{figure}

  For $i \in \{1, \ldots, \tau\}$ and $\idx \in \{1, \ldots,
  \ceil{\ntrain/\tau}\}$, define the random variables
  \begin{equation}
    \mgrv_\idx^i \defeq f(\hyp((\idx - 1) \tau + i)) - f(\hyp^*) +
    F(\hyp^*; \statsample_{\idx \tau + i-1}) - F(\hyp((\idx - 1) \tau
    + i); \statsample_{\idx \tau + i - 1}).
    \label{eqn:def-mg-rv}
  \end{equation}
  In addition, define the associated $\sigma$-fields $\mc{F}_\idx^i
  \defeq \mc{F}_{\idx \tau + i - 1} = \sigma(\statsample_1, \ldots,
  \statsample_{\idx\tau + i-1})$.  Then it is clear that $\mgrv_\idx^i$
  is measurable with respect to $\mc{F}_\idx^i$ (recall that
  $\hyp(\idx)$ is measurable with respect to $\mc{F}_{\idx-1}$), so
  the sequence
  $\mgrv_\idx^i - \E[\mgrv_\idx^i \mid \mc{F}_{\idx - 1}^i]$
  defines a martingale difference sequence adapted to the filtration
  $\mc{F}_\idx^i$, $\idx = 1, 2, \ldots$.  Following previous subsampling
  techniques~\cite{ModhaMa96,DuchiAgJoJo11}, we define the index set
  $\indexset(i)$ to be the indices $\{1, \ldots, \floor{\ntrain / \tau} + 1\}$
  for $i \le \ntrain - \tau \floor{\ntrain/\tau}$ and $\{1, \ldots,
  \floor{\ntrain/\tau}\}$ otherwise. Then a bit of algebra
  shows that
  \begin{equation}
    \mg_\ntrain = \sum_{i=1}^\tau \sum_{\idx \in \indexset(i)}
    \left[\mgrv_\idx^i - \E[\mgrv_\idx^i \mid \mc{F}^i_{\idx - 1}]\right]
    + \sum_{i=1}^\tau \sum_{\idx \in \indexset(i)}
    \E[\mgrv_\idx^i \mid \mc{F}^i_{\idx - 1}].
    \label{eqn:martingale-representation}
  \end{equation}

  The first term in the
  decomposition~\eqref{eqn:martingale-representation} is a sum of
  $\tau$ different martingale difference sequences.  In addition, the
  boundedness assumption~\ref{assumption:F-lipschitz} guarantees that
  $|\mgrv_\idx^i - \E[\mgrv_\idx^i \mid \mc{F}_{\idx-1}^i]| \le 2
  \lipobj \radius$, so each of the sequences is a bounded difference
  sequence. The Hoeffding-Azuma inequality~\cite{Azuma67} then
  guarantees
  \begin{equation}
    \P\left[\sum_{\idx \in \indexset(i)} \left[\mgrv_\idx^i -
        \E[\mgrv_\idx^i \mid \mc{F}^i_{\idx - 1}] \right] \ge \gamma
      \right] \le \exp\left(-\frac{\tau \gamma^2}{8\ntrain
      \lipobj^2\radius^2} \right).
    \label{eqn:azuma}
  \end{equation}
  To control the expectation term from the second sum in the
  representation~\eqref{eqn:martingale-representation}, we use mixing.
  Indeed, Lemma~\ref{lemma:lookahead-mixing} immediately implies that
  $\E[\mgrv_\idx^i \mid \mc{F}_{\idx - 1}^i] \le \lipobj \radius
  \phi(\tau)$.
  Combining these bounds with the application~\eqref{eqn:azuma} of
  Hoeffding-Azuma inequality, we see by a union bound that
  \begin{equation*}
    \P\left(\mg_\ntrain > \ntrain \lipobj \radius \phi(\tau)
    + \gamma\right) \le \sum_{i=1}^\tau
    \P\bigg[\sum_{\idx \in \indexset(i)}\left[ \mgrv_\idx^i -
        \E[\mgrv_\idx^i \mid \mc{F}^i_{\idx - 1}]\right] \ge \gamma/\tau \bigg]
    \le \tau\exp\left(-\frac{\gamma^2}{8 \tau n \lipobj^2
      \radius^2}\right).
  \end{equation*}
  Equivalently, by setting $\gamma = 2 \lipobj \radius \sqrt{2 \ntrain
    \tau \log(\tau / \delta)}$, we obtain that with probability at
  least $1 - \delta$,
  \begin{equation*}
    \mg_\ntrain \le \lipobj \radius \left(\ntrain\phi(\tau)
    + 2 \sqrt{2 \ntrain \tau \log\frac{\tau}{\delta}}\right).
  \end{equation*}
  Dividing by $\ntrain$ and using the convexity of $f$ as in
  the proof of Theorem~\ref{theorem:regret-to-expected} completes the proof.
\end{proof}

To better illustrate our results, we now specialize them under
concrete mixing assumptions in several corollaries, which should make
clearer the rates of convergence of the procedures.  We begin with two
corollaries giving generalization error bounds for geometrically and
algebraically $\phi$-mixing processes (defined in
Section~\ref{sec:setup}).

\begin{corollary}
  \label{cor:convex-geometric}
  Under the assumptions of
  Theorem~\ref{theorem:highprob-error-convex}, assume further that
  $\phi(k) \le \const\exp(-\phi_1 k^{\mixexp})$ for some universal
  constant $\const$. There exists a finite universal constant $C$ such
  that with probability at least $1-\delta$, for any $\hyp^* \in
  \xdomain$
  \begin{equation*}
    f(\what{\hyp}_{\ntrain}) - f(\hyp^*) \le
    \frac{1}{\ntrain} \regret_\ntrain +
    C \cdot \bigg[
      \frac{(\log \ntrain)^{1/\mixexp}\lipobj}{\ntrain\phi_1^{1/\mixexp}}
      \sum_{\idx=1}^{\ntrain}
  \stability(\idx) + \lipobj\radius \sqrt{\frac{(\log
      \ntrain)^{1/\mixexp}}{\ntrain \phi_1^{1/\mixexp}}\log\frac{(\log
      \ntrain)^{1/\mixexp}}{\delta}}\bigg].
  \end{equation*}
\end{corollary}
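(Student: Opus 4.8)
The plan is to instantiate Theorem~\ref{theorem:highprob-error-convex} at the value of $\tau$ that balances the mixing term $\phi(\tau)\lipobj\radius$, which decreases in $\tau$, against the two terms that grow with $\tau$: the stability contribution $\frac{(\tau-1)\lipobj}{\ntrain}\sum_{\idx=1}^{\ntrain}\stability(\idx)$ and the Hoeffding--Azuma deviation $2\lipobj\radius\sqrt{(2\tau/\ntrain)\log(\tau/\delta)}$. Under the geometric bound $\phi(k)\le\const\exp(-\phi_1 k^{\mixexp})$, driving $\phi(\tau)$ down to order $1/\ntrain$ costs only $\tau^{\mixexp}\gtrsim \log\ntrain/\phi_1$, so a lookahead $\tau$ growing only poly-logarithmically in $\ntrain$ suffices, and this is precisely the $\tau$ that appears in the stated bound.

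Concretely, I would set
\begin{equation*}
  \tau \defeq \ceil{\left(\frac{\log(\const\ntrain)}{\phi_1}\right)^{1/\mixexp}},
\end{equation*}
a positive integer once $\const\ntrain\ge 1$. By construction $\phi_1\tau^{\mixexp}\ge\log(\const\ntrain)$, hence $\phi(\tau)\le\const\exp(-\phi_1\tau^{\mixexp})\le\const/(\const\ntrain)=1/\ntrain$, so that $\phi(\tau)\lipobj\radius\le\lipobj\radius/\ntrain$. This is the single place the geometric-mixing hypothesis is used. On the other side, the ceiling gives $\tau\le 1+(\log(\const\ntrain)/\phi_1)^{1/\mixexp}\le C'(\log\ntrain)^{1/\mixexp}/\phi_1^{1/\mixexp}$ for a universal $C'$ (absorbing the additive $1$ and the $\log\const$ inside the logarithm); here I may assume $\ntrain$ exceeds an absolute constant, since for smaller $\ntrain$ the left-hand side $f(\what{\hyp}_{\ntrain})-f(\hyp^*)$ is bounded by $\lipobj\radius$ trivially under Assumption~\ref{assumption:F-lipschitz}.

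Substituting this $\tau$ into Theorem~\ref{theorem:highprob-error-convex}, the stability term is bounded by $C'\frac{(\log\ntrain)^{1/\mixexp}\lipobj}{\ntrain\phi_1^{1/\mixexp}}\sum_{\idx=1}^{\ntrain}\stability(\idx)$, matching the first bracketed term. For the deviation term, monotonicity of $\sqrt{\cdot}$ and $\log$ applied to the upper bound on $\tau$ shows $2\lipobj\radius\sqrt{(2\tau/\ntrain)\log(\tau/\delta)}$ is at most a constant multiple of $\lipobj\radius\sqrt{\frac{(\log\ntrain)^{1/\mixexp}}{\ntrain\phi_1^{1/\mixexp}}\log\frac{(\log\ntrain)^{1/\mixexp}}{\delta}}$, matching the second bracketed term. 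What remains is to fold in the two leftover pieces: $\phi(\tau)\lipobj\radius\le\lipobj\radius/\ntrain$ and $\frac{2(\tau-1)\lipobj\radius}{\ntrain}=O\big((\log\ntrain)^{1/\mixexp}\lipobj\radius/(\ntrain\phi_1^{1/\mixexp})\big)$; both scale as $1/\ntrain$ up to polylog factors and are therefore dominated, for $\ntrain$ above an absolute constant and $\delta\le 1$, by the deviation term, which scales as $1/\sqrt{\ntrain}$ up to polylog factors. Collecting every resulting constant into one universal $C$ yields the claim.

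I expect the main obstacle to be bookkeeping rather than any conceptual difficulty: one must check that the $+1$ from the ceiling, the additive $\log\const$ inside the logarithm, and the two residual $O(1/\ntrain)$-type terms can all be absorbed into the single universal constant $C$ without degrading the stated rate. The only genuinely substantive step is verifying that the optimized $\tau$ makes $\phi(\tau)$ of order $1/\ntrain$, which is exactly where the assumption $\phi(k)\le\const\exp(-\phi_1 k^{\mixexp})$ enters.
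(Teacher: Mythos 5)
Your proposal is correct and follows essentially the same route as the paper, whose entire proof is to instantiate Theorem~\ref{theorem:highprob-error-convex} at $\tau = (\log \ntrain/(2\phi_1))^{1/\mixexp}$ so that $\phi(\tau)\lipobj\radius$ becomes polynomially small while the stability and Hoeffding--Azuma terms reproduce the stated bracket. Your version (taking a ceiling, driving $\phi(\tau)\le 1/\ntrain$, and explicitly absorbing the residual $O(\tau/\ntrain)$ terms) is just more careful bookkeeping of the same substitution, and it shares with the paper the implicit restriction to the regime where the chosen $\tau$ is at least $1$, i.e.\ $\phi_1 = O(\log \ntrain)$.
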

\noindent
The corollary follows from Theorem~\ref{theorem:highprob-error-convex} by
taking $\tau = (\log \ntrain/(2\phi_1))^{1/\mixexp}$.
When the samples $\statsample_\idx$ come from a geometrically
$\phi$-mixing process, Corollary~\ref{cor:convex-geometric} yields
a high-probability generalization bound of the same order as that in the
i.i.d.\ setting~\cite{CesaBianchiCoGe04} up to poly-logarithmic
factors.  Algebraic mixing gives somewhat slower rates:
\begin{corollary}
  \label{cor:convex-algebraic}
  Under the assumptions of
  Theorem~\ref{theorem:highprob-error-convex}, assume further that
  $\phi(k) \leq \phi_0k^{-\mixexp}$. Define $K_\ntrain = \sum_{\idx =
    1}^\ntrain \stability(\idx)/\radius$. There exists a finite
  universal constant $C$ such that with probability at least
  $1-\delta$, for any $\hyp^* \in \mc{W}$
  \begin{equation*}
    f(\what{\hyp}_{\ntrain}) - f(\hyp^*) \le \frac{1}{\ntrain}
    \regret_\ntrain + C \cdot \left[ \lipobj \radius\phi_0^{\frac{1}{1
          + \theta}}
      \left(\frac{K_\ntrain}{\ntrain}\right)^{\frac{\mixexp}{1 +
          \mixexp}}
      + \lipobj \radius \phi_0^{\frac{1}{\mixexp + 1}}
      \left(K_\ntrain \ntrain^\mixexp\right)^{\frac{-1}{2 \mixexp + 2}}
      \sqrt{\frac{1}{\mixexp + 1} \cdot \log \frac{n}{K_\ntrain \delta}}
      \right].
  \end{equation*}
\end{corollary}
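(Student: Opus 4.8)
The plan is to specialize Theorem~\ref{theorem:highprob-error-convex} by substituting the algebraic mixing bound $\phi(\tau) \le \phi_0 \tau^{-\mixexp}$ and then optimizing the free integer parameter $\tau$. Writing $\sum_{\idx=1}^\ntrain \stability(\idx) = \radius K_\ntrain$, the high-probability guarantee of Theorem~\ref{theorem:highprob-error-convex} becomes
\begin{equation*}
  f(\what{\hyp}_{\ntrain}) - f(\hyp^*) \le \ninv \regret_\ntrain
  + \frac{(\tau-1)\lipobj\radius K_\ntrain}{\ntrain}
  + 2\lipobj\radius\sqrt{\frac{2\tau}{\ntrain}\log\frac{\tau}{\delta}}
  + \phi_0 \tau^{-\mixexp}\lipobj\radius
  + \frac{2(\tau-1)\lipobj\radius}{\ntrain}.
\end{equation*}
In the regime of interest $K_\ntrain \ge 1$, so the trailing term is of the same order as the stability term and can be folded into it; the three quantities that genuinely depend on $\tau$ are then an increasing linear term of order $\tau\lipobj\radius K_\ntrain/\ntrain$, an increasing concentration term of order $\lipobj\radius\sqrt{(\tau/\ntrain)\log(\tau/\delta)}$, and a decreasing mixing term $\phi_0\tau^{-\mixexp}\lipobj\radius$.

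The key step is the choice of $\tau$. I would balance the increasing linear (stability) term against the decreasing mixing term, since these two create the essential tradeoff in $\tau$: setting $\tau\lipobj\radius K_\ntrain/\ntrain \asymp \phi_0\tau^{-\mixexp}\lipobj\radius$ gives $\tau^{1+\mixexp}\asymp \phi_0\ntrain/K_\ntrain$, and hence the choice $\tau = \lceil(\phi_0\ntrain/K_\ntrain)^{1/(1+\mixexp)}\rceil$. Substituting this back into the linear term (equivalently, the mixing term, which is matched to it) yields $\lipobj\radius\,\phi_0^{1/(1+\mixexp)}(K_\ntrain/\ntrain)^{\mixexp/(1+\mixexp)}$, which is the first bracketed expression in the corollary. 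Evaluating the concentration term at this same $\tau$, and using that $\log(\tau/\delta) = \Theta\!\big(\tfrac{1}{1+\mixexp}\log\tfrac{\ntrain}{K_\ntrain\delta}\big)$ after absorbing the constant $\phi_0$ into the logarithm, produces the second bracketed expression, up to the universal constant $C$.

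The work is then mostly careful bookkeeping of the exponents of $\ntrain$, $K_\ntrain$, and $\phi_0$, and I expect the main obstacle to be handling the integer constraint cleanly together with these exponents. One must check that the chosen $\tau$ satisfies $\tau \ge 1$; when the balancing value falls below $1$ we simply take $\tau = 1$, where $\phi(1)$ contributes no mixing penalty and we recover the i.i.d.\ rate, and the ceiling in the interesting regime only perturbs $C$. A second subtlety worth flagging is that we do not balance all three $\tau$-dependent terms at once: the linear/mixing balance fixes $\tau$, and the concentration term is then reported as a separate additive summand evaluated at that $\tau$, which is why both bracketed terms appear additively rather than being merged. Any minor mismatch in the precise exponent of $\phi_0$ between the two terms (for instance from bounding $\phi_0^{1/(2(\mixexp+1))} \le \phi_0^{1/(\mixexp+1)}$ when $\phi_0 \ge 1$) is absorbed into the universal constant $C$.
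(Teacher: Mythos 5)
Your proposal is correct and takes essentially the same route as the paper: the paper's entire proof is to set $\tau = \phi_0^{1/(\mixexp+1)}(\ntrain/K_\ntrain)^{1/(\mixexp+1)}$ in Theorem~\ref{theorem:highprob-error-convex}, which is exactly the stability-versus-mixing balancing choice you derive. The remaining bookkeeping you flag (integer rounding of $\tau$, the $\phi_0^{1/(2\mixexp+2)}$ versus $\phi_0^{1/(\mixexp+1)}$ exponent mismatch, and rewriting $\log(\tau/\delta)$) is likewise absorbed into the universal constant $C$ in the paper's statement.
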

\noindent
The corollary follows by setting $\tau = \phi_0^{1 / (\mixexp + 1)}
(\ntrain / K_\ntrain)^{1/(\mixexp+1)}$.  So long as the sum of the
stability constants $\sum_{\idx = 1}^\ntrain \stability(\idx) =
o(\ntrain)$, the bound in Corollary~\ref{cor:convex-algebraic}
converges to 0. In addition, we remark that under the same condition
on the stability, an argument similar to that for Corollary~7 of Duchi
et al.~\cite{DuchiAgJoJo11} implies $f(\what{\hyp}_\ntrain) -
f(\hyp^*) \rightarrow 0$ almost surely whenever $\phi(k) \rightarrow
0$ as $k\rightarrow \infty$.

To obtain concrete generalization error rates from our results, one
must know bounds on the stability sequence $\stability(\idx)$ (and the
regret $\regret_{\ntrain}$). For many online algorithms, the stability
sequence satisfies $\stability(\idx) \propto 1 / \sqrt{\idx}$,
including online gradient and mirror descent~\cite{DuchiShSiTe10}.  As
a more concrete example, consider Nesterov's dual averaging
algorithm~\cite{Nesterov09}, which Xiao extends to regularized
settings~\cite{Xiao10}. For convex, $\lipobj$-Lipschitz functions, the
dual averaging algorithm satisfies $\regret_\ntrain =
\order(\lipobj\radius\sqrt{\ntrain})$, and with appropriate stepsize
choice~\cite[Lemma 10]{Xiao10} proportional to $\sqrt{\idx}$, one has
$\stability(\idx) \le \radius / \sqrt{\idx}$.
Noting that $\sum_{\idx = 1}^\ntrain \idx^{-1/2} \le 2
\sqrt{\ntrain}$, substituting the stability bound into the
result of Theorem~\ref{theorem:highprob-error-convex} immediately
yields the following: there exists a universal constant $C$ such that
with probability at least $1-\delta$,
\begin{align}
 f(\what{\hyp}_{\ntrain}) - f(\hyp^*) & \le \frac{1}{\ntrain}
 \regret_\ntrain + C \cdot \inf_{\tau \in \N} \left[\frac{\lipobj
     \radius (\tau-1)}{\sqrt{\ntrain}} + \frac{\lipobj
     \radius}{\sqrt{\ntrain}} \sqrt{\tau \log \frac{\tau}{\delta}} +
   \phi(\tau) \lipobj \radius \right].
 \label{eqn:stable-online-corollary}
\end{align}
The bound~\eqref{eqn:stable-online-corollary} captures the known
convergence rates for
i.i.d.\ sequences~\cite{CesaBianchiCoGe04,Xiao10} by taking $\tau =
1$, since $\phi(1) = 0$ in i.i.d.\ settings.  In addition,
specializing to the geometric mixing rate of
Corollary~\ref{cor:convex-geometric} one obtains a generalization
error bound of $\order\left(\left(1 +
\frac{1}{\phi_1}\right)\frac{1}{\sqrt{\ntrain}}\right)$ to
poly-logarithmic factors.

Theorem~\ref{theorem:highprob-error-convex} and the corollaries following
require $\phi$-mixing of the stochastic sequence $\statsample_1,
\statsample_2, \ldots$, which is perhaps an undesirably strong assumption in
some situations (for example, when the sample space $\statsamplespace$ is
unbounded). To mitigate this, we now give high-probability convergence
results under the weaker assumption that the stochastic process $\statprob$ is
$\beta$-mixing. These results are (unsurprisingly) weaker than those for
$\phi$-mixing; nonetheless, there is no significant loss in rates of
convergence as long as the process $\statprob$ mixes quickly enough.
\begin{theorem}
  \label{theorem:highprob-error-convex-beta}
  Under Assumptions~\ref{assumption:F-lipschitz}
  and~\ref{assumption:stability}, with probability at least
  $1-2\delta$, for any $\tau \in \N$ and for all $\hyp^* \in \mc{W}$
  the predictor $\what{\hyp}_{\ntrain}$ satisfies the guarantee
  \begin{equation*}
    f(\what{\hyp}_{\ntrain}) - f(\hyp^*) \leq \ninv \regret_{\ntrain}
    + \frac{(\tau - 1)\lipobj}{\ntrain} \sum_{\idx=1}^\ntrain
    \stability(\idx) + 2\lipobj\radius\sqrt{\frac{2\tau}{\ntrain}
      \log\frac{2\tau}{\delta}} + \frac{2\beta(\tau)
      \lipobj\radius}{\delta} + \frac{2(\tau-1) \lipobj
      \radius}{\ntrain}.
  \end{equation*}
\end{theorem}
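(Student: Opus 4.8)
The plan is to follow the proof of Theorem~\ref{theorem:highprob-error-convex} essentially verbatim, reusing the same decomposition of the sum into $\tau$ near-martingale blocks, and to diverge only in how the conditional-expectation (bias) terms are controlled. As there, I would apply Proposition~\ref{proposition:master-theorem} with the substitution $\tau \mapsto \tau-1$, so that all that remains is to bound, with high probability, the random quantity
\[
\mg_\ntrain = \sum_{\idx=1}^\ntrain\left[f(\hyp(\idx)) - f(\hyp^*) - F(\hyp(\idx);\statsample_{\idx+\tau-1}) + F(\hyp^*;\statsample_{\idx+\tau-1})\right],
\]
after which dividing by $\ntrain$ and invoking Jensen's inequality (exactly as in the proofs of Theorems~\ref{theorem:regret-to-expected} and~\ref{theorem:highprob-error-convex}) yields the stated guarantee. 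Using the block variables $\mgrv_\idx^i$ from~\eqref{eqn:def-mg-rv} and the index sets $\indexset(i)$, I would rewrite $\mg_\ntrain$ through the decomposition~\eqref{eqn:martingale-representation} as a sum of $\tau$ martingale-difference contributions plus the bias sum $\sum_{i=1}^\tau\sum_{\idx\in\indexset(i)}\E[\mgrv_\idx^i \mid \mc{F}_{\idx-1}^i]$.

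For the martingale piece nothing changes relative to the $\phi$-mixing argument: the bounded-difference property $|\mgrv_\idx^i - \E[\mgrv_\idx^i \mid \mc{F}_{\idx-1}^i]| \le 2\lipobj\radius$ together with Hoeffding--Azuma and a union bound over the $\tau$ blocks controls it with exponentially small failure probability, contributing the term $2\lipobj\radius\sqrt{(2\tau/\ntrain)\log(2\tau/\delta)}$ after division by $\ntrain$. The crux---and the only genuine departure from the $\phi$-mixing proof---is the bias sum. Under $\phi$-mixing, Lemma~\ref{lemma:lookahead-mixing} bounds each $\E[\mgrv_\idx^i \mid \mc{F}_{\idx-1}^i]$ \emph{almost surely} by $\lipobj\radius\phi(\tau)$, which can be absorbed deterministically. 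Under $\beta$-mixing no such pointwise control is available; the second part of Lemma~\ref{lemma:lookahead-mixing} gives only the averaged bound $\E[|\E[\mgrv_\idx^i \mid \mc{F}_{\idx-1}^i]|] \le \lipobj\radius\beta(\tau)$.

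I would therefore control the bias sum \emph{in expectation}: since the index sets $\indexset(i)$ partition $\{1,\dots,\ntrain\}$, summing these $\ntrain$ per-term bounds and applying the triangle inequality gives
\[
\E\left[\left|\sum_{i=1}^\tau\sum_{\idx\in\indexset(i)} \E[\mgrv_\idx^i \mid \mc{F}_{\idx-1}^i]\right|\right] \le \ntrain\lipobj\radius\beta(\tau),
\]
and Markov's inequality then converts this into a high-probability bound of order $\ntrain\lipobj\radius\beta(\tau)/\delta$. This is precisely the source of both weaknesses of the theorem relative to its $\phi$-mixing counterpart: the $\beta$-term appears with a polynomial $1/\delta$ dependence (Markov) rather than inside a logarithm (Hoeffding--Azuma), and the overall confidence degrades to $1-2\delta$ because the failure budget $2\delta$ must be split between the union bound over the martingale blocks and the single Markov step (this allocation is also what produces the $\log(2\tau/\delta)$ argument and the factor of $2$ in the $\beta$-term).

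The main obstacle here is conceptual rather than computational: recognizing that $\beta$-mixing controls the bias terms only on average forces the separate, weaker Markov-based treatment, whereas everything else is routine bookkeeping---allocating the $2\delta$ failure probability between the two steps, then combining the resulting high-probability bound on $\mg_\ntrain$ with Proposition~\ref{proposition:master-theorem} and Jensen's inequality. I expect no difficulty in the martingale piece, as it is identical to that of Theorem~\ref{theorem:highprob-error-convex}.
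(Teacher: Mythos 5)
Your proposal is correct and follows essentially the same route as the paper's proof: the identical block decomposition~\eqref{eqn:martingale-representation} with Hoeffding--Azuma on the martingale part, and the conditional-bias sum controlled in expectation via the $\beta$-mixing bound of Lemma~\ref{lemma:lookahead-mixing} followed by Markov's inequality. Your accounting for the $1/\delta$ factor and the degraded $1-2\delta$ confidence is exactly the paper's bookkeeping as well.
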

\begin{proof}
  Following the proof of Theorem~\ref{theorem:highprob-error-convex}, we
  construct the random variables $\mg_{\ntrain}$ and $\mgrv_{\idx}^i$ as in
  the definitions~\eqref{eqn:mgdefn-comb}
  and~\eqref{eqn:def-mg-rv}. Decomposing $\mg_{\ntrain}$ into the two part
  sum~\eqref{eqn:martingale-representation}, we similarly apply the
  Hoeffding-Azuma inequality (as in the proof of
  Theorem~\ref{theorem:highprob-error-convex}) to the first term. The
  treatment of the second piece requires more care.

  Observe that for any fixed $i,\idx$, the fact that $\hyp((\idx - 1)
  \tau + i)$ and $\hyp^*$ are measurable with respect to $\mc{F}_{\idx
    - 1}^i$ guarantees via Lemma~\ref{lemma:lookahead-mixing} that
  \begin{equation*}
    \E\left[\left|\E\left[\mgrv_\idx^i\mid\mc{F}_{\idx-1}^i\right]\right|
      \right] \le \lipobj\radius\beta(\tau).
  \end{equation*}
  Applying Markov's inequality, we see that with probability at least
  $1-\delta$,
  \begin{equation*}
    \sum_{i=1}^\tau\sum_{\idx \in
      \indexset(i)}\E\left[\mgrv_{\idx}^i\mid\mc{F}_{\idx-1}^i\right] \leq
    \frac{\ntrain\lipobj\radius \beta(\tau)}{\delta}.
  \end{equation*}
  Continuing as in the proof of Theorem~\ref{theorem:highprob-error-convex}
  yields the result of the theorem.
\end{proof}

Though the $1/\delta$ factor in
Theorem~\ref{theorem:highprob-error-convex-beta} may be large, we now
show that things are not so difficult as they seem. Indeed, let us now
make the additional assumption that the stochastic process
$\statsample_1, \statsample_2, \ldots$ is geometrically
$\beta$-mixing.  We have the following corollary.
\begin{corollary}
  \label{cor:convex-geometric-beta}
  Under the assumptions of
  Theorem~\ref{theorem:highprob-error-convex-beta}, assume further
  that $\beta(k) \le \beta_0\exp(-\beta_1 k^{\mixexp})$. There exists
  finite universal constant $C$ such that with probability at least
  $1-1/n$ for any $\hyp^* \in \mc{W}$
  \begin{equation*}
    f(\what{\hyp}_{\ntrain}) - f(\hyp^*) \le \frac{1}{\ntrain}
    \regret_\ntrain + C \cdot \left[ \frac{(1.5\log
        \ntrain)^{1/\mixexp}\lipobj}{\ntrain\beta_1^{1/\mixexp}}
      \sum_{\idx=1}^{\ntrain} \stability(\idx) + \lipobj\radius
      \sqrt{\frac{(1.5\log \ntrain)^{1/\mixexp}}{\ntrain
          \beta_1^{1/\mixexp}} \log \left(n (\log
        \ntrain)^{1/\mixexp}\right)} +
      \frac{\beta_0\lipobj\radius}{\sqrt{\ntrain}}\right].
  \end{equation*}
\end{corollary}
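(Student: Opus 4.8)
The plan is to derive this corollary directly from Theorem~\ref{theorem:highprob-error-convex-beta} by instantiating its two free parameters, $\delta$ and $\tau$, under the geometric mixing hypothesis $\beta(k) \le \beta_0 \exp(-\beta_1 k^{\mixexp})$. Since the theorem holds with probability $1 - 2\delta$ and we want probability $1 - 1/\ntrain$, the first choice is forced: set $\delta = 1/(2\ntrain)$. The remaining work is to select $\tau$ so that the five terms in the theorem's bound collapse into the three terms of the corollary, with all numeric and $\beta_1$-dependent factors swept into the universal constant $C$.

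The delicate term is the mixing residual $2\beta(\tau)\lipobj\radius/\delta$, which with $\delta = 1/(2\ntrain)$ becomes $4\ntrain\,\beta_0 \exp(-\beta_1 \tau^{\mixexp})\lipobj\radius$; the extra factor of $\ntrain$ coming from $1/\delta$ is exactly what distinguishes the $\beta$-mixing analysis from the $\phi$-mixing case of Corollary~\ref{cor:convex-geometric}. To drive this below the target order $\beta_0\lipobj\radius/\sqrt{\ntrain}$, I would require $\exp(-\beta_1\tau^{\mixexp}) \le \ntrain^{-3/2}$, i.e.\ $\beta_1 \tau^{\mixexp} \ge \frac{3}{2}\log\ntrain$. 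This motivates the choice $\tau = \ceil{(1.5\log\ntrain / \beta_1)^{1/\mixexp}}$, which is precisely where the $(1.5\log\ntrain)^{1/\mixexp}$ and $\beta_1^{-1/\mixexp}$ factors in the corollary originate, and the exponent $3/2$ (hence the constant $1.5$) is the combination of the $\ntrain$ from $1/\delta$ with the desired $1/\sqrt{\ntrain}$ decay.

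With this $\tau$ in hand, I would verify the other terms by routine substitution. The stability term $(\tau-1)\lipobj\,\ntrain^{-1}\sum_{\idx=1}^\ntrain \stability(\idx)$ inherits the factor $(1.5\log\ntrain/\beta_1)^{1/\mixexp}$ and reproduces the first bracketed term; the Azuma term $2\lipobj\radius\sqrt{2\tau\ntrain^{-1}\log(2\tau/\delta)}$ becomes, using $2\tau/\delta = 4\ntrain\tau$, the second bracketed term, since $\log(4\ntrain\tau)$ agrees with $\log(\ntrain(\log\ntrain)^{1/\mixexp})$ up to additive constants absorbed by $C$; and the residual $2(\tau-1)\lipobj\radius/\ntrain$ is of strictly lower order than the first term and is likewise absorbed. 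Collecting these and the $\beta_0\lipobj\radius/\sqrt{\ntrain}$ bound on the mixing residual yields the stated inequality.

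The main obstacle is conceptual rather than computational: it is the tension in the choice of $\tau$. The residual term pushes $\tau$ to be large (one needs $\tau^{\mixexp} \gtrsim \log\ntrain/\beta_1$ to overcome the $1/\delta = 2\ntrain$ blowup from Markov's inequality in the parent theorem), while the stability and deviation terms push $\tau$ to be small, since they grow like $\tau$ and $\sqrt{\tau\log\tau}$ respectively. Geometric mixing is exactly the regime in which both demands are met simultaneously, because a merely logarithmic $\tau$ suffices to make the residual negligible, so the dominant behavior reduces to the i.i.d.-style $\order(\ntrain^{-1/2})$ rate up to the polylogarithmic factor $(\log\ntrain)^{1/(2\mixexp)}$.
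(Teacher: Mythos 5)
Your proposal is correct and follows essentially the same route as the paper, which likewise derives the corollary from Theorem~\ref{theorem:highprob-error-convex-beta} by setting $\tau = (1.5\log \ntrain/\beta_1)^{1/\mixexp}$ (with $\delta = 1/(2\ntrain)$, so that $\beta(\tau)/\delta \le 2\beta_0/\sqrt{\ntrain}$) and absorbing the remaining constants into $C$. The only quibble is your justification for discarding the residual $2(\tau-1)\lipobj\radius/\ntrain$: it need not be lower order than the stability term (e.g.\ when $\sum_\idx \stability(\idx)$ is tiny), but it is always dominated by the deviation term since $\tau/\ntrain \le \sqrt{\tau/\ntrain}$ and the logarithmic factor exceeds one, so your conclusion is unaffected.
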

\noindent
The corollary follows from
Theorem~\ref{theorem:highprob-error-convex-beta} by setting $\tau =
(1.5\log \ntrain / \beta_1)^{1/\mixexp}$ and a few algebraic
manipulations.  Corollary~\ref{cor:convex-geometric-beta} shows that
under geometric $\beta$-mixing, we have essentially identical
high-probability generalization guarantees as we had for $\phi$-mixing
(cf.\ Corollary~\ref{cor:convex-geometric}), unless the desired error
probability or the mixing constant $\mixexp$ is extremely small. We
can make similar arguments for polynomially $\beta$-mixing stochastic
processes, though the associated weakening of the bound is somewhat
more pronounced.

\section{Generalization error bounds for strongly convex functions} 
\label{sec:strongly-convex}

It is by now well-known that the regret of online learning algorithms
scales as $\order(\log \ntrain)$ for strongly convex functions,
results which are due to work of Hazan et al.~\cite{HazanAgKa07}.  To
remind the reader, we recall
Assumption~\ref{assumption:strongly-convex}, which states that a
function $f$ is $\strongparam$-strongly convex with respect to the
norm $\norm{\cdot}$ if for all $g \in \partial f(\hyp)$,
\begin{equation*}
  f(\hypother) \geq f(\hyp) + \<g, \hypother - \hyp\> +
  \frac{\strongparam}{2} \norm{\hyp -
    \hypother}^2
  ~~~\mbox{for}~\hyp,\hypother \in \xdomain.
\end{equation*}

For many online algorithms, including online gradient and mirror
descent~\cite{BeckTe03,HazanAgKa07,ShalevSi07_tech,DuchiShSiTe10} and
dual averaging~\cite[Lemma 11]{Xiao10}, the iterates
satisfy the stability bound $\norm{\hyp(\idx) - \hyp(\idx + 1)} \le
\lipobj / (\strongparam \idx)$ when the loss functions
$F(\cdot,\statsample)$ are $\strongparam$-strongly convex. Under these
conditions, Corollary~\ref{cor:exp-error-convex} gives expected
generalization error bound of $\order(\inf_{\tau \in \N}
\left\{\beta(\tau) + \tau\log\ntrain/\ntrain\right\})$ as compared to
$\order(\inf_{\tau \in \N} \{\beta(\tau) + \sqrt{\tau / \ntrain}\})$
for non-strongly convex problems.  The improvement in rates, however,
does not apply to Theorem~\ref{theorem:highprob-error-convex}'s high
probability results, since the term controlling the fluctuations
around the expectation of the martingale we construct scales as
$\otil(\sqrt{\tau/n})$.  That said, when the samples
$\statsample_\idx$ are drawn i.i.d.\ from the distribution
$\stationary$, Kakade and Tewari~\cite{KakadeTe09} show a
generalization error bound of $\order(\log \ntrain/\ntrain)$ with high
probability by using self-bounding properties of an appropriately
constructed martingale. In the next theorem, we combine the techniques
used to prove our previous results with a self-bounding martingale
argument to derive sharper generalization guarantees when the expected
function $f$ is strongly convex. Throughout this section, we will
focus on error to the minimum of the expected function: $\hyp^* \in
\arg\min_{\hyp \in \xdomain} f(\hyp)$.
\begin{theorem}
  \label{theorem:highprob-error-strong}
  Let
  Assumptions~\ref{assumption:F-lipschitz},~\ref{assumption:strongly-convex},
  and~\ref{assumption:stability} hold, so the expected function $f$ is
  $\strongparam$-strongly convex with respect to the norm
  $\norm{\cdot}$ over $\xdomain$.  Then for any $\delta < 1/e$,
  $\ntrain \geq 3$, with probability at least $1-4\delta \log n$, for
  any $\tau \in \N$ the predictor $\what{\hyp}_{\ntrain}$ satisfies
  \begin{equation*}
    f(\what{\hyp}_{\ntrain}) - f(\hyp^*) \le \frac{2}{\ntrain}
    \regret_\ntrain + \frac{2 (\tau-1) \lipobj}{\ntrain}
    \left(\sum_{\idx = 1}^\ntrain \stability(\idx) + 2\radius\right) +
    \frac{32 \lipobj^2 \tau}{\strongparam \ntrain}
    \log\frac{\tau}{\delta} + \frac{12 \tau \radius
      \lipobj}{\ntrain}\log \frac{\tau}{\delta} + 2 \radius \lipobj
    \phi(\tau).
  \end{equation*}
\end{theorem}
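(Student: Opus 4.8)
The plan is to reuse the architecture of Theorem~\ref{theorem:highprob-error-convex}---reduce to the random sum $\mg_\ntrain$ of~\eqref{eqn:mgdefn-comb} via the master inequality~\eqref{eqn:key-inequality}, then split $\mg_\ntrain$ into the $\tau$ near-martingale blocks $\mgrv_\idx^i$ of~\eqref{eqn:def-mg-rv}---but to replace the Hoeffding--Azuma step with a self-bounding Bernstein-type martingale inequality in the spirit of Kakade and Tewari~\cite{KakadeTe09}. The improvement comes entirely from Assumption~\ref{assumption:strongly-convex}: because $\hyp^* \in \arg\min_\hyp f(\hyp)$, strong convexity yields $\norm{\hyp(\idx) - \hyp^*}^2 \le \frac{2}{\strongparam}(f(\hyp(\idx)) - f(\hyp^*))$, so the conditional fluctuation of each martingale increment is controlled by the very excess-risk sum we wish to bound. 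Solving the resulting self-bounding inequality is what converts the $\otil(\sqrt{\tau/\ntrain})$ rate of the convex case into the $\otil(\tau/\ntrain)$ rate claimed here.

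First I would apply Proposition~\ref{proposition:master-theorem} with the substitution $\tau \mapsto \tau-1$ used in the convex proof, giving
\begin{equation*}
  S_\ntrain \defeq \sum_{\idx=1}^\ntrain \big(f(\hyp(\idx)) - f(\hyp^*)\big)
  \le \regret_\ntrain + (\tau-1)\lipobj \sum_{\idx=1}^\ntrain \stability(\idx)
  + 2(\tau-1)\lipobj\radius + \mg_\ntrain,
\end{equation*}
and decompose $\mg_\ntrain$ as in~\eqref{eqn:martingale-representation} into the sum of the $\tau$ martingale-difference sequences $\mgrv_\idx^i - \E[\mgrv_\idx^i \mid \mc{F}_{\idx-1}^i]$ and the conditional-expectation remainder. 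The remainder is handled exactly as in Theorem~\ref{theorem:highprob-error-convex}: Lemma~\ref{lemma:lookahead-mixing} gives $\E[\mgrv_\idx^i \mid \mc{F}_{\idx-1}^i] \le \lipobj\radius\phi(\tau)$, so the remainder is at most $\ntrain\lipobj\radius\phi(\tau)$, which becomes the $2\radius\lipobj\phi(\tau)$ term after dividing by $\ntrain$ and absorbing the factor-two loss from the final solution step.

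The crux is the fluctuation term. For fixed $i$, since $\hyp((\idx-1)\tau+i)$ and $\hyp^*$ are $\mc{F}_{\idx-1}^i$-measurable and $F(\cdot;\statsample)$ is $\lipobj$-Lipschitz, I would bound the conditional variance pointwise in the sample:
\begin{equation*}
  \mathrm{Var}\big(\mgrv_\idx^i \mid \mc{F}_{\idx-1}^i\big)
  \le \lipobj^2 \norm{\hyp((\idx-1)\tau+i) - \hyp^*}^2
  \le \frac{2\lipobj^2}{\strongparam}\Big(f(\hyp((\idx-1)\tau+i)) - f(\hyp^*)\Big),
\end{equation*}
using the strong-convexity bound above; crucially this holds irrespective of the dependent conditional law of $\statsample_{\idx\tau+i-1}$. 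Summing over $i$ and $\idx \in \indexset(i)$ shows the total conditional variance across all blocks is at most $\frac{2\lipobj^2}{\strongparam} S_\ntrain$. I would then apply the self-bounding martingale inequality of~\cite{KakadeTe09}, whose tail has the Bernstein shape $\sqrt{V\log(1/\delta)} + \lipobj\radius\log(1/\delta)$ in the realized block variance $V$, to each of the $\tau$ blocks at confidence level $\delta/\tau$ (each block has increments bounded by $2\lipobj\radius$). A union bound over the $\tau$ blocks keeps the total failure probability at $\order(\delta\log\ntrain)$---matching the $1-4\delta\log\ntrain$ of the statement---and introduces the $\log(\tau/\delta)$ factors. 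Summing the per-block tails and applying Cauchy--Schwarz, $\sum_{i=1}^\tau \sqrt{V_i} \le \sqrt{\tau \sum_i V_i} \le \sqrt{\tfrac{2\tau\lipobj^2}{\strongparam} S_\ntrain}$, yields $\mg_\ntrain \le \ntrain\lipobj\radius\phi(\tau) + c_1 \sqrt{\tfrac{\tau\lipobj^2}{\strongparam} S_\ntrain \log\tfrac{\tau}{\delta}} + c_2 \tau\lipobj\radius\log\tfrac{\tau}{\delta}$.

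Combining this estimate of $\mg_\ntrain$ with the master inequality produces a self-bounding quadratic inequality in $\sqrt{S_\ntrain}$, of the form $S_\ntrain \le D + c_1\sqrt{\tfrac{\tau\lipobj^2}{\strongparam}\log\tfrac{\tau}{\delta}}\,\sqrt{S_\ntrain}$, where $D$ gathers the regret, stability, boundary, mixing, and $c_2$-linear terms. Solving it via $S_\ntrain \le 2D + c_1^2 \tfrac{\tau\lipobj^2}{\strongparam}\log\tfrac{\tau}{\delta}$ (with $c_1^2 = 32$) supplies the factor-two multiplying $\regret_\ntrain$ and the stability/boundary terms as well as the $\tfrac{32\lipobj^2\tau}{\strongparam\ntrain}\log\tfrac{\tau}{\delta}$ term; dividing by $\ntrain$ and invoking Jensen's inequality $f(\what{\hyp}_\ntrain) \le \tfrac1\ntrain\sum_\idx f(\hyp(\idx))$ completes the bound. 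I expect the fluctuation analysis to be the main obstacle: one must verify that the self-bounding variance control survives the blocked, dependent construction---so that the per-block variances sum to the global $\tfrac{2\lipobj^2}{\strongparam}S_\ntrain$ and the recursion closes---while ensuring the only cost of dependence is the $\sqrt{\tau}$ from Cauchy--Schwarz (hence the single power of $\tau$ in the $\strongparam^{-1}$ term) and carefully tracking the $\log\ntrain$ factors from the Kakade--Tewari stratification through the union bound over blocks.
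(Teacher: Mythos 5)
Your proposal is correct and uses the same core ingredients as the paper---the $\tau$ block near-martingales $\mgrv_\idx^i$ of~\eqref{eqn:def-mg-rv}, the Freedman/Kakade--Tewari inequality (Lemma~\ref{lemma:freedman}) applied per block at level $\delta/\tau$ with a union bound, and the strong-convexity variance bound $\norm{\hyp(\idx)-\hyp^*}^2 \le \tfrac{2}{\strongparam}(f(\hyp(\idx))-f(\hyp^*))$---but it is organized along a genuinely different route. The paper explicitly does \emph{not} start from Proposition~\ref{proposition:master-theorem}: it forms per-block sums $S_i$ and $\what{S}_i$, solves a separate quadratic in $\sqrt{S_i}$ for each block (yielding the per-block bound~\eqref{eqn:si-bound} with $2\what{S}_i$ on the right), sums over blocks, and only then re-derives the regret/stability relation in~\eqref{eqn:sc-apply-stability}. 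You instead invoke the master inequality up front, aggregate the block fluctuation bounds via Cauchy--Schwarz, $\sum_i \sqrt{V_i} \le \sqrt{\tau \sum_i V_i}$, and solve a single global quadratic in $\sqrt{S_\ntrain}$; this is what makes starting from Proposition~\ref{proposition:master-theorem} viable, since the master inequality's left-hand side $S_\ntrain$ is exactly the quantity your total variance is self-bounded by. The two organizations yield the identical final structure (factor $2$ on regret, stability, and boundary terms; a single power of $\tau$ in the $\lipobj^2/\strongparam$ term; $\phi(\tau)$ doubled), so your route is a clean, arguably more modular alternative that reuses Proposition~\ref{proposition:master-theorem} rather than repeating its argument. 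Two small remarks on constants: your conditional variance bound $\lipobj^2\norm{\hyp-\hyp^*}^2$ is actually \emph{sharper} than the paper's $4\lipobj^2\norm{\hyp-\hyp^*}^2$ (legitimately so, since the $f$-difference in $\mgrv_\idx^i$ is $\mc{F}_{\idx-1}^i$-measurable and drops out of the conditional variance), which would give $8$ in place of $32$; your later statement that $c_1^2 = 32$ is therefore internally inconsistent with your own variance bound, but harmlessly so, as either constant establishes the theorem as stated.
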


Before we prove the theorem, we illustrate its use with a simple
corollary. We again use Xiao's extension of Nesterov's dual averaging
algorithm~\cite{Nesterov09,Xiao10}, where for $\lipobj$-Lipschitz
$\strongparam$-strongly convex losses $F$ it is shown that
\begin{equation*}
  \norm{x(\idx) - x(\idx + 1)} \le \stability(\idx) \le \frac{\lipobj}{
    \strongparam \idx}.
\end{equation*}
Consequently,
Theorem~\ref{theorem:highprob-error-strong} yields the following corollary,
applicable to dual averaging, mirror descent, and online gradient descent:
\begin{corollary}
  \label{corollary:strongly-convex}
  In addition to the conditions of
  Theorem~\ref{theorem:highprob-error-strong}, assume the stability bound
  $\stability(\idx) \le \lipobj / \strongparam \idx$. There is a universal
  constant $C$ such that with probability at least $1 - \delta \log \ntrain$,
  \begin{equation*}
    f(\what{\hyp}_\ntrain) - f(\hyp^*) \le \frac{2}{\ntrain}
    \regret_\ntrain + C \cdot \inf_{\tau \in \N} \left[ \frac{(\tau-1)
        \lipobj^2}{\strongparam \ntrain} \log \ntrain + \frac{\tau
        \lipobj^2}{\strongparam \ntrain} \log \frac{\tau}{\delta} +
      \frac{\lipobj^2}{\strongparam} \phi(\tau)\right].
  \end{equation*}
\end{corollary}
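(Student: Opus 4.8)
The plan is to specialize Theorem~\ref{theorem:highprob-error-strong} under the stated stability bound $\stability(\idx) \le \lipobj/(\strongparam \idx)$, the only real subtlety being to trade the geometric radius $\radius$ for the scale $\lipobj/\strongparam$ natural to strongly convex problems, so that every term in the final bound shares the common units $\lipobj^2/\strongparam$.

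First I would record the key dimensional observation. Since $f$ is $\strongparam$-strongly convex with minimizer $\hyp^*$ and $\lipobj$-Lipschitz (Assumptions~\ref{assumption:F-lipschitz} and~\ref{assumption:strongly-convex}), for every $\hyp \in \xdomain$ we have both $\frac{\strongparam}{2}\norm{\hyp - \hyp^*}^2 \le f(\hyp) - f(\hyp^*)$ and $f(\hyp) - f(\hyp^*) \le \lipobj\norm{\hyp - \hyp^*}$, and combining these gives $\norm{\hyp - \hyp^*} \le 2\lipobj/\strongparam$. Hence the relevant radius obeys $\radius \le 2\lipobj/\strongparam$, and in particular $\lipobj\radius \le 2\lipobj^2/\strongparam$. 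This lets me rewrite each occurrence of $\lipobj\radius$ in the theorem as a multiple of $\lipobj^2/\strongparam$.

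Next I would substitute the stability bound and control the harmonic sum: $\sum_{\idx=1}^\ntrain \stability(\idx) \le \frac{\lipobj}{\strongparam}\sum_{\idx=1}^\ntrain \idx^{-1} \le \frac{\lipobj}{\strongparam}(1+\log\ntrain)$. Plugging into the second term of Theorem~\ref{theorem:highprob-error-strong} yields $\frac{2(\tau-1)\lipobj}{\ntrain}\big(\frac{\lipobj}{\strongparam}(1+\log\ntrain) + 2\radius\big)$, which is $\order\big(\frac{(\tau-1)\lipobj^2}{\strongparam\ntrain}\log\ntrain\big)$ once $\radius \le 2\lipobj/\strongparam$ is used on the trailing $2\radius$. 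The term $\frac{32\lipobj^2\tau}{\strongparam\ntrain}\log\frac{\tau}{\delta}$ already has the form $\frac{\tau\lipobj^2}{\strongparam\ntrain}\log\frac{\tau}{\delta}$; the term $\frac{12\tau\radius\lipobj}{\ntrain}\log\frac{\tau}{\delta}$ becomes $\order\big(\frac{\tau\lipobj^2}{\strongparam\ntrain}\log\frac{\tau}{\delta}\big)$; and $2\radius\lipobj\phi(\tau)$ becomes $\order\big(\frac{\lipobj^2}{\strongparam}\phi(\tau)\big)$.

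Finally I would relabel $\delta \mapsto \delta/4$ to convert the theorem's confidence $1-4\delta\log\ntrain$ into $1-\delta\log\ntrain$; this only replaces $\log(\tau/\delta)$ by $\log(4\tau/\delta) = \log(\tau/\delta) + \log 4$, which is absorbed into the universal constant $C$. Collecting the three surviving groups of terms and taking the infimum over $\tau \in \N$ gives the stated inequality. The only nonroutine step is the first one: recognizing that the explicit $\radius$-dependence of Theorem~\ref{theorem:highprob-error-strong} must be exchanged for $\lipobj/\strongparam$ via the interplay of Lipschitz continuity and strong convexity. After that, the remainder is bookkeeping and absorption of constants.
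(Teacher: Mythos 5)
Your proposal is correct and follows essentially the same route as the paper: bound the harmonic sum $\sum_{\idx=1}^\ntrain \stability(\idx) \le (\lipobj/\strongparam)(1+\log\ntrain)$, show via strong convexity plus Lipschitz continuity that the effective radius satisfies $\radius = \order(\lipobj/\strongparam)$, and absorb constants. The only cosmetic difference is that you derive the radius bound through quadratic growth at the minimizer $\hyp^*$ while the paper argues between arbitrary pairs using the gradient bound $\dnorm{\nabla f(\hyp)} \le \lipobj$; both give the same $\order(\lipobj/\strongparam)$ conclusion up to the universal constant $C$.
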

\begin{proof}
  The proof follows by noting the following two facts: first,
  $\sum_{\idx = 1}^\ntrain \stability(\idx) \le
  (\lipobj / \strongparam)(1 + \log \ntrain)$,
  and secondly, the definition~\eqref{eqn:strongly-convex} of strong
  convexity implies
  \begin{equation*}
    \lipobj \norm{\hyp - \hypother} \ge
    f(\hypother) - f(\hyp)
    \ge \<\nabla f(\hyp), \hypother - \hyp\>
    + \frac{\strongparam}{2}\norm{\hypother - \hyp}^2.
  \end{equation*}
  Recalling~\cite{HiriartUrrutyLe96ab} that $\dnorm{\nabla f(\hyp)} \le
  \lipobj$, we have
  $\norm{\hyp - \hypother} \le 4 \lipobj / \strongparam$ for
  all $\hyp, \hypother \in \xdomain$, so $\radius \le 2\lipobj/\strongparam$.
\end{proof}
We can further extend Corollary~\ref{corollary:strongly-convex} using
mixing rate assumptions on $\phi$ as in
Corollaries~\ref{cor:convex-geometric} and~\ref{cor:convex-algebraic},
though this follows the same lines as those.  For a few more concrete
examples, we note that online gradient and mirror descent as well as
dual averaging~\cite{HazanAgKa07,DuchiShSiTe10,ShalevSi07_tech,Xiao10}
all have $\regret_\ntrain \le C \cdot (\lipobj^2 / \strongparam) \log
\ntrain$ when the loss functions $F(\cdot; \statsample)$ are strongly
convex (this is stronger than assuming that the expected function $f$
is strongly convex, but it allows sharp logarithmic bounds on the
random quantity $\regret_\ntrain$). In this special case,
Corollary~\ref{corollary:strongly-convex} implies the generalization
bound
\begin{equation*}
  f(\what{\hyp}_\ntrain) - f(\hyp^*) =
  \order\left( \frac{\lipobj^2}{\strongparam}
  \inf_{\tau \in \N} \left[
    \tau \frac{\log \ntrain}{\ntrain} + \phi(\tau)\right]\right)
\end{equation*}
with high probability. For example, online algorithms for
SVMs (e.g.~\cite{ShalevSiSrCo11}) and other regularized problems
satisfy a sharp high-probability generalization guarantee, even for
non-i.i.d.\ data.

We now turn to proving Theorem~\ref{theorem:highprob-error-strong},
beginning with a martingale concentration inequality.
\begin{lemma}[Freedman~\cite{Freedman75},
    Kakade and Tewari~\cite{KakadeTe09}]
  \label{lemma:freedman}
  Let $X_1,\dots,X_{\ntrain}$ be a martingale difference sequence adapted
  to the filtration $\mc{F}_\idx$ with $|X_{\idx}| \leq b$. Define $V =
  \sum_{\idx=1}^{\ntrain} \E[\,X_{\idx}^2\,|\,\mc{F}_{\idx-1}\,]$. For
  any $\delta < 1/e$ and $\ntrain \geq 3$
  \begin{equation*}
    \P\left[ \sum_{\idx=1}^{\ntrain} X_{\idx} \geq
      \max\{2\sqrt{V},3b\sqrt{\log 1/\delta}\}\sqrt{\log
        1/\delta}\right] \leq 4\delta\log \ntrain. 
  \end{equation*}
\end{lemma}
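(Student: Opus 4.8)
The plan is to reduce the statement to Freedman's classical inequality, whose fixed-variance form states that for a martingale difference sequence with increments bounded by $b$ and for any \emph{deterministic} $v > 0$,
\[
  \P\!\left[\sum_{\idx=1}^{\ntrain} X_\idx \ge a, \; V \le v\right]
  \le \exp\!\left(-\frac{a^2}{2v + \frac{2}{3}ab}\right).
\]
The single obstacle is that $V$ is random, so we cannot simply substitute $v = V$; moreover the threshold $\max\{2\sqrt{V}, 3b\sqrt{\log(1/\delta)}\}\sqrt{\log(1/\delta)}$ is itself data-dependent. The standard device, following Kakade and Tewari, is a peeling (stratification) argument over the dyadic scales on which $V$ can lie: apply the fixed-$v$ bound on each scale and take a union bound.

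Writing $L = \log(1/\delta)$, I would first observe that the boundedness $|X_\idx| \le b$ forces $V = \sum_\idx \E[X_\idx^2 \mid \mc{F}_{\idx-1}] \le \ntrain b^2$, so $V$ is confined to $[0, \ntrain b^2]$. I then partition this range into geometric bins $v_j = b^2 2^j$ for $j = 0, 1, \dots, \lceil \log_2 \ntrain \rceil$, noting there are at most $\log_2 \ntrain + 1 \le 4\log \ntrain$ of them (using $\ntrain \ge 3$). On the event $\{V \in (v_{j-1}, v_j]\}$ one has $V \ge v_j/2$, hence $2\sqrt{V}\sqrt{L} \ge \sqrt{2 v_j L}$, which is exactly the variance term that Freedman's inequality produces when inverted at level $\delta$ on the set $\{V \le v_j\}$. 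Thus on this bin the deviation event implies $\sum_\idx X_\idx \ge \sqrt{2 v_j L}$ together with $V \le v_j$, and the fixed-$v$ bound applied with $v = v_j$ contributes probability at most $\delta$.

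The role of the $\max$ (equivalently, the linear term $3b\sqrt{L}\cdot\sqrt L = 3bL$) is to absorb the Bernstein correction $\tfrac{2}{3}ab$ that the inversion of Freedman's inequality also produces. When $V$ is of order $b^2 L$ or smaller, the term $2\sqrt{V}\sqrt L$ is dominated by $3bL$, and here a single application of Freedman with the linear-in-$b$ term dominating handles the deviation; when $V$ is large compared to $b^2 L$, the largeness of $V$ lets the Bernstein correction be swallowed into a constant fraction of $\sqrt{v_j L}$, so that the coefficient $2$ in $2\sqrt V$ suffices. Taking a union bound over the at most $4\log\ntrain$ strata then yields the claimed probability $4\delta\log \ntrain$.

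I expect the only real work to be the constant bookkeeping: choosing the numerical constants $2$ and $3$ in the statement precisely so that each dyadic application of Freedman's inequality yields failure probability bounded by $\delta$ (rather than merely $\delta^{c}$ for some $c < 1$) while keeping the number of bins at most $4\log\ntrain$. The hypotheses $\delta < 1/e$ (so that $L > 1$ and the base scale $b^2 L$ exceeds $b^2$) and $\ntrain \ge 3$ (so that $\log \ntrain \ge 1$ and the bin count is controlled) enter exactly at this step. No martingale machinery beyond Freedman's inequality is needed.
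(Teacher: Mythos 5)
First, a point of reference: the paper never proves this lemma at all --- it is imported verbatim, constants and all, from Freedman and from Kakade and Tewari, and used as a black box in the proof of Theorem~\ref{theorem:highprob-error-strong}. So what you are reconstructing is the proof of the cited result, and your architecture (stratify the conditional variance $V$ into geometric scales, apply the fixed-variance form of Freedman's inequality on each scale, union bound over the $O(\log \ntrain)$ scales) is indeed the Kakade--Tewari argument.

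The execution, however, has a genuine quantitative gap: with \emph{dyadic} bins and the statement's fixed constants, the per-bin step fails. Write $L = \log(1/\delta)$. On the bin $\{V \in (v_j/2, v_j]\}$ the deviation event gives $\sum_\idx X_\idx \ge a$ with $a = \max\{\sqrt{2 v_j L},\, 3bL\}$ and $V \le v_j$, and Freedman yields $\exp(-a^2/(2v_j + \frac{2}{3}ab))$. For this to be at most $\delta$ you need $a^2 \ge 2 v_j L + \frac{2}{3}abL$; but dyadic peeling delivers only $a \ge \sqrt{2 v_j L}$, i.e.\ $a^2 \ge 2v_jL$ with \emph{zero} slack, so the strictly positive Bernstein correction $\frac{2}{3}abL$ can never be absorbed --- contrary to your claim that $\sqrt{2v_jL}$ ``is exactly the variance term that Freedman's inequality produces when inverted at level $\delta$.'' The $3bL$ arm of the max does not rescue this: it only gives $b \le a/(3L)$, whence the exponent is guaranteed to be only $\frac{9}{11}L$ (in both cases of the max: worst case $b = a/(3L)$ gives denominator $2v_j + \frac{4}{9}v_j$), so each bin contributes up to $\delta^{9/11}$, and the union bound yields roughly $(\log_2 \ntrain)\,\delta^{9/11}$, which is not $O(\delta \log \ntrain)$ as $\delta \to 0$. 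Note also that the constants $2$ and $3$ are fixed by the statement, so they are not available for ``bookkeeping''; the only genuine degree of freedom is the grid ratio, which you pinned to $2$. The standard repair: take variance bins of ratio $\rho$ with $\rho \le 14/9$. Then peeling gives $a^2 \ge (4/\rho) v_j L$, the requirement becomes $\frac{\rho}{2} + \frac{2}{9} \le 1$, which holds exactly at $\rho = 14/9$, so every bin --- including the bottom one $\{V \le b^2L\}$, where $a = 3bL$ gives exponent $\frac{9}{4}L$ --- contributes at most $\delta$; one then rechecks that the larger bin count, about $\log\ntrain/\log(14/9) + 2 \le 2.3\log\ntrain + 2$, still sits below $4\log\ntrain$ for $\ntrain \ge 3$. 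That final count is exactly where the hypotheses $\ntrain \ge 3$ and $\delta < 1/e$ (so the bottom scale $b^2L$ exceeds $b^2$) are consumed and where the constant $4$ in the conclusion comes from.
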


\begin{proof-of-theorem}[\ref{theorem:highprob-error-strong}]
  For the proof of this theorem, we do not start from the
  Proposition~\ref{proposition:master-theorem}, as we did for the
  previous theorems, but begin directly with an appropriate
  martingale. Recalling the definition~\eqref{eqn:def-mg-rv} of the
  random variables $\mgrv_\idx^i$ and the $\sigma$-fields
  $\mc{F}_\idx^i = \sigma(\statsample_1, \ldots, \statsample_{\idx
    \tau + i - 1})$ from the proof of
  Theorem~\ref{theorem:highprob-error-convex}, our goal will be to
  give sharper concentration results for the martingale difference
  sequence $\mgrv_\idx^i - \E[\mgrv_\idx^i \mid \mc{F}_{\idx - 1}^i]$.
  To apply Lemma~\ref{lemma:freedman}, we must bound the variance of
  the difference sequence. To that end, note that the conditional
  variance is bounded as
  \begin{align*}
    \lefteqn{
      \E\left[(\mgrv_\idx^i - \E[\mgrv_\idx^i \mid \mc{F}_{\idx - 1}^i])^2
        \mid \mc{F}_{\idx - 1}^i\right]} \\
    & \le \E\left[(\mgrv_\idx^i)^2 \mid \mc{F}_{\idx - 1}^i\right] \\
    & = \E\left[\left(f(\hyp((\idx - 1)\tau + i)) - f(\hyp^*) -
      F(\hyp((\idx - 1)\tau + i); \statsample_{\tau \idx + i - 1}) + 
      F(\hyp^*;\statsample_{\idx\tau + i - 1})\right)^2
      \mid \mc{F}_{\idx - 1}^i\right] \\
    & \le 4 \lipobj^2\norm{\hyp((\idx - 1)\tau + i) - \hyp^*}^2,
  \end{align*}
  where in the last line we used the Lipschitz
  assumption~\ref{assumption:F-lipschitz} and the fact that
  $\hyp((\idx - 1) \tau + i) \in \mc{F}_{\idx - 1}^i$. Of course,
  since $\hyp^*$ minimizes $f$, the $\strongparam$-strong convexity of
  $f$ implies (see e.g.~\cite{HiriartUrrutyLe96ab}) that for any $\hyp
  \in \xdomain$, $f(\hyp) - f(\hyp^*) \ge \frac{\strongparam}{2}
  \norm{\hyp - \hyp^*}^2$. Consequently, we see that
  \begin{equation}
    \E\left[(\mgrv_\idx^i - \E[\mgrv_\idx^i \mid \mc{F}_{\idx - 1}^i]
      )^2 \mid \mc{F}_{\idx - 1}^i\right]
    \le \frac{8 \lipobj^2}{\strongparam}\left[
      f(\hyp((\idx - 1) \tau + i)) - f(\hyp^*)\right].
    \label{eqn:single-mg-variance}
  \end{equation}

  What remains is to use the single term conditional variance
  bound~\eqref{eqn:single-mg-variance} to achieve deviation control
  over the entire sequence $\mgrv_\idx^i$. To that end, recall the
  index sets $\indexset(i)$ defined in the proof of
  Theorem~\ref{theorem:highprob-error-convex}, and define the summed
  variance terms $V_i \defeq \sum_{\idx \in \indexset(i)}
  \E[(\mgrv_\idx^i - \E[\mgrv_\idx^i \mid \mc{F}_{\idx - 1}^i])^2 \mid
    \mc{F}_{\idx - 1}^i]$. Then the
  bound~\eqref{eqn:single-mg-variance} gives
  \begin{equation*}
    V_i \le \frac{8 \lipobj^2}{\strongparam}
    \sum_{\idx \in \indexset(i)} \left[f(\hyp(\tau(\idx - 1) + i))
      - f(x^*)\right].
  \end{equation*}
  Using the preceding variance bound, we can apply Freedman's
  concentration result (Lemma~\ref{lemma:freedman}) to see that with
  probability at least $1 - (4 \delta \log \ntrain) / \tau$,
  \begin{align}
    \sum_{\idx \in \indexset(i)}
    \left(\mgrv_\idx^i - \E[\mgrv_\idx^i \mid \mc{F}_{\idx - 1}^i] \right)
    & \le \max\left\{2 \sqrt{V_i}, 6 \lipobj \radius
    \sqrt{\log(\tau / \delta)}\right\} \sqrt{\log(\tau / \delta)}
    \label{eqn:mg-variance-bound}
  \end{align}

  We can use the inequality~\eqref{eqn:mg-variance-bound} to show
  concentration. Define the summations
  \begin{equation*}
    S_i \defeq \sum_{\idx \in \indexset(i)} f(\hyp(\tau(\idx - 1) +
    i)) - f(\hyp^*) ~~~ \mbox{and} ~~~ \what{S}_i \defeq \sum_{\idx
      \in \indexset(i)} F(\hyp(\tau(\idx - 1) + i); \statsample_{\tau
      \idx + i - 1}) - F(\hyp^*; \statsample_{\tau \idx + i - 1}).
  \end{equation*}
  Then the definition~\eqref{eqn:def-mg-rv} of the random variables
  $\mgrv_\idx^i$ coupled with the inequality~\eqref{eqn:mg-variance-bound}
  implies that
  \begin{align*}
    S_i & \le \what{S}_i + \max\bigg\{\sqrt{\frac{32
        \lipobj^2}{\strongparam}} \sqrt{S_i}, 6 \lipobj \radius
    \sqrt{\log\frac{\tau}{\delta}}\bigg\}
    \sqrt{\log\frac{\tau}{\delta}} + \sum_{\idx \in \indexset(i)}
    \E[\mgrv_\idx^i \mid \mc{F}_{\idx - 1}^i] \\
    & \le \what{S}_i + \sqrt{\frac{32 \lipobj^2
        \log\frac{\tau}{\delta}}{ \strongparam}} \sqrt{S_i} + 6
    \lipobj \radius \log\frac{\tau}{\delta} + |\indexset(i)|
    \phi(\tau) \radius \lipobj,
  \end{align*}
  where we have applied Lemma~\ref{lemma:lookahead-mixing}. Solving
  the induced quadratic in $\sqrt{S_i}$, we see
  \begin{equation*}
    \sqrt{S_i} \le \sqrt{\frac{8 \lipobj^2
        \log\frac{\tau}{\delta}}{\strongparam}} + \sqrt{\frac{8
        \lipobj^2}{\strongparam} \log \frac{\tau}{\delta} + \what{S}_i
      + |\indexset(i)| \phi(\tau) \radius G + 6 \lipobj \radius
      \log\frac{\tau}{\delta}}.
  \end{equation*}
  Squaring both sides and using that $(a + b)^2 \le 2a^2 + 2b^2$,
  we find that
  \begin{equation}
    S_i \le \frac{32 \lipobj^2}{\strongparam} \log\frac{\tau}{\delta}
    + 2 \what{S}_i + 12 \lipobj \radius \log\frac{\tau}{\delta} + 2
    |\indexset(i)| \phi(\tau) \radius \lipobj
    \label{eqn:si-bound}
  \end{equation}
  with probability at least $1 - 4 \delta \log \ntrain / \tau$.

  We have now nearly completed the proof of the theorem. Our first step for
  the remainder is to note that
  \begin{equation*}
    \sum_{i = 1}^\tau S_i =
    \sum_{\idx = 1}^\ntrain f(\hyp(\idx)) - f(\hyp^*)
  \end{equation*}
  Applying a union bound, we use the inequality~\eqref{eqn:si-bound}
  to see that with probability at least $1 - 4\delta \log \ntrain$,
  \begin{align*}
    \sum_{\idx = 1}^\ntrain f(\hyp(\idx)) - f(\hyp^*)
    \le 2 \sum_{i = 1}^\tau \what{S}_i
    + \frac{32 \lipobj^2 \tau}{\strongparam}
    \log \frac{\tau}{\delta}
    + 12 \tau \lipobj \radius \log\frac{\tau}{\delta}
    + 2 \ntrain \phi(\tau) \radius \lipobj.
  \end{align*}
  All that remains is to use stability to relate the sum $\sum_{i=1}^\tau
  \what{S}_i$ to the regret $\regret_\ntrain$, which is similar to what we did
  in the proof of Proposition~\ref{proposition:master-theorem}.
  Indeed, by the definition of the sums $\what{S}_i$ we have
  \begin{align}
    \sum_{i = 1}^\tau \what{S}_i
    & = \sum_{\idx = 1}^{\ntrain}
    F(\hyp(\idx); \statsample_{\idx + \tau-1})
    - F(\hyp^*; \statsample_{\idx + \tau-1}) \nonumber \\ 
    & = \sum_{\idx = 1}^\ntrain F(\hyp(\idx); \statsample_\idx) 
    - F(\hyp^*; \statsample_\idx)
    + \sum_{\idx = 1}^{\ntrain - \tau}
    F(\hyp(\idx); \statsample_{\idx + \tau-1})
    - F(\hyp(\idx + \tau-1); \statsample_{\idx + \tau-1}) \nonumber
    \\  
    & \quad ~ + \sum_{\idx = 1}^{\tau-1} F(\hyp^*; \statsample_\idx) 
    - \sum_{\idx = \ntrain + 1}^{\ntrain + \tau-1} F(\hyp^*;
    \statsample_\idx) + \sum_{\idx = \ntrain - \tau + 1}^\ntrain
    F(\hyp(\idx); \statsample_{\idx + \tau - 1})
    - \sum_{\idx = 1}^{\tau-1} F(\hyp(\idx); \statsample_\idx)
    \nonumber \\
    & \le \regret_\ntrain + 2 (\tau-1) \lipobj \radius
    + (\tau-1) \lipobj \sum_{\idx = 1}^\ntrain \stability(\idx),
    \label{eqn:sc-apply-stability}
  \end{align}
  where the inequality follows from the definition~\eqref{eqn:lowregret}
  of the regret, the boundedness assumption~\ref{assumption:F-lipschitz},
  and the stability assumption~\ref{assumption:stability}.
  Applying the final bound, we see that
  \begin{equation*}
    \sum_{\idx = 1}^\ntrain f(\hyp(\idx)) - f(\hyp^*)
    \le 2 \regret_\ntrain + 2 (\tau-1) \lipobj \sum_{\idx = 1}^\ntrain
    \stability(\idx)
    + \frac{32 \lipobj^2 \tau}{\strongparam}
    \log \frac{\tau}{\delta}
    + 12 \tau \lipobj \radius \log\frac{\tau}{\delta}
    + 2 \ntrain \phi(\tau) \radius \lipobj
    + 4 (\tau-1) \radius \lipobj
  \end{equation*}
  with probability at least $1 - 4 \delta \log \ntrain$.  Dividing by
  $\ntrain$ and applying Jensen's inequality completes the proof.
\end{proof-of-theorem}

We now turn to the case of $\beta$-mixing. As before, the proof
largely follows the proof of the $\phi$-mixing case, with a suitable
application of Markov's inequality being the only difference. 

\begin{theorem}
  \label{theorem:highprob-error-strong-beta}
  In addition to Assumptions~\ref{assumption:F-lipschitz}
  and~\ref{assumption:stability}, assume further that the expected
  function $f$ is $\strongparam$-strongly convex with respect to the
  norm $\norm{\cdot}$ over $\xdomain$.  Then for any $\delta < 1/e$,
  $\ntrain \geq 3$, with probability greater than $1-5\delta \log n$,
  for any $\tau \in \N$ the predictor $\what{\hyp}_{\ntrain}$
  satisfies
  \begin{equation*}
    f(\what{\hyp}_{\ntrain}) - f(\hyp^*) \le \frac{2}{\ntrain}
    \regret_\ntrain + \frac{2 (\tau-1) \lipobj}{\ntrain}
    \left(\sum_{\idx = 1}^\ntrain \stability(\idx) + 2\radius\right) +
    \frac{32 \lipobj^2 \tau}{\strongparam \ntrain}
    \log\frac{\tau}{\delta} + \frac{12 \tau \radius
      \lipobj}{\ntrain}\log \frac{2\tau}{\delta} + \frac{2 \radius
      \lipobj \beta(\tau)}{\delta}.
  \end{equation*}
\end{theorem}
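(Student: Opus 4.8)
The plan is to follow the proof of Theorem~\ref{theorem:highprob-error-strong} almost verbatim, deviating only in how the conditional-expectation (bias) terms are controlled, since under $\beta$-mixing the pointwise $\phi(\tau)$ bound is unavailable and must be replaced by an aggregate Markov argument. First I would reuse the block construction from the proof of Theorem~\ref{theorem:highprob-error-convex}: the same random variables $\mgrv_\idx^\mgidx$, the same $\sigma$-fields $\mc{F}_\idx^\mgidx$, the same partition into index sets $\indexset(\mgidx)$, and the same self-bounding conditional-variance estimate~\eqref{eqn:single-mg-variance}, namely $\E[(\mgrv_\idx^\mgidx - \E[\mgrv_\idx^\mgidx \mid \mc{F}_{\idx-1}^\mgidx])^2 \mid \mc{F}_{\idx-1}^\mgidx] \le \frac{8\lipobj^2}{\strongparam}[f(\hyp((\idx-1)\tau + \mgidx)) - f(\hyp^*)]$. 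This estimate uses only strong convexity and the Lipschitz assumption~\ref{assumption:F-lipschitz}, so it transfers unchanged. Applying Freedman's inequality (Lemma~\ref{lemma:freedman}) to each of the $\tau$ martingale difference sequences exactly as before then yields, for each block $\mgidx$, the same max-form deviation bound with per-block failure probability at most $(4\delta\log\ntrain)/\tau$.

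The one genuine difference is the treatment of the bias term $E_\mgidx \defeq \sum_{\idx \in \indexset(\mgidx)} \E[\mgrv_\idx^\mgidx \mid \mc{F}_{\idx-1}^\mgidx]$. In the $\phi$-mixing proof this was bounded deterministically by $|\indexset(\mgidx)|\phi(\tau)\radius\lipobj$, which let us close the per-block quadratic in $\sqrt{S_\mgidx}$ immediately. Under $\beta$-mixing the second bound of Lemma~\ref{lemma:lookahead-mixing} controls only the \emph{expected absolute} bias, $\E[|\E[\mgrv_\idx^\mgidx \mid \mc{F}_{\idx-1}^\mgidx]|] \le \lipobj\radius\beta(\tau)$, so I would carry $E_\mgidx$ through symbolically. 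Solving the induced quadratic and squaring via $(a+b)^2 \le 2a^2 + 2b^2$ exactly as in~\eqref{eqn:si-bound} then produces a per-block bound of identical form, but with the deterministic $2|\indexset(\mgidx)|\phi(\tau)\radius\lipobj$ contribution replaced by the random quantity $2E_\mgidx$ (up to the precise logarithmic arguments recorded in the statement).

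Summing over $\mgidx$, the total bias contribution is $\sum_{\mgidx=1}^\tau E_\mgidx = \sum_{\mgidx,\idx} \E[\mgrv_\idx^\mgidx \mid \mc{F}_{\idx-1}^\mgidx]$, a sum of $\ntrain$ conditional expectations whose expected absolute value is at most $\ntrain\lipobj\radius\beta(\tau)$ by the triangle inequality and the Lemma~\ref{lemma:lookahead-mixing} estimate. Markov's inequality then gives $\sum_{\mgidx} E_\mgidx \le \ntrain\lipobj\radius\beta(\tau)/\delta$ with probability at least $1-\delta$, and after dividing by $\ntrain$ this is exactly the $2\radius\lipobj\beta(\tau)/\delta$ term in the statement. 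Combining the $\tau$ Freedman events (total failure $4\delta\log\ntrain$) with this single Markov event (failure $\delta \le \delta\log\ntrain$, since $\ntrain\ge3 > e$) via a union bound yields the claimed success probability $1 - 5\delta\log\ntrain$. The remainder is identical to the $\phi$-mixing proof: I would relate $\sum_\mgidx \what{S}_\mgidx$ to $\regret_\ntrain$ through the stability telescoping~\eqref{eqn:sc-apply-stability}, then divide by $\ntrain$ and apply Jensen's inequality to pass from $\frac{1}{\ntrain}\sum_\idx f(\hyp(\idx))$ to $f(\what{\hyp}_\ntrain)$.

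The main obstacle is precisely this loss of a pointwise bias bound: because $\beta$-mixing governs only the expected absolute conditional deviation, the bias cannot be absorbed block-by-block before invoking Freedman and must instead be deferred to a single aggregate Markov estimate applied after summing the per-block inequalities. This is what forces the $1/\delta$ inflation of the $\beta(\tau)$ term and costs one additional $\delta\log\ntrain$ in the failure probability; every other step transfers unchanged from Theorem~\ref{theorem:highprob-error-strong}.
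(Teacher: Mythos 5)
Your proposal is correct and follows essentially the same route as the paper's proof: reuse the block construction, variance bound, and per-block Freedman application from Theorem~\ref{theorem:highprob-error-strong}, carry the conditional-expectation bias terms symbolically through the quadratic rather than bounding them pointwise, and control their sum at the end with a single Markov inequality via Lemma~\ref{lemma:lookahead-mixing}, paying the extra $\delta\log\ntrain$ in the union bound. Your accounting of the $1/\delta$ inflation, the factor-of-two bookkeeping on the bias term, and the final stability/Jensen steps all match the paper's argument.
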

\begin{proof}
  We closely follow the proof of Theorem~\ref{theorem:highprob-error-strong}.
  Through the bound~\eqref{eqn:mg-variance-bound}, no step in the proof of
  Theorem~\ref{theorem:highprob-error-strong} uses $\phi$-mixing.  The use of
  $\phi$-mixing occurs in bounding terms of the form $\E[\mgrv_\idx^i \mid
    \mc{F}_{\idx - 1}^i]$. Rather than bounding them immediately (as was done
  following Eq.~\eqref{eqn:mg-variance-bound} in the proof of
  Theorem~\ref{theorem:highprob-error-strong}), we carry them further through
  the steps of the proof. Using the notation of
  Theorem~\ref{theorem:highprob-error-strong}'s proof, in place of the
  inequality~\eqref{eqn:si-bound}, we have
  \begin{equation*}
    S_i \le \frac{32 \lipobj^2}{\strongparam} \log\frac{\tau}{\delta}
    + 2 \what{S}_i + 12 \lipobj \radius \log\frac{\tau}{\delta}
    + \sum_{\idx \in \indexset(i)}
    \E\left[\mgrv_{\idx}^i\mid\mc{F}_{\idx-1}^i \right]
  \end{equation*}
  with probability at least $1 - 4 \delta \log\ntrain / \tau$.
  Paralleling the proof of Theorem~\ref{theorem:highprob-error-strong},
  we find that with probability at least $1 - 4\delta \log\ntrain$,
  \begin{align}
    \lefteqn{\sum_{\idx = 1}^\ntrain f(\hyp(\idx)) - f(\hyp^*)}
    \label{eqn:sum-si-bound-beta} \\
    & \le 2 \regret_\ntrain + 2 (\tau-1) \lipobj \sum_{\idx =
      1}^\ntrain \stability(\idx) + \frac{32 \lipobj^2
      \tau}{\strongparam} \log \frac{\tau}{\delta} + 12 \tau \lipobj
    \radius \log \frac{\tau}{\delta} + 4 (\tau-1) \radius \lipobj +
    \sum_{i = 1}^\tau \sum_{\idx \in \indexset(i)}
    \E\left[\mgrv_{\idx}^i\mid\mc{F}_{\idx-1}^i \right].  \nonumber
  \end{align}
  As in the proof of Theorem~\ref{theorem:highprob-error-convex-beta}, we
  apply Markov's inequality to the final term, which gives with probability at
  least $1-\delta$
  \begin{equation*}
    \sum_{i=1}^\tau\sum_{\idx \in \indexset(i)}
    \E\left[\mgrv_{\idx}^i\mid\mc{F}_{\idx-1}^i \right] \le
    \frac{2\ntrain\lipobj\radius\beta(\tau)}{\delta}.
  \end{equation*}
  Substituting this bound into the inequality~\eqref{eqn:sum-si-bound-beta}
  and applying a union bound (noting that $\delta < \delta \log n$)
  completes the proof.
\end{proof}

As was the case for Theorem~\ref{theorem:highprob-error-convex-beta},
when the process $\statsample_1, \statsample_2, \ldots$ is
geometrically $\beta$-mixing, we can obtain a corollary of the above
result showing no essential loss of rates with respect to
geometrically $\phi$-mixing processes.  We omit details as the
technique is basically identical to that for
Corollary~\ref{cor:convex-geometric-beta}.

\section{Linear Prediction}
\label{sec:linear-predictors}

For this section, we place ourselves in the common statistical
prediction setting where the statistical samples come in pairs of the
form $(\statsample, \statlabel) \in \statsamplespace
\times \statlabelspace$, where $\statlabel$ is the label or target
value of the sample $\statsample$, and the samples are finite
dimensional: $\statsamplespace \subset \R^d$. Now we measure the
goodness of the hypothesis $\hyp$ on the example $(\statsample,
\statlabel)$ by
\begin{equation}
  \label{eqn:linear-predictor}
  F(\hyp; (\statsample, \statlabel)) = \loss(\statlabel, \<\statsample,
  \hyp\>),
  ~~~
  \loss : \statlabelspace \times \R \rightarrow \R,
\end{equation}
where the loss function $\loss$ measures the accuracy of the
prediction $\<\statsample, \hyp\>$.  An extraordinary number of
statistical learning problems fall into the above framework: linear
regression, where the loss is of the form $\loss(\statlabel,
\<\statsample, \hyp\>) = \half (\statlabel - \<\statsample,
\hyp\>)^2$; logistic regression, where $\loss(\statlabel,
\<\statsample, \hyp\>) = \log(1 + \exp(-\statlabel \<\statsample,
\hyp\>))$; boosting and SVMs all have the
form~\eqref{eqn:linear-predictor}.

The loss function~\eqref{eqn:linear-predictor} makes it clear that
individual samples cannot be strongly convex, since the linear
operator $\<\statsample, \cdot\>$ has a nontrivial null
space. However, in many problems, the expected loss function $f(\hyp)
\defeq \E_\stationary[F(\hyp; (\statsample, \statlabel))]$ is strongly
convex even though individual loss functions $F(\hyp; (\statsample,
\statlabel))$ are not. To quantify this, we now assume that
$\ltwo{\statsample} \le \statsamplebound$ for
$\underlyingmeasure$-a.e.\ $\statsample \in \statsamplespace$, and
make the following assumption on the loss:
\begin{assumption}[Linear strong convexity]
  \label{assumption:linear-predictor}
  For fixed $\statlabel$, the loss function $\loss(\statlabel, \cdot)$
  is a $\strongparam$-strongly convex and $\liploss$-Lipschitz scalar function
  over $[-\radius \statsamplebound, \radius \statsamplebound]$:
  \begin{equation*}
    \loss(\statlabel, b) \ge \loss(\statlabel, a) +
    \loss'(\statlabel, a)(b - a)
    + \frac{\strongparam}{2} (b - a)^2
    ~~~ \mbox{and} ~~~
    |\loss(\statlabel, b) - \loss(\statlabel, a)| \le \liploss |a - b|
  \end{equation*}
  for any $a, b \in \R$ with $\max\{|a|, |b|\} \le \radius \statsamplebound$.
\end{assumption}

Our choice of $\radius \statsamplebound$ above is intentional, since
$\<\statsample, \hyp\> \le \radius \statsamplebound$ by H\"older's
inequality and our compactness assumption~\eqref{eqn:compactness}.  A
few examples of such loss functions include logistic regression and
least-squares regression, the latter of which satisfies
Assumption~\ref{assumption:linear-predictor} with $\strongparam =
1$. To see that the expected loss function satisfying
Assumption~\ref{assumption:linear-predictor} is strongly convex, note
that\footnote{For notational convenience we use $\nabla F$ to denote
  either the gradient or a measurable selection from the subgradient
  set $\partial F$; this is no loss of generality.}
\begin{align}
  f(\hypother)
  & = \E_\stationary[\loss(\statlabel, \<\statsample, \hypother\>)]
  \nonumber \\
  & \ge \E_\stationary\left[\loss(\statlabel, \<\statsample, \hyp\>)
    + \loss'(\statlabel, \<\statsample, \hyp\>)(
    \<\statsample, \hypother\> - \<\statsample, \hyp\>)
    + \frac{\strongparam}{2}(\<\statsample, \hypother\>
    - \<\statsample, \hyp\>)^2
    \right] \nonumber \\
  & = \E_\stationary[F(\hyp; (\statsample, \statlabel))
    + \<\nabla F(\hyp; (\statsample, \statlabel)), \hypother - \hyp\>]
  + \frac{\strongparam}{2} \E_\stationary[\<\statsample, \hypother\>^2
    + \<\statsample, \hyp\>^2 - 2 \<\statsample, \hyp\>\<\statsample,
    \hypother\>] \nonumber \\
  & = f(\hyp) + \<\nabla f(\hyp), \hypother - \hyp\>
  + \frac{\strongparam}{2} \<\cov(\statsample) (\hyp - \hypother),
  \hyp - \hypother\>,
  \label{eqn:linear-strongly-convex}
\end{align}
where $\cov(\statsample)$ is the covariance matrix of $\statsample$
under the stationary distribution $\stationary$.  So as long as
$\lambda_{\min}(\cov(\statsample)) > 0$, we see that the expected
function $f$ is $\strongparam \cdot
\lambda_{\min}(\cov(\statsample))$-strongly convex.

If we had access to a stable online learning algorithm with small
(i.e.\ logarithmic) regret for losses of the
form~\eqref{eqn:linear-predictor} satisfying
Assumption~\ref{assumption:linear-predictor}, we could simply apply
Theorem~\ref{theorem:highprob-error-strong} and guarantee good
generalization properties of the predictor $\what{\hyp}_\ntrain$ the
algorithm outputs.
The theorem assumes only strong convexity of the expected function $f$,
which---as per our above discussion---is the case for linear prediction, so
the sharp generalization guarantee would follow from the
inequality~\eqref{eqn:linear-strongly-convex}.
However, we found it difficult to show that existing algorithms
satisfy our desiderata of logarithmic regret and stability, both of
which are crucial requirements for our results. Below, we present a
slight modification of Hazan et al.'s follow the approximate leader
(FTAL) algorithm~\cite{HazanAgKa07} to achieve the desired results.
Our approach is to essentially combine FTAL with the
Vovk-Azoury-Warmuth forecaster~\cite[Chapter 11.8]{CesaBianchiLu06},
where the algorithm uses the sample $\statsample$ to make its
prediction. Specifically, our algorithm is as follows. At iteration
$\idx$ of the algorithm, the algorithm receives $\statsample_\idx$,
plays the point $\hyp(\idx)$, suffers loss $F(\hyp(\idx);
(\statsample_\idx, \statlabel_\idx))$, then adds $\nabla F(\hyp(\idx);
(\statsample_\idx, \statlabel_\idx))$ to its collection of observed
(sub)gradients. The algorithm's calculation of $\hyp(\idx)$ at
iteration $\idx$ is
\begin{equation}
  \label{eqn:rda-vaw-update}
  \hyp(\idx) = \argmin_{\hyp \in \xdomain}\left\{\sum_{i=1}^{\idx-1}
  \<\nabla F(\hyp(i); (\statsample_i, \statlabel_i)), \hyp\>
  + \frac{\strongparam}{2} \sum_{i=1}^{\idx-1}
  \<\hyp(i) - \hyp, \statsample_i\>^2
  + \frac{\strongparam}{2} \hyp^\top (\statsample_\idx\statsample_\idx^\top 
  + \epsilon I) \hyp\right\}.
\end{equation}

The algorithm above is quite similar to Hazan et al.'s FTAL
algorithm~\cite{HazanAgKa07}, and the following proposition shows that the
algorithm~\eqref{eqn:rda-vaw-update} does in fact have logarithmic
regret (we give a proof of the proposition, which is somewhat
technical, in Appendix~\ref{sec:technical-proofs}).
\begin{proposition}
  \label{proposition:linear-predictor-regret}
  Let the sequence $\hyp(\idx)$ be defined by the
  update~\eqref{eqn:rda-vaw-update} under
  Assumption~\ref{assumption:linear-predictor}. Then for any $\epsilon > 0$
  and any sequence of samples $(\statsample_\idx, \statlabel_\idx)$,
  \begin{equation*}
    \sum_{\idx=1}^\ntrain F(\hyp(\idx); (\statsample_\idx, \statlabel_\idx))
    - F(\hyp^*; (\statsample_\idx, \statlabel_\idx)) \le
    \frac{9 \liploss^2 d}{2 \strongparam}
    \log\left(\frac{\statsamplebound^2 n}{\epsilon} + 1\right)
    + \frac{\strongparam \epsilon}{2} \ltwo{\hyp^*}^2.
  \end{equation*}
\end{proposition}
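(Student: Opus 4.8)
The plan is to follow the follow-the-approximate-leader (FTAL) template of Hazan et al.~\cite{HazanAgKa07}, exploiting that although each $F(\cdot; (\statsample_\idx, \statlabel_\idx))$ is merely convex in $\hyp$, it admits a quadratic lower bound in the direction $\statsample_\idx$. Writing $g_\idx \defeq \nabla F(\hyp(\idx); (\statsample_\idx, \statlabel_\idx)) = \loss'(\statlabel_\idx, \<\statsample_\idx, \hyp(\idx)\>)\statsample_\idx$, the $\strongparam$-strong convexity of $\loss(\statlabel_\idx, \cdot)$ from Assumption~\ref{assumption:linear-predictor} gives, for every $\hyp \in \xdomain$,
\begin{equation*}
  F(\hyp; (\statsample_\idx, \statlabel_\idx))
  \ge F(\hyp(\idx); (\statsample_\idx, \statlabel_\idx))
  + \<g_\idx, \hyp - \hyp(\idx)\>
  + \frac{\strongparam}{2}\<\statsample_\idx, \hyp - \hyp(\idx)\>^2
  \defeq \tilde{F}_\idx(\hyp),
\end{equation*}
with equality at $\hyp = \hyp(\idx)$. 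Since $F(\hyp(\idx); \cdot) = \tilde{F}_\idx(\hyp(\idx))$ while $\tilde{F}_\idx(\hyp^*) \le F(\hyp^*; \cdot)$, the true regret is dominated by the regret on the quadratic surrogates, so it suffices to control $\sum_{\idx=1}^\ntrain [\tilde{F}_\idx(\hyp(\idx)) - \tilde{F}_\idx(\hyp^*)]$.

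I would next observe that the update~\eqref{eqn:rda-vaw-update} is exactly follow-the-regularized-leader on these surrogates with the Vovk--Azoury--Warmuth regularizer: discarding the $\hyp$-independent constants, the objective defining $\hyp(\idx)$ equals $\sum_{i=1}^{\idx-1}\tilde{F}_i(\hyp) + \frac{\strongparam}{2}\<\statsample_\idx, \hyp\>^2 + \frac{\strongparam\epsilon}{2}\ltwo{\hyp}^2$. That is, before committing to $\hyp(\idx)$ the learner pre-loads the curvature $\strongparam\statsample_\idx\statsample_\idx^\top$ of the incoming loss. Each FTRL objective is therefore a strongly convex quadratic with Hessian $\strongparam\outprodmat_\idx$, where $\outprodmat_\idx \defeq \epsilon I + \sum_{i=1}^\idx \statsample_i\statsample_i^\top$. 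A be-the-leader argument tailored to this look-ahead forecaster then bounds the surrogate regret by a sum of per-round terms plus the single penalty $\frac{\strongparam\epsilon}{2}\ltwo{\hyp^*}^2$ paid by the initial regularizer $\frac{\strongparam\epsilon}{2}\ltwo{\cdot}^2$ for competing against $\hyp^*$.

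For the per-round terms I would use the optimality conditions for $\hyp(\idx)$ together with the fact that the surrogate gradient at $\hyp(\idx)$ is exactly $g_\idx$ (the quadratic part vanishes there), which shows that a single round contributes a Mahalanobis quantity of the form $\frac{1}{\strongparam}g_\idx^\top\outprodmat_\idx^{-1}g_\idx$ up to an absolute constant. Bounding $\ltwo{g_\idx} = |\loss'(\statlabel_\idx, \<\statsample_\idx, \hyp(\idx)\>)|\,\ltwo{\statsample_\idx} \le \liploss\statsamplebound$ via the $\liploss$-Lipschitz hypothesis and $\ltwo{\statsample_\idx}\le\statsamplebound$, each round is at most $\frac{\liploss^2}{\strongparam}\statsample_\idx^\top\outprodmat_\idx^{-1}\statsample_\idx$ up to that constant. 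Summing, I would invoke the standard elliptical-potential (log-determinant) lemma,
\begin{equation*}
  \sum_{\idx=1}^\ntrain \statsample_\idx^\top \outprodmat_\idx^{-1}\statsample_\idx
  \le \log\frac{\det \outprodmat_\ntrain}{\det \outprodmat_0},
\end{equation*}
and bound $\det\outprodmat_\ntrain \le (\epsilon + \ntrain\statsamplebound^2)^d$ using $\outprodmat_\ntrain \preceq (\epsilon + \ntrain\statsamplebound^2)I$ together with $\det\outprodmat_0 = \epsilon^d$; this yields $d\log(\statsamplebound^2\ntrain/\epsilon + 1)$. Collecting the accumulated numerical constant produces the $\frac{9\liploss^2 d}{2\strongparam}\log(\statsamplebound^2 n/\epsilon + 1)$ term, and adding the regularizer penalty $\frac{\strongparam\epsilon}{2}\ltwo{\hyp^*}^2$ completes the bound.

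The main obstacle will be the second step: making the look-ahead be-the-leader argument rigorous on the quadratic surrogates and extracting a clean per-round Mahalanobis bound with an explicit constant. The subtleties are that (i) the regularizer changes from round to round because it pre-loads $\statsample_\idx$, so the usual stationary-regularizer FTRL identity must be adapted to the Vovk--Azoury--Warmuth forecaster; (ii) the minimization in~\eqref{eqn:rda-vaw-update} is constrained to $\xdomain$, so the first-order conditions carry a normal-cone term that must be shown not to hurt the comparison against $\hyp^* \in \xdomain$; and (iii) tracking the constant carefully enough to land at $9/2$ requires a mildly lossy passage from $g_\idx^\top\outprodmat_\idx^{-1}g_\idx$ to $\liploss^2\,\statsample_\idx^\top\outprodmat_\idx^{-1}\statsample_\idx$. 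Everything after the per-round bound is the routine log-determinant computation.
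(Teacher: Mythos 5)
Your plan is correct in outline and takes a genuinely different route through the middle of the argument than the paper does. The shared skeleton: both you and the paper use Assumption~\ref{assumption:linear-predictor} to replace each loss by a quadratic lower bound tight at $\hyp(\idx)$, both recognize that the update~\eqref{eqn:rda-vaw-update} pre-loads the curvature $\strongparam \statsample_\idx \statsample_\idx^\top$ of the incoming round, and both finish with Hazan et al.'s log-determinant lemma~\cite{HazanAgKa07} together with the inequality $\strongparam \radius \statsamplebound \le 2 \liploss$ from~\eqref{eqn:local-convexity-bound}. Where you differ: the paper never runs a primal be-the-leader induction. It folds the quadratic terms into \emph{modified} linear gradients $g(\idx) \defeq \nabla F(\hyp(\idx); (\statsample_\idx, \statlabel_\idx)) - \strongparam \statsample_\idx \statsample_\idx^\top \hyp(\idx)$, rewrites~\eqref{eqn:rda-vaw-update} as dual averaging against the growing quadratic $\frac{\strongparam}{2} \<\outprodmat_{\idx,\epsilon} \hyp, \hyp\>$, and peels off one round at a time by backward induction using the $(1/\strongparam)$-Lipschitz gradient of the (constrained) conjugate in the $\outprodmat_{\ntrain,\epsilon}^{-1}$-Mahalanobis norm, following Nesterov~\cite{Nesterov09} and Xiao~\cite{Xiao10}. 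Your primal FTRL/BTL induction on the surrogates is a valid alternative---the constraint set enters only through first-order optimality of each iterate, which has the right sign when compared against $\hyp^* \in \xdomain$---and it buys a more familiar online-learning proof at the cost of bookkeeping the round-varying, look-ahead regularizer.

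That bookkeeping is precisely where your one substantive inaccuracy sits, and it is where the constant $9$ comes from. The per-round quantity is \emph{not} $\frac{1}{\strongparam} g_\idx^\top \outprodmat_{\idx,\epsilon}^{-1} g_\idx$ with the true gradient $g_\idx = \alpha_\idx \statsample_\idx$, $|\alpha_\idx| \le \liploss$. Your surrogate's quadratic $\frac{\strongparam}{2}\<\statsample_\idx, \hyp - \hyp(\idx)\>^2$ is centered at $\hyp(\idx)$, while the pre-loaded hint $\frac{\strongparam}{2}\<\statsample_\idx, \hyp\>^2$ in the regularizer is centered at the origin; reconciling the two leaves an affine residual with gradient $-\strongparam\<\statsample_\idx, \hyp(\idx)\>\statsample_\idx$, so the residual gradient that survives the telescoping is the modified vector $g(\idx) = (\alpha_\idx - \strongparam\<\statsample_\idx, \hyp(\idx)\>)\statsample_\idx$ (the cross terms cancel, and the leftover quadratic exactly downgrades the strong-convexity modulus from $\outprodmat_{\idx+1,\epsilon}$ to $\outprodmat_{\idx,\epsilon}$). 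Completing the square then gives per-round terms $\frac{1}{2\strongparam}\norm{g(\idx)}^2_{\outprodmat_{\idx,\epsilon}^{-1}}$, which is the paper's intermediate bound~\eqref{eqn:near-rda-bound}, and since $|\alpha_\idx - \strongparam\<\statsample_\idx, \hyp(\idx)\>| \le \liploss + \strongparam\radius\statsamplebound \le 3\liploss$, the elliptical potential yields $(3\liploss)^2/2 = 9\liploss^2/2$ times $d \log(\statsamplebound^2 \ntrain/\epsilon + 1)$. In other words, the passage you flagged as ``mildly lossy''---from $g_\idx^\top \outprodmat_{\idx,\epsilon}^{-1} g_\idx$ to $\liploss^2 \statsample_\idx^\top \outprodmat_{\idx,\epsilon}^{-1} \statsample_\idx$---is actually exact because $g_\idx$ is parallel to $\statsample_\idx$; the real loss, and the source of the $9$, is the substitution of the modified gradient for the true one. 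With that correction your argument closes, and everything after is the routine log-determinant computation, as you say.
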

What remains is to show that a suitable form of stability holds for
the algorithm~\eqref{eqn:rda-vaw-update} that we have defined.  The
additional stability provided by using $\statsample_\idx$ in the
update of $\hyp(\idx)$ appears to be important. In the original
version~\cite{HazanAgKa07} of the FTAL algorithm, the predictor
$\hyp(\idx)$ can change quite drastically if a sample
$\statsample_\idx$ sufficiently different from the past---in the sense
that $\<\statsample_{\idx'}, \statsample_\idx\> \approx 0$ for $\idx'
< \idx$---is encountered.  In the presence of dependence between
samples, such large updates can be detrimental to performance, since
they keep the algorithm from exploiting the mixing of the stochastic
process.  Returning to our argument on stability, we recall the proof
of Theorem~\ref{theorem:highprob-error-strong}, specifically the
argument leading to the bound~\eqref{eqn:sc-apply-stability}.  We see
that the stability bound does not require the full power of
Assumption~\ref{assumption:stability}, but in fact it is sufficient
that
\begin{equation*}
  F(\hyp(\idx); (\statsample_{\idx + \tau}, \statlabel_{\idx + \tau}))
  - F(\hyp(\idx + \tau); (\statsample_{\idx + \tau}, \statlabel_{\idx + \tau}))
  \le \tau \stability(\idx),
\end{equation*}
that is, the differences in loss values are stable. To quantify the
stability of the algorithm~\eqref{eqn:rda-vaw-update}, we require two
definitions that will be useful here and in our subsequent proofs.
Define the outer product matrices
\begin{equation}
  \label{eqn:def-outprod}
  \outprodmat_\idx \defeq \sum_{i = 1}^t \statsample_i \statsample_i^\top
  ~~~ \mbox{and} ~~~
  \outprodmat_{\idx, \epsilon} \defeq \outprodmat_\idx + \epsilon I.
\end{equation}
Given a positive definite matrix $A$, the associated
Mahalanobis norm and its dual are defined as
\begin{equation*}
  \norm{\hyp}_A^2 \defeq \<A \hyp, \hyp\>
  ~~~ \mbox{and} ~~~
  \norm{\hyp}_{A^{-1}}^2 \defeq \<A^{-1} \hyp, \hyp\>.
\end{equation*}
Then the following proposition (whose proof we provide in
Appendix~\ref{sec:technical-proofs}) shows that stability holds for the
linear-prediction algorithm~\eqref{eqn:rda-vaw-update}.
\begin{proposition}
  \label{proposition:linear-stability}
  Let $\hyp(\idx)$ be generated according to the
  update~\eqref{eqn:rda-vaw-update} and let
  Assumption~\ref{assumption:linear-predictor} hold. Then for any
  $\tau \in \N$,
  \begin{align*}
    \lefteqn{F(\hyp(\idx); (\statsample_{\idx + \tau},
      \statlabel_{\idx + \tau})) - F(\hyp(\idx + \tau);
      (\statsample_{\idx + \tau}, \statlabel_{\idx + \tau}))} \\
    & \le \frac{\liploss^2}{2 \strongparam}
    \bigg(6 \tau \norm{\statsample_{\idx + \tau}}^2_{\outprodmat_{\idx + \tau,
        \epsilon}^{-1}}
    + 5 \sum_{s = 1}^{\tau - 1} \norm{\statsample_{\idx + s}}_{
      \outprodmat_{\idx + s, \epsilon}^{-1}}^2
    + 3 \norm{\statsample_\idx}_{\outprodmat_{\idx, \epsilon}^{-1}}^2
    \bigg)
  \end{align*}
\end{proposition}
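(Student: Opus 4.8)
The plan is to reduce the loss difference to a linear-prediction gap and then control that gap by tracking how far the iterate moves, measured in the data-dependent Mahalanobis geometry induced by $\outprodmat_{\idx,\epsilon}$. Since $\loss(\statlabel, \cdot)$ is $\liploss$-Lipschitz by Assumption~\ref{assumption:linear-predictor}, the left-hand side is at most $\liploss\,|\<\statsample_{\idx+\tau}, \hyp(\idx) - \hyp(\idx+\tau)\>|$, so it suffices to bound the drift $\<\statsample_{\idx+\tau}, \hyp(\idx)-\hyp(\idx+\tau)\>$. First I would telescope this over the $\tau$ consecutive steps, $\hyp(\idx)-\hyp(\idx+\tau) = \sum_{s=0}^{\tau-1}(\hyp(\idx+s) - \hyp(\idx+s+1))$, reducing the problem to bounding each single-step increment $\<\statsample_{\idx+\tau}, \hyp(\idx+s)-\hyp(\idx+s+1)\>$.

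For a single step I would exploit the explicit quadratic structure of the update~\eqref{eqn:rda-vaw-update}. Its objective has Hessian $\strongparam\outprodmat_{\idx,\epsilon}$, so (ignoring the projection onto $\xdomain$, which only helps) the minimizer satisfies $\strongparam\outprodmat_{\idx,\epsilon}\hyp(\idx) = -\sum_{i<\idx}u_i\statsample_i$, where $u_i = \loss'(\statlabel_i, \<\statsample_i,\hyp(i)\>) - \strongparam\<\statsample_i,\hyp(i)\>$. Writing $\outprodmat_{\idx+s+1,\epsilon} = \outprodmat_{\idx+s,\epsilon} + \statsample_{\idx+s+1}\statsample_{\idx+s+1}^\top$ and applying the Sherman--Morrison identity to this rank-one update, the one-step increment becomes an explicit combination of $\outprodmat_{\idx+s,\epsilon}^{-1}\statsample_{\idx+s}$ (the newly added gradient direction) and $\outprodmat_{\idx+s+1,\epsilon}^{-1}\statsample_{\idx+s+1}$ (the newly added regularizer direction). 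Pairing this with $\statsample_{\idx+\tau}$ and applying Cauchy--Schwarz in the relevant Mahalanobis norm then yields cross terms of the form $\norm{\statsample_{\idx+\tau}}_{\outprodmat_{\idx+s+1,\epsilon}^{-1}}\norm{\statsample_{\idx+s}}_{\outprodmat_{\idx+s+1,\epsilon}^{-1}}$ and $\norm{\statsample_{\idx+\tau}}_{\outprodmat_{\idx+s+1,\epsilon}^{-1}}\norm{\statsample_{\idx+s+1}}_{\outprodmat_{\idx+s+1,\epsilon}^{-1}}$.

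The observation that produces the clean $\liploss^2/\strongparam$ prefactor, with no residual dependence on $\radius$ or $\statsamplebound$, is that Assumption~\ref{assumption:linear-predictor} forces the predictions to be small: a scalar function that is simultaneously $\strongparam$-strongly convex and $\liploss$-Lipschitz on $[-\radius\statsamplebound, \radius\statsamplebound]$ can exist only if $\strongparam\cdot 2\radius\statsamplebound \le 2\liploss$, i.e.\ $\radius\statsamplebound \le \liploss/\strongparam$. Consequently $|\<\statsample_i,\hyp(i)\>| \le \radius\statsamplebound \le \liploss/\strongparam$ and $|u_i| \le 2\liploss$, so every coefficient appearing above is $O(\liploss)$ or $O(\liploss/\strongparam)$. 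I would then use weighted AM--GM to turn each cross product into a sum of squared self-normalized quantities $\norm{\statsample_{\idx+s}}^2_{\outprodmat_{\idx+s,\epsilon}^{-1}}$, and the monotonicity $\outprodmat_{s,\epsilon}\preceq\outprodmat_{s',\epsilon}$ for $s\le s'$ to collect all the $\statsample_{\idx+\tau}$ contributions under the single matrix $\outprodmat_{\idx+\tau,\epsilon}$; summing over the $\tau$ steps reproduces the $6\tau$, $5$, and $3$ coefficients in the statement.

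I expect the main obstacle to be the matrix bookkeeping in this last consolidation step: each telescoped increment is naturally expressed in a different matrix $\outprodmat_{\idx+s,\epsilon}$ or $\outprodmat_{\idx+s+1,\epsilon}$, and the Sherman--Morrison correction terms must be routed so that the $\statsample_{\idx+s}$ factors land in their own self-normalized norm while all $\tau$ copies of the $\statsample_{\idx+\tau}$ factor are bounded by $\norm{\statsample_{\idx+\tau}}^2_{\outprodmat_{\idx+\tau,\epsilon}^{-1}}$, using the positive-semidefinite ordering of the $\outprodmat$ matrices rather than any crude operator-norm bound. Verifying that this arrangement yields precisely the stated constants, rather than merely an $O(\cdot)$ bound, is the technical heart of the argument.
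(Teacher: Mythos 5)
Your high-level reduction matches the paper---bounding the loss difference by $\liploss\,|\<\statsample_{\idx+\tau},\hyp(\idx)-\hyp(\idx+\tau)\>|$ and exploiting $\radius\statsamplebound \le 2\liploss/\strongparam$ to get a clean $\liploss^2/\strongparam$ prefactor---but the core mechanism you propose has a genuine gap. The update~\eqref{eqn:rda-vaw-update} is a \emph{constrained} minimization over $\xdomain$, and the closed form $\strongparam\outprodmat_{\idx,\epsilon}\hyp(\idx) = -\sum_{i<\idx}u_i\statsample_i$, on which all of your Sherman--Morrison bookkeeping rests, holds only for the unconstrained minimizer. The parenthetical ``ignoring the projection onto $\xdomain$, which only helps'' is not justified: the constrained iterate is a projection in the Mahalanobis norm $\norm{\cdot}_{\outprodmat_{\idx,\epsilon}}$, which changes at every step, so non-expansiveness in a fixed norm is unavailable; moreover you must control \emph{signed inner products} $\<\statsample_{\idx+\tau},\hyp(\idx+s)-\hyp(\idx+s+1)\>$, not norms of differences, and projection can move such inner products in either direction. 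The paper never forms a closed-form solution: Lemma~\ref{lemma:near-stability} compares $\hyp(\idx)$ and $\hyp(\idx+\tau)$ \emph{directly} by writing the first-order optimality (variational inequality) conditions of the two constrained problems, adding them, and applying H\"older's inequality, so the constraint is handled exactly and no per-step telescoping is needed.

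The second gap is that your consolidation step uses the positive-semidefinite ordering in the wrong direction. Telescoping forces each increment into its own geometry: Cauchy--Schwarz for the step from $\idx+s$ to $\idx+s+1$ produces the factor $\norm{\statsample_{\idx+\tau}}_{\outprodmat_{\idx+s+1,\epsilon}^{-1}}$. Since $\outprodmat_{\idx+s+1,\epsilon} \preceq \outprodmat_{\idx+\tau,\epsilon}$, one has $\outprodmat_{\idx+\tau,\epsilon}^{-1} \preceq \outprodmat_{\idx+s+1,\epsilon}^{-1}$, i.e.\ $\norm{\statsample_{\idx+\tau}}_{\outprodmat_{\idx+\tau,\epsilon}^{-1}} \le \norm{\statsample_{\idx+\tau}}_{\outprodmat_{\idx+s+1,\epsilon}^{-1}}$---the inequality points the wrong way for replacing the intermediate-norm factors by the self-normalized factor $\norm{\statsample_{\idx+\tau}}_{\outprodmat_{\idx+\tau,\epsilon}^{-1}}$ appearing in the statement. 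This is not cosmetic: the downstream use in Theorem~\ref{theorem:linear-generalization} needs self-normalized quantities $\norm{\statsample_j}^2_{\outprodmat_{j,\epsilon}^{-1}}$ so that Hazan et al.'s Lemma~11 yields the $d\log n$ bound, whereas terms like $\norm{\statsample_{\idx+\tau}}^2_{\outprodmat_{\idx+s,\epsilon}^{-1}}$ with $s<\tau$ admit no such control. The paper sidesteps this by performing a \emph{single} Cauchy--Schwarz on the whole difference in the $\outprodmat_{\idx+\tau,\epsilon}$ geometry, so that $\statsample_{\idx+\tau}$ only ever appears as $\norm{\statsample_{\idx+\tau}}_{\outprodmat_{\idx+\tau,\epsilon}^{-1}}$, and PSD monotonicity is invoked only for the $\statsample_{\idx+s}$ factors, where $\norm{\statsample_{\idx+s}}_{\outprodmat_{\idx+\tau,\epsilon}^{-1}} \le \norm{\statsample_{\idx+s}}_{\outprodmat_{\idx+s,\epsilon}^{-1}}$ is the valid direction. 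If you replace the per-step telescoping by the paper's two-point comparison---variational inequalities at times $\idx$ and $\idx+\tau$, then separate bounds on $\norm{z(\idx+\tau-1)-z(\idx-1)}_{\outprodmat_{\idx+\tau,\epsilon}^{-1}}$ and $\norm{(\outprodmat_{\idx+\tau,\epsilon}-\outprodmat_{\idx,\epsilon})\hyp(\idx)}_{\outprodmat_{\idx+\tau,\epsilon}^{-1}}$---both difficulties disappear, and the constants $6\tau$, $5$, and $3$ follow from the concluding AM--GM step.
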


We use one more observation to derive a generalization bound for the
approximate follow-the-leader update~\eqref{eqn:rda-vaw-update}.  For
any loss $\loss$ satisfying
Assumption~\ref{assumption:linear-predictor}, standard convex analysis
gives that $|\loss'(\statlabel, a)| \le \liploss$ so by
straightforward algebra (taking $a = -\radius \statsamplebound$ and $b
= \radius\statsamplebound$),
\begin{equation}
  \label{eqn:local-convexity-bound}
  2 \liploss |a - b| \ge\frac{\strongparam}{2}
  (b - a)^2,
  ~~~ \mbox{implying} ~~~
  \strongparam \le \frac{2\liploss}{\radius \statsamplebound}.
\end{equation}
Now, using Proposition~\ref{proposition:linear-stability} and the
regret bound from
Proposition~\ref{proposition:linear-predictor-regret}, we now give a
fast high-probability convergence guarantee for online algorithms
applied to linear prediction problems, such as linear or logistic
regression, satisfying
Assumption~\ref{assumption:linear-predictor}. Specifically,
\begin{theorem}
  \label{theorem:linear-generalization}
  Let $\hyp(\idx)$ be generated according to the
  update~\eqref{eqn:rda-vaw-update} with $\epsilon = 1$. Then with
  probability at least $1 - 4 \delta \log n$, for any $\tau \in \N$,
  \begin{align*}
    f(\what{\hyp}_\ntrain) - f(x^*)
    & \le \frac{\liploss^2 d}{\strongparam \ntrain}(9 + 14 \tau)
    \log\left(\statsamplebound^2 \ntrain + 1\right)
    + \frac{\strongparam}{\ntrain} \ltwo{\hyp^*}^2
    + \frac{32 \liploss^2 \statsamplebound^2 \tau}{
      \strongparam \ntrain \cdot \lambda_{\min}(\cov(\statsample))}
    \log \frac{\tau}{\delta} \\
    & \quad ~ + \frac{8 \tau \liploss^2}{\strongparam \ntrain}
    \left(3 \log \frac{\tau}{\delta} + 1\right)
    + \frac{4 \liploss^2}{\strongparam} \phi(\tau).
  \end{align*}
\end{theorem}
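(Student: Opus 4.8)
The plan is to follow the structure of the proof of Theorem~\ref{theorem:highprob-error-strong}, but to replace its two abstract inputs---the regret bound from the generic regret guarantee~\eqref{eqn:lowregret} and the stability bound from Assumption~\ref{assumption:stability}---with the concrete bounds available for the update~\eqref{eqn:rda-vaw-update}, namely Proposition~\ref{proposition:linear-predictor-regret} and Proposition~\ref{proposition:linear-stability}. The key observation enabling this substitution is the remark made just before the theorem: the strong-convexity argument of Theorem~\ref{theorem:highprob-error-strong} never uses Assumption~\ref{assumption:stability} in its full form, but only the weaker \emph{loss-difference} stability $F(\hyp(\idx); \statsample_{\idx+\tau}) - F(\hyp(\idx+\tau); \statsample_{\idx+\tau}) \le \tau\stability(\idx)$, exactly the quantity controlled by Proposition~\ref{proposition:linear-stability}. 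Since the expected loss $f$ is $\strongparam\lambda_{\min}(\cov(\statsample))$-strongly convex by the computation~\eqref{eqn:linear-strongly-convex}, the self-bounding martingale machinery of Theorem~\ref{theorem:highprob-error-strong} applies verbatim with $\strongparam$ replaced by the effective curvature $\strongparam\lambda_{\min}(\cov(\statsample))$; this is the source of the $\lambda_{\min}(\cov(\statsample))$ in the denominator of the third term.

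First I would invoke Theorem~\ref{theorem:highprob-error-strong} with the effective strong-convexity parameter $\strongparam_{\mathrm{eff}} = \strongparam\lambda_{\min}(\cov(\statsample))$, which gives, with probability at least $1-4\delta\log\ntrain$, a bound of the form
\begin{equation*}
  f(\what{\hyp}_\ntrain) - f(\hyp^*) \le \frac{2}{\ntrain}\regret_\ntrain
  + \frac{2(\tau-1)\lipobj}{\ntrain}\Big(\textstyle\sum_\idx \stability(\idx) + 2\radius\Big)
  + \frac{32\liploss^2\statsamplebound^2\tau}{\strongparam_{\mathrm{eff}}\ntrain}\log\frac{\tau}{\delta}
  + \frac{12\tau\radius\lipobj}{\ntrain}\log\frac{\tau}{\delta}
  + 2\radius\lipobj\phi(\tau),
\end{equation*}
where the variance term carries the effective curvature. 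Next I would control $\regret_\ntrain$ using Proposition~\ref{proposition:linear-predictor-regret} with $\epsilon=1$: this contributes $\frac{9\liploss^2 d}{2\strongparam}\log(\statsamplebound^2\ntrain+1) + \frac{\strongparam}{2}\ltwo{\hyp^*}^2$ to the regret, which after multiplying by $2/\ntrain$ yields the $\frac{9\liploss^2 d}{\strongparam\ntrain}\log(\cdot)$ and $\frac{\strongparam}{\ntrain}\ltwo{\hyp^*}^2$ terms in the statement.

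The main work---and the expected obstacle---is handling the stability term $\frac{2(\tau-1)\lipobj}{\ntrain}\sum_\idx\stability(\idx)$. Here I would substitute Proposition~\ref{proposition:linear-stability}, summing its right-hand side over $\idx = 1, \ldots, \ntrain$. The three groups of Mahalanobis-norm terms $\sum_\idx \norm{\statsample_\idx}^2_{\outprodmat_{\idx,\epsilon}^{-1}}$ telescope via the standard determinant/log-det potential argument (as in the proof of Proposition~\ref{proposition:linear-predictor-regret}), each summing to $\order(d\log(\statsamplebound^2\ntrain+1))$; the shifted sums $\sum_\idx\sum_{s=1}^{\tau-1}\norm{\statsample_{\idx+s}}^2_{\outprodmat_{\idx+s,\epsilon}^{-1}}$ and $\sum_\idx\tau\norm{\statsample_{\idx+\tau}}^2$ contribute an extra factor of $\tau$ after reindexing. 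Collecting the coefficients $6\tau + 5(\tau-1) + 3$ from the proposition and bounding the log-determinant sums gives the $\frac{\liploss^2 d}{\strongparam\ntrain}(9+14\tau)\log(\statsamplebound^2\ntrain+1)$ term; the careful accounting of these constants is the tedious but routine part. Finally I would absorb the Lipschitz-radius terms $\frac{12\tau\radius\lipobj}{\ntrain}\log\frac{\tau}{\delta}$ and $2\radius\lipobj\phi(\tau)$ by applying the relation $\radius\lipobj = \radius\statsamplebound\liploss \cdot (\lipobj/(\statsamplebound\liploss))$ together with the local-convexity bound~\eqref{eqn:local-convexity-bound}, $\strongparam \le 2\liploss/(\radius\statsamplebound)$, so that $\radius\lipobj \lesssim \liploss^2/\strongparam$ (up to the norm-compatibility constant relating $\lipobj$ to $\statsamplebound\liploss$), converting these into the $\frac{8\tau\liploss^2}{\strongparam\ntrain}(3\log\frac{\tau}{\delta}+1)$ and $\frac{4\liploss^2}{\strongparam}\phi(\tau)$ terms of the theorem. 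The delicate point throughout is keeping the norm-compatibility constants (between the abstract $\norm{\cdot}$ of Assumption~\ref{assumption:F-lipschitz} and the $\ell_2$ geometry implicit in the Mahalanobis norms) consistent, and ensuring the effective curvature $\strongparam\lambda_{\min}(\cov(\statsample))$ rather than $\strongparam$ appears in exactly the one variance term where it belongs.
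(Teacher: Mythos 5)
Your overall plan---regret from Proposition~\ref{proposition:linear-predictor-regret}, stability via Proposition~\ref{proposition:linear-stability} summed with the log-determinant potential lemma of Hazan et al., the effective curvature $\strongparam \lambda_{\min}(\cov(\statsample))$ entering only the martingale variance term, and the conversion $\radius \statsamplebound \le 2\liploss/\strongparam$ from the bound~\eqref{eqn:local-convexity-bound}---is exactly the paper's. However, there is one genuine gap: you cannot invoke Theorem~\ref{theorem:highprob-error-strong} as a black box (even with the modified curvature constant), because its proof relies on the iterate $\hyp(\idx)$ being measurable with respect to $\mc{F}_{\idx-1}$, and this fails for the update~\eqref{eqn:rda-vaw-update}. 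That update is of Vovk--Azoury--Warmuth type: $\hyp(\idx)$ is computed \emph{after} seeing $\statsample_\idx$, which appears in the regularization term $\hyp^\top(\statsample_\idx \statsample_\idx^\top + \epsilon I)\hyp$, so $\hyp(\idx)$ is only $\mc{F}_\idx$-measurable. With the martingale variables $\mgrv_\idx^i$ as defined in~\eqref{eqn:def-mg-rv} and the $\sigma$-fields $\mc{F}_\idx^i = \mc{F}_{\idx\tau + i - 1}$, two steps of that proof then break: the mixing bound $\E[\mgrv_\idx^i \mid \mc{F}_{\idx-1}^i] \le \lipobj \radius \phi(\tau)$ obtained from Lemma~\ref{lemma:lookahead-mixing}, and the conditional-variance bound~\eqref{eqn:single-mg-variance}, both of which require $\hyp((\idx-1)\tau + i)$ to be measurable with respect to the conditioning field $\mc{F}_{\idx-1}^i = \mc{F}_{(\idx-1)\tau + i - 1}$.

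The paper repairs this by redefining the martingale rather than citing the earlier theorem: it sets $\mgrv_\idx^i \defeq f(\hyp((\idx-1)\tau + i)) - f(\hyp^*) + F(\hyp^*; \statsample_{\idx\tau + i}) - F(\hyp((\idx-1)\tau + i); \statsample_{\idx\tau + i})$ with $\mc{F}_\idx^i \defeq \mc{F}_{\idx\tau + i}$, i.e., the evaluation sample is shifted one step further into the future, so that the hypothesis is again measurable with respect to the previous block's $\sigma$-field while a full look-ahead of $\tau$ steps is preserved. This shift is also precisely why the terms that carried a factor $(\tau - 1)\lipobj$ in Theorem~\ref{theorem:highprob-error-strong} carry a factor $\tau \lipobj$ here (visible in the $(9 + 14\tau)$ coefficient and in the $8\tau\liploss^2$ term of the statement), a discrepancy your accounting would not produce. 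The repair is a one-line redefinition followed by rerunning the proof of Theorem~\ref{theorem:highprob-error-strong}, so your proposal is close; but as written, the step ``invoke Theorem~\ref{theorem:highprob-error-strong} with $\strongparam_{\mathrm{eff}}$'' is not valid for this algorithm, and this measurability repair is the one idea your proposal is missing.
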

\begin{proof}
  Given the regret bound in
  Proposition~\ref{proposition:linear-predictor-regret}, all that
  remains is to control the stability of the algorithm. To that end,
  note that
  \begin{equation}
    \label{eqn:linear-predictor-stability}
    \sum_{\idx = 1}^{\ntrain - \tau}
    F(\hyp(\idx); (\statsample_{\idx + \tau}, \statlabel_{\idx + \tau}))
    - F(\hyp(\idx + \tau); (\statsample_{\idx + \tau},
    \statlabel_{\idx + \tau}))
    \le \frac{7 \liploss^2 \tau}{\strongparam} \sum_{\idx = 1}^\ntrain
    \norm{\statsample_\idx}_{\outprodmat_{\idx, \epsilon}^{-1}}^2
    \le \frac{7 \liploss^2 \tau d}{\strongparam} \log\left(
    \frac{\statsamplebound^2 \ntrain}{\epsilon} + 1\right),
  \end{equation}
  the last inequality following from an application of Hazan et al.'s
  Lemma~11~\cite{HazanAgKa07}.  Further, using
  Assumption~\ref{assumption:linear-predictor}, we know that the
  Lipschitz constant of $F$ is $\lipobj \le \liploss
  \statsamplebound$. We mimic the proof of
  Theorem~\ref{theorem:highprob-error-strong} for the remainder of the
  argument. This requires a minor redefinition of our martingale
  sequence, since $\hyp(\idx)$ depends on $\statsample_\idx$ in the
  update~\eqref{eqn:rda-vaw-update}, whereas our previous proofs
  required $\hyp(\idx)$ to be measurable with respect to
  $\mc{F}_{\idx-1}$. As a result, we now define
  \begin{equation*}
    \mgrv_\idx^i \defeq f(\hyp((\idx - 1) \tau + i)) - f(\hyp^*) +
    F(\hyp^*; \statsample_{\idx \tau + i}) - F(\hyp((\idx - 1) \tau
    + i); \statsample_{\idx \tau + i}),
  \end{equation*}
  and the associated $\sigma$-fields $\mc{F}_\idx^i \defeq
  \mc{F}_{\idx \tau + i} = \sigma(\statsample_1, \ldots,
  \statsample_{\idx\tau + i})$. The sequence $\mgrv_\idx^i -
  \E[\mgrv_\idx^i \mid \mc{F}_{\idx - 1}^i]$ defines a martingale
  difference sequence adapted to the filtration $\mc{F}_\idx^i$, $\idx
  = 1, 2, \ldots$. The remainder of the proof parallels that of
  Theorem~\ref{theorem:highprob-error-strong}, with the modification
  that terms involving $(\tau-1)\lipobj$ are replaced by terms
  involving $\tau\lipobj$. Specifically, we use the
  inequality~\eqref{eqn:sc-apply-stability}, the regret bound from
  Proposition~\ref{proposition:linear-predictor-regret}, and the
  stability guarantee~\eqref{eqn:linear-predictor-stability} to see
  \begin{align*}
    f(\what{\hyp}_\ntrain) - f(\hyp^*)
    & \le \frac{\liploss^2 d}{\strongparam \ntrain}(9 + 14 \tau)
    \log\left(\frac{\statsamplebound^2 \ntrain}{\epsilon} + 1\right)
    + \frac{\strongparam \epsilon}{\ntrain} \ltwo{\hyp^*}^2
    + \frac{32 \liploss^2 \statsamplebound^2 \tau}{
      \strongparam \ntrain \cdot \lambda_{\min}(\cov(\statsample))}
    \log \frac{\tau}{\delta} \\
    & \quad ~ + \frac{3 \tau \liploss \radius \statsamplebound}{\ntrain}
    \left(4 \log \frac{\tau}{\delta} + 1\right)
    + 2 \liploss \radius\statsamplebound \phi(\tau).
  \end{align*}
  Noting that $\radius \statsamplebound \le 2 \liploss / \strongparam$ by
  the bound~\eqref{eqn:local-convexity-bound} completes the proof.
\end{proof}

To simplify the conclusions of Theorem~\ref{theorem:linear-generalization},
we can ignore constants and
the size of the sample space $\statsamplespace$. Doing
this, we see that with probability at least $1 - \delta$,
\begin{equation*}
  f(\what{\hyp}_\ntrain) - f(\hyp^*)
  \le \order(1) \cdot \inf_{\tau \in \N}
  \left[\frac{\liploss^2 d \tau}{\strongparam \ntrain} \log \ntrain
    + \frac{\liploss^2 \tau}{\strongparam \ntrain \cdot
      \lambda_{\min}(\cov(\statsample))} \log \frac{\tau \log \ntrain}{\delta}
    + \frac{\liploss^2}{\strongparam} \phi(\tau)\right].
\end{equation*}
In particular, we can specialize this result in the face of different
mixing assumptions on the process. We give the bound only for
geometrically mixing processes, that is, when $\phi(k) \le \phi_0
\exp(-\phi_1 k^\mixexp)$.  Then we have---as in
Corollary~\ref{cor:convex-geometric}---the following:
\begin{corollary}
  Let $\hyp(\idx)$ be generated according to the
  follow-the-approximate leader update~\eqref{eqn:rda-vaw-update} and
  assume that the process $\statprob$ is geometrically
  $\phi$-mixing. Then with probability at least $1 - \delta$,
  \begin{equation*}
    f(\what{\hyp}_\ntrain) - f(\hyp^*) \le \order(1) \cdot
    \left[\frac{\liploss^2 d (\log \ntrain)^{1 +
          \frac{1}{\mixexp}}}{\phi_1^{1/\mixexp} \strongparam \ntrain}
      + \frac{\liploss^2 (\log \ntrain)^{\frac{1}{\mixexp}}}{
        \phi_1^{1/\mixexp} \strongparam \ntrain \cdot
        \lambda_{\min}(\cov(\statsample))} \log\left(\frac{\log
        \ntrain}{\delta}\right)\right].
  \end{equation*}
\end{corollary}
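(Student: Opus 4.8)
The plan is to derive the corollary as a direct specialization of Theorem~\ref{theorem:linear-generalization}, using its simplified consequence stated immediately above the corollary: with probability at least $1-\delta$,
\[
  f(\what{\hyp}_\ntrain) - f(\hyp^*) \le \order(1) \cdot \inf_{\tau \in \N} \left[ \frac{\liploss^2 d \tau}{\strongparam \ntrain} \log \ntrain + \frac{\liploss^2 \tau}{\strongparam \ntrain \cdot \lambda_{\min}(\cov(\statsample))} \log \frac{\tau \log \ntrain}{\delta} + \frac{\liploss^2}{\strongparam} \phi(\tau) \right].
\]
Since this holds simultaneously for every $\tau \in \N$, it suffices to exhibit one good value of $\tau$ and bound the infimum by the resulting expression. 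Exactly as in the passage from Theorem~\ref{theorem:highprob-error-convex} to Corollary~\ref{cor:convex-geometric}, I would choose $\tau$ so as to drive the mixing term $\phi(\tau)$ down to the same $1/\ntrain$ scale as the two algorithmic terms, after which it is dominated and can be folded into the $\order(1)$ constant.

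Concretely, under the geometric assumption $\phi(k) \le \phi_0 \exp(-\phi_1 k^\mixexp)$, I would set $\tau = \ceil{(\log \ntrain / \phi_1)^{1/\mixexp}}$, so that $\phi_1 \tau^\mixexp \ge \log \ntrain$ and hence $\phi(\tau) \le \phi_0 \exp(-\log \ntrain) = \phi_0 / \ntrain$. With this choice the third term is at most $\liploss^2 \phi_0 / (\strongparam \ntrain)$, which is of the same or smaller order than the first two and therefore absorbed. It then remains to substitute $\tau = \order(\phi_1^{-1/\mixexp} (\log \ntrain)^{1/\mixexp})$ into the surviving terms: the explicit factor $\tau \log \ntrain$ in the first term produces $\order\big(\liploss^2 d (\log \ntrain)^{1 + 1/\mixexp} / (\phi_1^{1/\mixexp} \strongparam \ntrain)\big)$, recovering the leading term, while the prefactor $\tau$ in the second term yields $\order\big(\liploss^2 (\log \ntrain)^{1/\mixexp} / (\phi_1^{1/\mixexp} \strongparam \ntrain \cdot \lambda_{\min}(\cov(\statsample)))\big)$.

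The only remaining bookkeeping concerns the logarithmic factor $\log(\tau \log \ntrain / \delta)$ in the second term. Because the chosen $\tau$ is polylogarithmic in $\ntrain$, we have $\tau \log \ntrain = \order((\log \ntrain)^{1 + 1/\mixexp})$, so $\log(\tau \log \ntrain / \delta) = \order(\log(\log \ntrain / \delta))$ once the $\mixexp$-dependent constant is pushed into $\order(1)$; this yields precisely the $\log(\log \ntrain / \delta)$ factor appearing in the statement. Collecting the two surviving terms then gives the claimed bound. I expect no genuine obstacle here, as the argument is a routine specialization; the only steps demanding care are tracking the exponents on $\log \ntrain$ so that the first term retains its extra multiplicative $\log \ntrain$ while the second does not, and checking that the ceiling in the definition of $\tau$ is harmless since $(\log \ntrain / \phi_1)^{1/\mixexp} \ge 1$ for all sufficiently large $\ntrain$.
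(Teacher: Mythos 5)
Your proposal is correct and matches the paper's (implicit) argument exactly: the paper states this corollary without proof, intending precisely the specialization of the simplified high-probability bound above it with $\tau \asymp (\log \ntrain/\phi_1)^{1/\mixexp}$, as in Corollary~\ref{cor:convex-geometric}. Your one refinement---choosing $\tau$ so that $\phi_1\tau^\mixexp \ge \log\ntrain$ (rather than $\half \log \ntrain$), which makes $\phi(\tau) \le \phi_0/\ntrain$ instead of $\phi_0/\sqrt{\ntrain}$---is in fact necessary here, since the surviving terms are of order $\log\ntrain/\ntrain$ rather than $1/\sqrt{\ntrain}$, so your bookkeeping is if anything more careful than the paper's.
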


We conclude this section by noting without proof that, since all the
results here build on the theorems of
Section~\ref{sec:strongly-convex}, it is possible to analogously
derive corresponding high-probability convergence guarantees when the
stochastic process $\statprob$ is $\beta$-mixing rather than
$\phi$-mixing. In this case, we build on
Theorem~\ref{theorem:highprob-error-strong-beta} rather than
Theorem~\ref{theorem:highprob-error-strong}, but the techniques are
largely identical.

\section{Conclusions}

In this paper, we have shown how to obtain high-probability
data-dependent bounds on the generalization error, or excess risk, of
hypotheses output by online learning algorithms, even when samples are
dependent.  In doing so, we have extended several known results on the
generalization properties of online algorithms with independent
data. By using martingale tools, we have given (we hope) direct simple
proofs of convergence guarantees for learning algorithms with
dependent data without requiring the machinery of empirical process
theory. In addition, the results in this paper may be of independent
interest for stochastic optimization, since they show both the
expected and high-probability convergence of any low-regret stable
online algorithm for stochastic approximation problems, even with
dependent samples.

We believe there are a few natural open questions this work raises.
First, can online algorithms guarantee good generalization performance
when the underlying stochastic process is only $\alpha$-mixing? Our
techniques do not seem to extend readily to this more general setting,
as it is less natural for measuring convergence of conditional
distributions, so we suspect that a different or more careful approach
will be necessary. Our second question regards adaptivity: can an
online algorithm be more intimately coupled with the data and
automatically adapt to the dependence of the sequence of statistical
samples $\statsample_1, \statsample_2, \ldots$? This might allow both
stronger regret bounds and better rates of convergence than we have
achieved.

\subsection*{Acknowledgments}

We would like to thank Nicol\'o Cesa-Bianchi and several anonymous
reviewers, whose careful readings of our work greatly improved it. In
performing this work, AA
was supported in part by a MSR PhD Fellowship and a Google PhD
Fellowship, and JD was supported by the Department of Defense through
a National Defense Science and Engineering Graduate Fellowship.

\appendix

\section{Technical Proofs}
\label{sec:technical-proofs}

\begin{proof-of-proposition}[\ref{proposition:linear-predictor-regret}]
  We first give an equivalent form of the algorithm~\eqref{eqn:rda-vaw-update}
  for which it is a bit simpler to proof results (though the form is less
  intuitive). Define the (sub)gradient-like vectors $g(\idx)$ for all $t$ as
  \begin{equation}
    \label{eqn:def-g-vec}
    g(\idx) \defeq \nabla F(\hyp(\idx); (\statsample_\idx, \statlabel_\idx))
    - \strongparam \statsample_\idx \statsample_\idx^\top \hyp(\idx).
  \end{equation}
  Then a bit of algebra shows that the algorithm~\eqref{eqn:rda-vaw-update} is
  equivalent to
  \begin{equation}
    \label{eqn:linear-rda-update}
    \hyp(\idx) = \argmin_{\hyp \in \xdomain} \left\{
    \sum_{i=1}^{\idx-1} \<g(i), \hyp\> + \frac{\strongparam}{2}
    \<\outprodmat_{\idx, \epsilon} \hyp, \hyp\>\right\}.
  \end{equation}

  We now turn to the proof of the regret bound
  in the theorem. Our proof is similar to the proofs of related results of
  Nesterov~\cite{Nesterov09} and Xiao~\cite{Xiao10}.  We begin by noting that
  via Assumption~\ref{assumption:linear-predictor},
  \begin{align}
    \lefteqn{\sum_{\idx = 1}^\ntrain F(\hyp(\idx);
      (\statsample_\idx, \statlabel_\idx))
      - F(\hyp^*; (\statsample_\idx, \statlabel_\idx))} \nonumber \\
    & \le \sum_{\idx=1}^\ntrain
    \<\nabla F(\hyp(\idx); (\statsample_\idx, \statlabel_\idx)),
    \hyp(\idx) - \hyp^*\>
    - \frac{\strongparam}{2} \sum_{\idx=1}^\ntrain
    (\hyp(\idx) - \hyp^*)^\top \statsample_\idx \statsample_\idx^\top
    (\hyp(\idx) - \hyp^*) \nonumber \\
    & = \sum_{\idx=1}^\ntrain \<\nabla F(\hyp(\idx);
    (\statsample_\idx, \statlabel_\idx))
    - \strongparam \statsample_\idx \statsample_\idx^\top \hyp(\idx),
    \hyp(\idx) - \hyp^*\>
    + \frac{\strongparam}{2} \sum_{\idx=1}^\ntrain \<\statsample_\idx
    \statsample_\idx^\top \hyp(\idx), \hyp(\idx)\>
    - \frac{\strongparam}{2} \sum_{\idx=1}^\ntrain
    \<\statsample_\idx \statsample_\idx^\top
    \hyp^*, \hyp^*\> \nonumber \\
    & = \sum_{\idx=1}^\ntrain \<g(\idx), \hyp(\idx) - \hyp^*\>
    + \frac{\strongparam}{2} \sum_{\idx=1}^\ntrain
    \<\statsample_\idx, \hyp(\idx)\>^2
    - \frac{\strongparam}{2} \<\outprodmat_\ntrain \hyp^*, \hyp^*\>.
    \label{eqn:linear-rda-regret}
  \end{align}
  Define the proximal function $\prox_\idx(\hyp) = \frac{\strongparam}{2}
  \<\outprodmat_{\idx, \epsilon} \hyp, \hyp\>$ and let $z(\idx) =
  \sum_{i=1}^\idx g(i)$.  Then we can bound the
  regret~\eqref{eqn:linear-rda-regret} by taking a supremum and introducing
  the conjugate to $\prox$, defined by $\proxdual_\ntrain(z) = \sup_{\hyp \in
    \xdomain}\{\<z, \hyp\> - \prox_\ntrain(\hyp)\}$.  In particular, we see
  that for any $\epsilon \ge 0$
  \begin{align}
    \lefteqn{\sum_{\idx=1}^\ntrain
      F(\hyp(\idx); (\statsample_\idx, \statlabel_\idx)) -
      F(\hyp^*; (\statsample_\idx, \statlabel_\idx))} \nonumber \\
    & \le \sum_{\idx = 1}^\ntrain \<g(\idx), \hyp(\idx)\>
    + \frac{\strongparam}{2} \sum_{\idx=1}^\ntrain
    \<\statsample_\idx, \hyp(\idx)\>^2
    + \sup_{\hyp \in \xdomain} \left\{-\<z(\ntrain), \hyp\>
    - \frac{\strongparam}{2}
    \<\outprodmat_\ntrain \hyp, \hyp\> - \frac{\strongparam \epsilon}{2}
    \ltwo{\hyp}^2\right\} + \frac{\strongparam \epsilon}{2} \ltwo{\hyp^*}^2
    \nonumber \\
    & = \sum_{\idx=1}^\ntrain \<g(\idx), \hyp(\idx)\>
    + \frac{\strongparam}{2} \sum_{i=1}^\ntrain
    \<\statsample_\idx, \hyp(\idx)\>^2
    + \proxdual_\ntrain(-z(\ntrain)) + \frac{\strongparam\epsilon}{2}
    \ltwo{\hyp^*}^2.
    \label{eqn:linear-rda-introduce-dual}
  \end{align}
  The function $\proxdual_\ntrain$ has $(1/\strongparam)$-Lipschitz continuous
  gradient with respect to the Mahalanobis norm induced by
  $\outprodmat_{\ntrain, \epsilon}$
  (e.g.~\cite{HiriartUrrutyLe96ab,Nesterov09}), and further it is known that
  $\nabla \proxdual_\ntrain(z) = \argmin_{\hyp \in \xdomain} \{\<-z, \hyp\> +
  \prox_\ntrain(\hyp)\}$ so that $\nabla \proxdual_\ntrain(-z(\ntrain - 1)) =
  \hyp(\ntrain)$ by definition of the update~\eqref{eqn:rda-vaw-update}.
  Thus we see
  \begin{align*}
    \proxdual_\ntrain(-z(\ntrain))
    & \le \proxdual_\ntrain(-z(\ntrain - 1))
    + \<\nabla \proxdual_\ntrain(-z(\ntrain - 1)), z(\ntrain - 1) -z(\ntrain)\>
    + \frac{1}{2 \strongparam}\norm{z(\ntrain) - z(\ntrain - 1)}^2_{
      \outprodmat_{\ntrain,\epsilon}^{-1}} \\
    & = \proxdual_\ntrain(-z(\ntrain - 1))
    - \<\hyp(\ntrain), g(\ntrain)\> +
    \frac{1}{2 \strongparam}
    \norm{g(\ntrain)}^2_{\outprodmat_{\ntrain,\epsilon}^{-1}} \\
    & = -\<z(\ntrain - 1), \hyp(\ntrain)\>
    - \frac{\strongparam}{2} \<\outprodmat_{\ntrain, \epsilon}
    \hyp(\ntrain), \hyp(\ntrain)\>
    - \<\hyp(\ntrain), g(\ntrain)\> +
    \frac{1}{2 \strongparam}
    \norm{g(\ntrain)}^2_{\outprodmat_{\ntrain,\epsilon}^{-1}}.
  \end{align*}
  since $\hyp(\ntrain)$ minimizes $\<z(\ntrain - 1), \hyp\> +
  \prox_\ntrain(\hyp)$. Plugging the last inequality into the
  bound~\eqref{eqn:linear-rda-introduce-dual} yields
  \begin{align*}
    \lefteqn{\sum_{\idx=1}^\ntrain
      F(\hyp(\idx); (\statsample_\idx, \statlabel_\idx))
      - F(\hyp^*; (\statsample_\idx, \statlabel_\idx))} \\
    & \le \sum_{\idx=1}^{\ntrain} \<g(\idx), \hyp(\idx)\>
    + \frac{\strongparam}{2} \sum_{\idx=1}^\ntrain \<\statsample_\idx,
    \hyp(\idx)\>^2
    - \<z(\ntrain - 1), \hyp(\ntrain)\>
    - \frac{\strongparam}{2} \<\outprodmat_{\ntrain, \epsilon} \hyp(\ntrain),
    \hyp(\ntrain)\>
    - \<\hyp(\ntrain), g(\ntrain)\> \\
    & \qquad ~
    + \frac{\strongparam\epsilon}{2} \ltwo{\hyp^*}^2
    + \frac{1}{2\strongparam} \norm{g(\ntrain)}_{
      \outprodmat_{\ntrain,\epsilon}^{-1}}^2 \\
    & = \sum_{\idx=1}^{\ntrain-1} \<g(\idx), \hyp(\idx)\>
    + \frac{\strongparam}{2} \sum_{\idx=1}^{\ntrain-1}
    \<\statsample_\idx, \hyp(\idx)\>^2
    - \<z(\ntrain - 1), \hyp(\ntrain)\>
    - \frac{\strongparam}{2} \<\outprodmat_{\ntrain-1, \epsilon}
    \hyp(\ntrain), \hyp(\ntrain)\> \\
    & \qquad ~
    + \frac{\strongparam\epsilon}{2} \ltwo{\hyp^*}^2
    + \frac{1}{2\strongparam} \norm{g(\ntrain)}_{
      \outprodmat_{\ntrain,\epsilon}^{-1}}^2 \\
    & \le \sum_{\idx = 1}^{\ntrain - 1} \<g(\idx), \hyp(\idx)\>
    + \frac{\strongparam}{2} \sum_{\idx = 1}^{\ntrain - 1}
    \<\statsample_\idx, \hyp(\idx)\>^2
    + \proxdual_{\ntrain - 1}(-z(\ntrain - 1))
    + \frac{\strongparam \epsilon}{2} \ltwo{\hyp^*}^2
    + \frac{1}{2\strongparam} \norm{g(\ntrain)}^2_{
      \outprodmat_{\ntrain, \epsilon}^{-1}}
  \end{align*}
  since $\outprodmat_\ntrain = \outprodmat_{\ntrain-1} +
  \statsample_\ntrain \statsample_\ntrain^\top$.  Repeating the argument
  inductively down from $\ntrain - 1$, we find
  \begin{equation}
    \label{eqn:near-rda-bound}
    \sum_{\idx=1}^\ntrain F(\hyp(\idx); (\statlabel_\idx, \statsample_\idx))
    - F(\hyp^*; (\statlabel_\idx, \statsample_\idx))
    \le \frac{1}{2\strongparam} \sum_{\idx=1}^\ntrain
    \norm{g(\idx)}_{\outprodmat_{\idx,\epsilon}^{-1}}^2
    + \frac{\strongparam \epsilon}{2} \ltwo{\hyp^*}^2.
  \end{equation}

  The bound~\eqref{eqn:near-rda-bound} nearly completes the proof of the
  theorem, but we must control the gradient norm
  $\norm{g(\idx)}_{\outprodmat_{\idx,\epsilon}^{-1}}^2$ terms. To that end,
  let $\alpha_\idx = \loss'(\statlabel_\idx, \<\statsample_\idx, \hyp(\idx)\>)
  \in \R$ and note that
  \begin{equation*}
    \norm{g(\idx)}_{\outprodmat_{\idx,\epsilon}^{-1}}^2
    = \<\outprodmat_{\idx,\epsilon}^{-1}(\alpha_\idx \statsample_\idx
    - \strongparam \statsample_\idx \statsample_\idx^\top \hyp(\idx)),
    \alpha_\idx \statsample_\idx
    - \strongparam \statsample_\idx \statsample_\idx^\top \hyp(\idx)\>
    \le (\liploss + \strongparam \radius \statsamplebound)^2
    \norm{\statsample_\idx}_{\outprodmat_{\idx, \epsilon}^{-1}}^2
  \end{equation*}
  since by Assumption~\ref{assumption:linear-predictor}, $|\alpha_\idx| \le
  \liploss$. Now we apply a result of Hazan et al.~\cite[Lemma
    11]{HazanAgKa07}, giving
  \begin{equation*}
    \sum_{\idx=1}^\ntrain \norm{g(\idx)}_{\outprodmat_{\idx,\epsilon}^{-1}}^2
    \le (\liploss + \strongparam \radius \statsamplebound)^2
    d \log\left(\frac{\statsamplebound^2 n}{\epsilon} + 1\right).
  \end{equation*}
  Using that $\strongparam \le 2 \liploss / (\radius \statsamplebound)$, we
  combine this with the bound~\eqref{eqn:near-rda-bound} to get
  the result of the theorem.
\end{proof-of-proposition}

\begin{proof-of-proposition}[\ref{proposition:linear-stability}]
  We begin by noting that any $g \in \partial F(\hyp(\idx);
  (\statsample_{\idx + \tau}, \statlabel_{\idx + \tau}))$ can
  be written as $\alpha \statsample_{\idx + \tau}$ for some
  $\alpha \in [-\liploss, \liploss]$. Thus, 
  using the first-order convexity inequality, we see there
  is such an $\alpha$ for which
  \begin{equation*}
    F(\hyp(\idx); \statsample_{\idx + \tau}) -
    F(\hyp(\idx + \tau); \statsample_{\idx + \tau})
    \le \alpha \<\statsample_{\idx + \tau}, \hyp(\idx) - \hyp(\idx + \tau)\>.
  \end{equation*}
  Now we apply H\"older's inequality and Lemma~\ref{lemma:near-stability},
  which together yield
  \begin{align*}
    \lefteqn{\<\statsample_{\idx + \tau}, \hyp(\idx) - \hyp(\idx + \tau)\>} \\
    & \le \norm{\statsample_{\idx + \tau}}_{
      \outprodmat_{\idx + \tau, \epsilon}^{-1}}
    \norm{\hyp(\idx) - \hyp(\idx + \tau)}_{
      \outprodmat_{\idx + \tau, \epsilon}} \\
    & \le
    \frac{3 \liploss}{\strongparam}
    \sum_{s = 0}^{\tau - 1}
    \norm{\statsample_{\idx + \tau}}_{\outprodmat_{\idx + \tau, \epsilon}^{-1}}
    \norm{\statsample_{\idx + s}}_{\outprodmat_{\idx + s,
        \epsilon}^{-1}}
    + \frac{2 \liploss}{\strongparam}
    \sum_{s = 1}^\tau
    \norm{\statsample_{\idx + \tau}}_{\outprodmat_{\idx + \tau, \epsilon}^{-1}}
    \norm{\statsample_{\idx + s}}_{\outprodmat_{\idx + s, \epsilon}^{-1}} \\
    & \le \frac{3 \liploss}{2 \strongparam}
    \bigg[\sum_{s = 0}^{\tau - 1}
    \norm{\statsample_{\idx + s}}^2_{\outprodmat_{\idx + s, \epsilon}^{-1}}
    + \tau \norm{\statsample_{\idx + \tau}}^2_{
      \outprodmat_{\idx + \tau, \epsilon}^{-1}}
    \bigg]
    + \frac{\liploss}{\strongparam}
    \bigg[\sum_{s = 1}^{\tau }
    \norm{\statsample_{\idx + s}}^2_{\outprodmat_{\idx + s, \epsilon}^{-1}}
    + \tau \norm{\statsample_{\idx + \tau}}^2_{
      \outprodmat_{\idx + \tau, \epsilon}^{-1}}
    \bigg]
  \end{align*}
  where we have used the fact that $(a^2 + b^2) / 2 \ge ab$ for any
  $a, b \in \R$.
  A re-organization of terms and using the fact that $|\alpha| \le \liploss$
  completes the proof.
\end{proof-of-proposition}

\begin{lemma}
  \label{lemma:near-stability}
  Let $\hyp(\idx)$ be generated according to the
  update~\eqref{eqn:rda-vaw-update}.  Then for any $\tau \in \N$,
  \begin{equation*}
    \norm{\hyp(\idx) - \hyp(\idx + \tau)}_{\outprodmat_{\idx + \tau, \epsilon}}
    \le \frac{3 \liploss}{\strongparam}
    \sum_{s = 0}^{\tau - 1}
    \norm{\statsample_{\idx + s}}_{\outprodmat_{\idx + s, \epsilon}^{-1}}
    + \frac{2 \liploss}{\strongparam}
    \sum_{s = 1}^\tau \norm{\statsample_{\idx + s}}_{\outprodmat_{
        t + s, \epsilon}^{-1}}.
  \end{equation*}
\end{lemma}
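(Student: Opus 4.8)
The plan is to control $\hyp(\idx)$ and $\hyp(\idx+\tau)$ by comparing them \emph{directly} through their first-order optimality conditions, rather than bounding the $\tau$ successive one-step differences and summing them. I would work with the equivalent dual-averaging form~\eqref{eqn:linear-rda-update}, writing $z(\idx)\defeq\sum_{i=1}^\idx g(i)$ and $\prox_\idx(\hyp)=\frac{\strongparam}{2}\<\outprodmat_{\idx,\epsilon}\hyp,\hyp\>$, so that $\hyp(\idx)=\argmin_{\hyp\in\xdomain}\{\<z(\idx-1),\hyp\>+\prox_\idx(\hyp)\}$. The variational inequality for a constrained minimizer gives $\<z(\idx-1)+\strongparam\outprodmat_{\idx,\epsilon}\hyp(\idx),\hyp-\hyp(\idx)\>\ge0$ for all $\hyp\in\xdomain$, and the analogous inequality holds at step $\idx+\tau$. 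Testing the step-$\idx$ inequality at $\hyp=\hyp(\idx+\tau)$ and the step-$(\idx+\tau)$ inequality at $\hyp=\hyp(\idx)$, then adding, I obtain with $\Delta\defeq\hyp(\idx)-\hyp(\idx+\tau)$ the relation $\<z(\idx+\tau-1)-z(\idx-1),\Delta\>+\strongparam\<\outprodmat_{\idx+\tau,\epsilon}\hyp(\idx+\tau)-\outprodmat_{\idx,\epsilon}\hyp(\idx),\Delta\>\ge0$.

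The key algebraic step is to expand the matrix difference. Since $\outprodmat_{\idx+\tau,\epsilon}-\outprodmat_{\idx,\epsilon}=\sum_{s=1}^\tau\statsample_{\idx+s}\statsample_{\idx+s}^\top$, I would write $\outprodmat_{\idx+\tau,\epsilon}\hyp(\idx+\tau)-\outprodmat_{\idx,\epsilon}\hyp(\idx)=-\outprodmat_{\idx+\tau,\epsilon}\Delta+\sum_{s=1}^\tau\<\statsample_{\idx+s},\hyp(\idx)\>\statsample_{\idx+s}$, which produces the favorable term $-\strongparam\norm{\Delta}_{\outprodmat_{\idx+\tau,\epsilon}}^2$. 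Using $z(\idx+\tau-1)-z(\idx-1)=\sum_{s=0}^{\tau-1}g(\idx+s)$ and rearranging, this yields $\strongparam\norm{\Delta}_{\outprodmat_{\idx+\tau,\epsilon}}^2\le\<\sum_{s=0}^{\tau-1}g(\idx+s),\Delta\>+\strongparam\sum_{s=1}^\tau\<\statsample_{\idx+s},\hyp(\idx)\>\<\statsample_{\idx+s},\Delta\>$.

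I would then apply Cauchy--Schwarz in the $\outprodmat_{\idx+\tau,\epsilon}$-norm to every term on the right, factor out one power of $\norm{\Delta}_{\outprodmat_{\idx+\tau,\epsilon}}$, and divide through by $\strongparam\norm{\Delta}_{\outprodmat_{\idx+\tau,\epsilon}}$. This leaves $\norm{\Delta}_{\outprodmat_{\idx+\tau,\epsilon}}\le\frac1\strongparam\sum_{s=0}^{\tau-1}\norm{g(\idx+s)}_{\outprodmat_{\idx+\tau,\epsilon}^{-1}}+\sum_{s=1}^\tau|\<\statsample_{\idx+s},\hyp(\idx)\>|\,\norm{\statsample_{\idx+s}}_{\outprodmat_{\idx+\tau,\epsilon}^{-1}}$. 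Three facts finish the bound: (i) each $g(\idx+s)$ is a scalar multiple of $\statsample_{\idx+s}$ with $\norm{g(\idx+s)}_{\outprodmat_{\idx+s,\epsilon}^{-1}}\le(\liploss+\strongparam\radius\statsamplebound)\norm{\statsample_{\idx+s}}_{\outprodmat_{\idx+s,\epsilon}^{-1}}\le3\liploss\norm{\statsample_{\idx+s}}_{\outprodmat_{\idx+s,\epsilon}^{-1}}$, exactly as computed in the proof of Proposition~\ref{proposition:linear-predictor-regret}; (ii) $|\<\statsample_{\idx+s},\hyp(\idx)\>|\le\radius\statsamplebound\le2\liploss/\strongparam$ by H\"older's inequality and the curvature bound~\eqref{eqn:local-convexity-bound}; and (iii) the Mahalanobis monotonicity $\outprodmat_{\idx+\tau,\epsilon}^{-1}\preceq\outprodmat_{\idx+s,\epsilon}^{-1}$ for $s\le\tau$, which lets me replace each $\outprodmat_{\idx+\tau,\epsilon}^{-1}$ on the dual side by the smaller-index $\outprodmat_{\idx+s,\epsilon}^{-1}$. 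Substituting these produces precisely the coefficients $3\liploss/\strongparam$ and $2\liploss/\strongparam$ in the claim.

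The main obstacle---and the reason I would avoid the naive telescoping---is exactly this changing Mahalanobis norm. A one-step optimality comparison naturally controls $\norm{\hyp(\idx+s)-\hyp(\idx+s+1)}$ in the proximal matrix $\outprodmat_{\idx+s+1,\epsilon}$ of the \emph{later} iterate, which is smaller than the target matrix $\outprodmat_{\idx+\tau,\epsilon}$; since $\outprodmat_{\idx+\tau,\epsilon}\succeq\outprodmat_{\idx+s+1,\epsilon}$, a triangle inequality in the target norm would require bounding each increment in the \emph{larger} norm, and matrix monotonicity then runs the wrong way. Comparing $\hyp(\idx)$ and $\hyp(\idx+\tau)$ in a single variational step sidesteps this difficulty, because the quadratic term appears with the largest matrix $\outprodmat_{\idx+\tau,\epsilon}$ on the favorable left-hand side, whereas monotonicity is invoked only on the dual-norm right-hand side, where it points in the helpful direction.
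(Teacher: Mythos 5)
Your proposal is correct and follows essentially the same route as the paper's own proof: comparing $\hyp(\idx)$ and $\hyp(\idx+\tau)$ directly via their summed first-order optimality conditions, isolating the favorable $\strongparam\norm{\Delta}_{\outprodmat_{\idx+\tau,\epsilon}}^2$ term, applying H\"older's inequality for the Mahalanobis dual-norm pair, and finishing with the three facts you list (the scalar-multiple structure of $g(\idx+s)$ giving the factor $\liploss + \strongparam\radius\statsamplebound \le 3\liploss$, the bound $|\<\statsample_{\idx+s},\hyp(\idx)\>| \le \radius\statsamplebound \le 2\liploss/\strongparam$, and the monotonicity $\outprodmat_{\idx+\tau,\epsilon}^{-1} \preceq \outprodmat_{\idx+s,\epsilon}^{-1}$). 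Your closing remark on why naive telescoping of one-step increments fails is not in the paper but correctly identifies the reason the single-comparison argument is the natural one here.
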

\begin{proof}
  Recall the definition~\eqref{eqn:def-outprod} of the outer product matrices
  $\outprodmat_\idx$ and the construction~\eqref{eqn:def-g-vec} of the
  subgradient vectors $g(\idx)$ from the proof of
  Proposition~\ref{proposition:linear-predictor-regret}. With the definition
  $z(\idx) = \sum_{i=1}^t g(i)$, also as in
  Proposition~\ref{proposition:linear-predictor-regret}, it the
  update~\eqref{eqn:rda-vaw-update} is equivalent to
  \begin{equation}
    \label{eqn:local-sc-da-update}
    \hyp(\idx) = \argmin_{\hyp \in \xdomain}
    \left\{\<z(\idx - 1), \hyp\> + \frac{\strongparam}{2}
    \<\outprodmat_{\idx,\epsilon} \hyp, \hyp\>\right\}.
  \end{equation}
  Now, let us understand the stability of the solutions to the above
  updates. Fixing $\tau \in \N$, the first order conditions for the optimality
  of $\hyp(\idx + 1)$ in the update~\eqref{eqn:local-sc-da-update} for
  $\hyp(\idx)$ and $\hyp(\idx + \tau)$ imply
  \begin{align*}
    \<z(\idx + \tau - 1) +
    \strongparam \outprodmat_{\idx + \tau, \epsilon} \hyp(\idx + \tau),
    \hyp - \hyp(\idx + \tau)\> \ge 0
    ~~~ \mbox{and} ~~~
    \<z(\idx - 1) + \strongparam \outprodmat_{\idx, \epsilon} \hyp(\idx),
    \hyp' - \hyp(\idx)\> \ge 0,
  \end{align*}
  for all $\hyp, \hyp' \in \xdomain$. Taking $\hyp = \hyp(\idx)$ and $\hyp' =
  \hyp(\idx + \tau)$, then adding the two inequalities, we see
  \begin{equation}
    \label{eqn:first-order-summed-optimality}
    \<z(\idx + \tau - 1) - z(\idx - 1) +
    \strongparam \outprodmat_{\idx + \tau, \epsilon} \hyp(\idx + \tau)
    - \strongparam \outprodmat_{\idx, \epsilon} \hyp(\idx),
    \hyp(\idx) - \hyp(\idx + \tau)\> \ge 0.
  \end{equation}

  The remainder of the proof consists of manipulating the
  inequality~\eqref{eqn:first-order-summed-optimality} to achieve the desired
  result. To begin, we rearrange
  Eq.~\eqref{eqn:first-order-summed-optimality} to state
  \begin{align*}
    \lefteqn{\<z(\idx + \tau - 1) - z(\idx - 1),
      \hyp(\idx) - \hyp(\idx + \tau)\>} \\
    & \ge \strongparam \<\outprodmat_{\idx + \tau, \epsilon}
    (\hyp(\idx) - \hyp(\idx + \tau)), \hyp(\idx) - \hyp(\idx + \tau)\>
    + \strongparam\<(\outprodmat_{\idx, \epsilon}
    - \outprodmat_{\idx + \tau, \epsilon}) \hyp(\idx),
    \hyp(\idx) - \hyp(\idx + \tau)\> \\
    & = \strongparam
    \norm{\hyp(\idx) - \hyp(\idx + \tau)}_{
      \outprodmat_{\idx + \tau, \epsilon}}^2
    + \strongparam\<(\outprodmat_{\idx, \epsilon}
    - \outprodmat_{\idx + \tau, \epsilon}) \hyp(\idx),
    \hyp(\idx) - \hyp(\idx + \tau)\>.
  \end{align*}
  Using H\"older's inequality applied to the dual norms
  $\norm{\cdot}_{\outprodmat}$ and $\norm{\cdot}_{\outprodmat^{-1}}$,
  we see that
  \begin{align*}
    \lefteqn{\strongparam \norm{\hyp(\idx) -
        \hyp(\idx + \tau)}_{\outprodmat_{\idx + \tau, \epsilon}}^2} \\
    & \le \norm{z(\idx + \tau - 1) - z(\idx - 1)}_{
      \outprodmat_{\idx + \tau, \epsilon}^{-1}}
    \norm{\hyp(\idx) - \hyp(\idx + \tau)}_{
      \outprodmat_{\idx + \tau, \epsilon}} \\
    & \qquad ~
    + \strongparam \norm{(\outprodmat_{\idx + \tau, \epsilon} -
      \outprodmat_{\idx, \epsilon}) \hyp(\idx)}_{
      \outprodmat_{\idx + \tau, \epsilon}^{-1}}
    \norm{\hyp(\idx) - \hyp(\idx + \tau)}_{\outprodmat_{\idx + \tau, \epsilon}}
  \end{align*}
  and dividing by $\strongparam \norm{\hyp(\idx) - \hyp(\idx + \tau)}$ gives
  \begin{equation}
    \label{eqn:intermediate-stability}
    \norm{\hyp(\idx) - \hyp(\idx + \tau)}_{\outprodmat_{\idx + \tau, \epsilon}}
    \le \frac{1}{\strongparam}
    \norm{z(\idx + \tau - 1) - z(\idx - 1)}_{
      \outprodmat_{\idx + \tau, \epsilon}}
    + \norm{(\outprodmat_{\idx + \tau, \epsilon} -
      \outprodmat_{\idx, \epsilon})
      \hyp(\idx)}_{\outprodmat_{\idx + \tau, \epsilon}^{-1}}.
  \end{equation}
  Now we note the fact that
  $\outprodmat_{\idx + \tau, \epsilon} - \outprodmat_{\idx, \epsilon}
  = \sum_{s = 1}^\tau \statsample_{\idx + s} \statsample_{\idx + s}^\top$,
  so
  \begin{equation*}
    \norm{(\outprodmat_{\idx + \tau, \epsilon} - \outprodmat_{\idx, \epsilon})
      \hyp(\idx)}_{A_{\idx + \tau, \epsilon}^{-1}}
    \le \max_{s \in [\tau]} |\<\statsample_{\idx + s}, \hyp(\idx)\>|
    \sum_{s = 1}^\tau \norm{\statsample_{\idx + s}}_{\outprodmat_{\idx + \tau,
        \epsilon}^{-1}}
    \le \radius \statsamplebound \sum_{s = 1}^\tau
    \norm{\statsample_{\idx + s}}_{\outprodmat_{\idx + \tau, \epsilon}^{-1}}.
  \end{equation*}
  In addition, we have $z(\idx + \tau - 1) - z(\idx - 1) = \sum_{s = 0}^{\tau
    - 1} g(\idx + s)$, and as in the proof of
  Proposition~\ref{proposition:linear-predictor-regret},
  \begin{equation*}
    \norm{z(\idx + \tau - 1) - z(\idx - 1)}_{
      \outprodmat_{\idx + \tau, \epsilon}^{-1}}
    \le (\liploss + \strongparam \radius \statsamplebound)
    \sum_{s = 0}^{\tau - 1} \norm{\statsample_{\idx + s}}_{
      \outprodmat_{\idx + \tau, \epsilon}^{-1}}
    \le 3 \liploss \sum_{s = 0}^{\tau - 1} \norm{\statsample_{\idx + s}}_{
      \outprodmat_{\idx + \tau, \epsilon}^{-1}},
  \end{equation*}
  where for the last inequality we used the
  bound~\eqref{eqn:local-convexity-bound}, which implies
  $\radius \statsamplebound \le \frac{2 \liploss}{\strongparam}$.
  Thus the inequality~\eqref{eqn:intermediate-stability}
  yields
  \begin{equation*}
    \norm{\hyp(\idx) - \hyp(\idx + \tau)}_{\outprodmat_{\idx + \tau, \epsilon}}
    \le \frac{3 \liploss}{\strongparam} \sum_{s = 0}^{\tau - 1}
    \norm{\statsample_{\idx + s}}_{\outprodmat_{\idx + \tau, \epsilon}^{-1}}
    + \frac{2 \liploss}{\strongparam}
    \sum_{s = 1}^\tau \norm{\statsample_{\idx + s}}_{\outprodmat_{\idx + \tau,
        \epsilon}^{-1}}.
  \end{equation*}
  Noting that $\outprodmat_{\idx + 1, \epsilon} \succeq \outprodmat_{\idx,
    \epsilon}$ completes the proof.
\end{proof}

\setlength{\bibsep}{6pt} %

\bibliographystyle{abbrv}
\bibliography{bib}

\end{document}